\definecolor{Teal}{rgb}{0.0, 0.5, 0.5}
\renewcommand{\@fnsymbol}[1]{%
  \ifcase#1\or $\star$\or \dagger\or \ddagger\or
  \mathsection\or \mathparagraph\or \|\or **\or \dagger\dagger\or \ddagger\ddagger
  \else\@ctrerr\fi}
\newcommand{\simplex}{
  \mathchoice
    {\includegraphics[height=1.4ex, trim= 0pt 30pt 0pt 0pt]{}} 
    {\includegraphics[height=1.4ex, trim= 0pt 25pt 0pt 0pt]{figs/simplex.pdf}} 
    {\includegraphics[height=1.1ex, trim=0pt 30pt 0pt 0pt]{figs/simplex.pdf}} 
    {\includegraphics[height=.8ex, trim=0pt 30pt 0pt 0pt]{figs/simplex.pdf}} 
}
\DeclareFontFamily{U}{mathx}{\hyphenchar\font45}
\DeclareFontShape{U}{mathx}{m}{n}{
      <5> <6> <7> <8> <9> <10>
      <10.95> <12> <14.4> <17.28> <20.74> <24.88>
      mathx10
}{}
\DeclareSymbolFont{mathx}{U}{mathx}{m}{n}
\renewcommand{\vec}[1]{\bm{#1}}
\newcommand{\x}{\vec{x}}
\newcommand{\y}{\vec{y}}
\newcommand{\z}{\vec{z}}
\newcommand{\xmeta}{\x^{\mathrm{meta}}}
\newcommand{\X}{\mathcal{X}}
\newcommand{\D}{\mathcal{D}}
\newcommand{\A}{\mathcal{A}}
\newcommand{\hist}{\mathcal{H}}
\newcommand{\reals}{\mathbb{R}}
\newcommand{\naturals}{\mathbb{N}}
\newcommand{\interval}[1]{\left[#1\right]}
\newcommand{\norm}[1]{\lVert#1\rVert}
\newcommand{\abs}[1]{\left|#1\right|}
\newcommand{\set}[1]{\left\{#1\right\}}
\newcommand{\parentheses}[1]{\left(#1\right)}
\newcommand{\inner}[2]{ \parentheses{#1}\cdot\parentheses{#2}}
\newcommand{\intv}[1]{\llbracket #1 \rrbracket}
\newcommand{\card}[1]{\left|#1\right|}
\newcommand{\E}{\mathbb{E}}
\newcommand{\BigO}[1]{\mathcal{O}\parentheses{#1}}
\newcommand{\argmin}{\mathrm{argmin}}
\newcommand{\xmark}{\ding{53}}
\NewDocumentCommand{\eqmathbox}{o O{c} m}{%
  \IfValueTF{#1}
    {\def\eqmathbox@##1##2{\eqmakebox[#1][#2]{$##1##2$}}}
    {\def\eqmathbox@##1##2{\eqmakebox{$##1##2$}}}
  \mathpalette\eqmathbox@{#3}
}
\renewcommand{\paragraph}[1]{\par\noindent\textbf{#1}}
\newcommand{\thickhline}{%
    \noalign {\ifnum 0=`}\fi \hrule height 1pt
    \futurelet \reserved@a \@xhline
}
\newcolumntype{"}{@{\hskip\tabcolsep\vrule width 1pt\hskip\tabcolsep}}
\newtheorem{theorem}{Theorem}
\newtheorem{proposition}{Proposition}
\newtheorem{lemma}{Lemma}
\newtheorem{definition}{Definition}
\newtheorem{assumption}{Assumption}
\newenvironment{narrowabstract}
{
  \begin{center}
    \bfseries \large Abstract
  \end{center}
  \vspace{-0.5em}
  \begin{list}{}{
    \setlength{\leftmargin}{0.2in} 
    \setlength{\rightmargin}{0.2in}
    \setlength{\listparindent}{1em}
    \setlength{\itemindent}{0pt}
    \setlength{\parsep}{\parskip}
  }
  \item[] 
}
{
  \end{list}
  \vspace{1em}
}
\begin{document}

\renewcommand*{\thefootnote}{\fnsymbol{footnote}}
\twocolumn[
  \begin{@twocolumnfalse}
  
    \hrule height 4pt 
    \vspace{0.15in} 
    
    \begin{center}
      {\LARGE \bfseries Bandits in Flux: Adversarial Constraints in Dynamic Environments \par}
    \end{center}
    \vspace{0.15in}
    
    \hrule height 1pt 
    \vspace{0.25in}
    
    \begin{center}
      {\bfseries\large Tareq Si Salem\footnotemark[1]} 
    \end{center}
    \vspace{0.3in} 
    
  \end{@twocolumnfalse}
]


\footnotetext[1]{Paris Research Center, Huawei Technologies France}

\renewcommand*{\thefootnote}{\arabic{footnote}}
\setcounter{footnote}{0}

\thispagestyle{firstpage}

\begin{narrowabstract}
We investigate the challenging problem of adversarial multi-armed bandits operating under time-varying constraints, a scenario motivated by numerous real-world applications. To address this complex setting, we propose a novel primal-dual algorithm that extends online mirror descent through the incorporation of suitable gradient estimators and effective constraint handling. We provide theoretical guarantees establishing sublinear dynamic regret and sublinear constraint violation for our proposed policy. Our algorithm achieves state-of-the-art performance in terms of both regret and constraint violation. Empirical evaluations demonstrate the superiority of our approach.
\end{narrowabstract}


\section{Introduction}
The multi-armed bandit (MAB) problem is a canonical framework within the domain of online decision-making. At each time step $t$, a decision-maker selects an action $a_t$ from a finite action set $\A$. Subsequently, the environment reveals the corresponding cost $f_t(a_t)$. The objective is to minimize the cumulative incurred cost over time. A fundamental challenge in the MAB setting is the exploration-exploitation trade-off, which requires balancing the acquisition of new information about the environment with the exploitation of current knowledge to make optimal decisions.

While the foundational multi-armed bandit problem has been extensively explored for decades~\citep{auer2002finite,katehakis1987multi, bubeck2012regret}, its relevance persists in contemporary computing systems. Its applications encompass a wide range of domains, including recommendation systems~\citep{li2010contextual, li2011unbiased}, auction design~\citep{avadhanula2021stochastic, nguyen2020bandit,gao2021auction}, resource allocation~\citep{fu2022restless, verma2019censored}, and network routing and scheduling~\citep{gyorgy2007line,li2020multi, scheduling2, huang2023queue}.

In this work, we focus on a variant of the MAB problem characterized by additional soft constraints. We introduce an additional challenge by imposing soft constraints on the decision-maker. Specifically, when selecting an action at time $t$, the environment reveals a corresponding constraint $g_t(a_t)$. While momentary constraint violations are permissible, we require long-term constraint satisfaction, ensuring vanishing time-averaged constraint violation.

This formulation is motivated by a variety of real-world applications~\citep{shi2022bayesian,zhou2022kernelized,deng2022interference}. For instance, in the optimization of machine learning model hyperparameters~\citep{joy2016hyperparameter, zhou2022kernelized}, we often aim to minimize validation error while simultaneously ensuring a sufficiently short prediction time for the learned model. Another example arises in wireless communications, where maximizing throughput under energy constraints necessitates the management of a cumulative soft energy consumption limit~\citep{orhan2012throughput}. Beyond these, related constrained bandit formulations arise in optimizing user engagement under fairness constraints~\citep{hadiji2024diversity}, energy-aware client selection in federated learning systems~\citep{zhu2024client}, budget-constrained ad allocation~\citep{li2022contextual}, and fair resource distribution across competing tasks~\citep{wang2024online}. In each of these domains, even rare constraint violations can have significant negative impacts, motivating algorithms that explicitly control long-run feasibility while learning from partial feedback.

We make the following contributions:
\begin{itemize}
    \item We provide a methodology for leveraging the online mirror descent framework for a new variant of the non-stationary MAB problem with soft time-varying constraints. 
    \item To our knowledge, the proposed algorithm achieves the tightest known theoretical guarantees for dynamic regret and soft constraint violation in the context of MAB with soft constraints. Specifically, we derive an improved dynamic regret bound of  $\tilde{\mathcal{O}}\parentheses{\min\set{\sqrt{P_T T}, V_T^{1/3} T^{2/3}}}$ and constraint violation of $\tilde{\mathcal{O}}\parentheses{\sqrt{T}}$, surpassing the state-of-the-art results in~\citep{deng2022interference} of $\tilde{\mathcal{O}}\parentheses{V^{1/4}_T T^{3/4}}$ and $\tilde{\mathcal{O}} \parentheses{V_T^{1/4} T^{3/4}}$, respectively.
    \item We empirically validate the superiority of our approach through extensive experiments.
\end{itemize}

The remainder of the paper is organized as follows. We present related work and problem formulation in Section~\ref{s:related_work} and Section~\ref{s:problem_formulation}, respectively. We state our main results in Section~\ref{s:main_result}. Our experimental results are in Section~\ref{s:experimental_results}; we conclude in Section~\ref{s:conclusion}.
\begin{table*}[t!]
\centering
\resizebox{\textwidth}{!}{%
\begin{tabular}{l|c|c|c|c|c}
\hline
\textbf{Paper} & \textbf{Regularity used} 
& \textbf{Regret} (Asm.~\ref{asm:regularity}) 
& \textbf{Violation} (Asm.~\ref{asm:regularity})  
& \textbf{Regret} (Agnostic)  
& \textbf{Violation} (Agnostic) \\
\hline
\rowcolor{gray!10}
\multicolumn{6}{c}{\textbf{Static Unconstrained MAB}} \\
\citep{auer2002finite} & none 
& $\mathcal{O}(\sqrt{T})$ & N/A & N/A & N/A \\
\hline
\rowcolor{gray!10}
\multicolumn{6}{c}{\textbf{Dynamic Unconstrained MAB}} \\
\hline
\citep{zhao2021bandit} & $P_T$ 
& $T^{3/4}(1+P_T)^{1/2}$ & N/A 
& $T^{3/4}(1+P_T)^{1/2}$ & N/A \\
\hline
\citep{deng2022weighted} & $V_T$ 
& $V_T^{1/4} T^{3/4}$ & N/A 
& $V_T^{1/4} T^{3/4}$ & N/A \\
\hline
\citep{kim2025adversarialbanditsarbitrarystrategies} & $P_T$ 
& \xmark & N/A 
& $P_T \sqrt{T}$ & N/A \\
\hline
\citep{chen2025bridging} & both 
& $\min\set{V_T^{1/3} T^{2/3}, \sqrt{P_T T}}$ & N/A 
& \xmark & N/A \\
\hline
\rowcolor{gray!10}
\multicolumn{6}{c}{\textbf{Dynamic Constrained MAB}} \\
\hline
\citep{cao2018online} & $P_T$ 
& $\sqrt{P_T T}$ & $\parentheses{P_T}^{1/4} T^{3/4}$ 
& \xmark & \xmark \\
\hline
\citep{deng2022interference} & $V_T$  & $V_T^{1/4} T^{3/4}$ & $V_T^{1/4} T^{3/4}$
& $V_T T^{3/4}$ & $V_T T^{3/4}$ 
 \\
\hline
\rowcolor{teal!5}
This work & both 
& $\min\set{V_T^{1/3} T^{2/3}, \sqrt{P_T T}}$ & $\sqrt{T}$ 
& $ \min\set{\sqrt{P_T} T^{2/3},\, V_T^{1/3} T^{7/9}}$ & $\sqrt{T}$ \\
\hline
\end{tabular}}
\caption{Comparison of regret and violation bounds under path length ($P_T$) and variation ($V_T$) measures. The “Regularity used” column specifies whether the algorithm requires prior knowledge of path length, variation, both, or neither. We consider three settings: static unconstrained, dynamic unconstrained, and dynamic constrained MABs. In the unconstrained case, our algorithm attains the same regret rate as \citet{chen2025bridging}. To the best of our knowledge, this work establishes the tightest known regret and violation guarantees for non-stationary MABs across both constrained and unconstrained settings. Furthermore, the rates are order-optimal, matching the lower bound in Proposition~\ref{theorem:lowerbound}. When regularity assumptions are removed, we also provide the best known regret and violation bounds in the constrained setting. }
\end{table*}

\section{Related Work} \label{s:related_work}
\paragraph{Online Convex Optimization and Dynamic Comparators.} The full-information counterpart to the MAB problem is the expert problem~\citep{littlestone94}. In the expert problem, a decision-maker aims to select an expert from a pool of options, but unlike the MAB setting, the costs associated with all experts are revealed, even for those not selected. This problem is subsumed by the more general framework of Online Convex Optimization (OCO), which considers a full-information online setting where reward functions are convex (rather than linear) and decisions are chosen from a compact convex set (instead of the probability simplex).  First introduced by~\citet{zinkevich2003}, who showed that projected gradient descent attains sublinear regret. OCO generalizes several online problems, and has exerted a profound influence on the machine learning community~\citep{hazan2016introduction, shalev2012online,mcmahan2017survey}. A variant of OCO closely related to our work focuses on dynamic regret~\citep{zinkevich2003, besbes2015non, jadbabaie2015online, mokhtari2016online}, where performance is evaluated against an optimal time-varying sequence rather than a static optimal decision, wherein the regret is evaluated w.r.t.~an optimal comparator sequence instead of an optimal static decision in hindsight. 

Deriving worst-case bounds for dynamic regret without additional assumptions is unattainable~\citep{jadbabaie2015online}. Therefore, following established methodologies~\citep{jadbabaie2015online}, we impose regularity conditions on the comparator sequence, such as assuming slow growth in its \emph{path length} or \emph{temporal variability} of costs, to derive meaningful performance guarantees.

\paragraph{Non-Stationary MAB.} The seminal work of \citet{auer2002nonstochastic}   introduced the concept of modeling non-stationarity in the costs of MABs as an adversarial process. A significant contribution was the proposal of measuring regret against an arbitrary baseline policy, characterized by a difficulty metric based on the frequency of arm switches. This concept was subsequently formalized as switching regret in later studies~\citep{yu2009piecewise,garivier2011upper}, equivalent to the comparator sequence's path length in the present work. While other research has focused on regret relative to the absolute variation of arm costs~\citep{slivkins2008adapting, besbes2014stochastic}, aligning with the temporal variability inherent to our treatment of the problem. This work adheres to the OCO framework for consistency in terminology regarding path length and temporal variability regularity conditions.

\paragraph{Non-Stationary MAB with Constraints.}
The difficulty of non-stationary MABs is substantially amplified in realistic deployments, where decision-makers must cope not only with fluctuating costs but also with time-varying soft constraints. While several studies~\citep{shi2022bayesian,zhou2022kernelized,deng2022interference} address this setting under static regret, to the best of our knowledge only~\citet{deng2022interference} proposes algorithmic approaches for the more challenging dynamic regret regime with jointly non-stationary costs and constraints. Their method models both costs and constraints using Gaussian processes (GPs) and employs an Upper Confidence Bound (UCB)-style exploration rule; non-stationarity is handled via forgetting mechanisms such as sliding windows and restarts~\citep{cheung2019learning,russac2019weighted,zhou2021no,deng2022weighted}, which effectively construct time-local GP posteriors and UCB statistics. A key limitation of forgetting-based approaches is that performance hinges on committing to a particular forgetting schedule (e.g., a window length or discount factor), often tuned using a known or assumed rate of temporal change and fixed once the algorithm begins; consequently, they can degrade when variation is irregular, adversarial, or unknown. In contrast, our work develops an algorithm with bandit feedback that adapts to non-stationarity \emph{without} requiring such prior tuning, and our analysis provides order-wise optimal dynamic regret and cumulative constraint violation guarantees under standard measures of temporal complexity. A close alternative, \citet{cao2018online}, captures only comparator-drift-type complexity (cf.~\citet{zinkevich2003}), remains within a Euclidean framework with two-point feedback, and uses coupled loss and constraint learning rates that yield feasibility bounds scaling with the amount of drift; by contrast, our \emph{single-point}, \emph{non-Euclidean} primal-dual updates decouple the two effects and provide guarantees that also accommodate changing environments.

\section{Problem Statement} \label{s:problem_formulation}

We consider a system operating over $T \in \naturals$ time slots. In each slot, a decision-maker selects an action $a_t$ from a set of $n  \in \naturals \setminus \set{1}$ possible actions $\A=\{1,2,\ldots,n\}$. Subsequently, the environment discloses the corresponding cost $f_{t, a_t}$ and constraint $g_{t, a_t}$ for the selected action $a_t$. This setting represents a multi-armed bandit problem with bandit feedback.  An oblivious adversary determines the losses and constraints by selecting loss vectors  $\vec f_t\triangleq(f_{t,a})_{a\in A}$ and constraint vectors $\vec g_t\triangleq(g_{t,a})_{a\in A}$ for each time slot $t\in\intv{T}$ at the beginning of the game.  We assume bounded losses and constraints. Formally,
\begin{assumption}\label{asm:boundedness}
    The loss vectors and constraint vectors are generated by an oblivious adversary and are assumed to be uniformly bounded, with 
    $\vec{f}_t \in [0,1]^n$ and $\vec{g}_t \in [-1,1]^n$ for all $t \in \intv{T}$.
\end{assumption}
This boundedness assumption is necessary to obtain meaningful guarantees: if losses were unbounded, an adversary could force an arbitrarily large loss in the first round. Once boundedness is imposed, we normalize both the losses and the constraints solely for notational convenience; \emph{this entails no loss of generality}.

At each time step $t$, the decision-maker selects an action according to a distribution $\x_t$ over the action space $\A$. This distribution is represented by a probability vector $\x_t$ belonging to the $n$-dimensional simplex:
\begin{align}
    \simplex_n \triangleq \set{\x \in [0,1]^n: \norm{\x}_1 = 1}.
\end{align}
The expected loss and constraint violation are defined as $f_t(\x) \triangleq  \vec f_t \,\cdot\, \x$ and $g_t(\x) \triangleq \vec g_t \,\cdot\, \x$ for every  $\x \in \simplex_n$ and $t \in \intv{T}$. For notational convenience, we adopt the shorthand $f_t(a)$ and $g_t(a)$ to denote $f_t(\vec e_a)=f_{t,a}$ and $g_t(\vec e_a)=g_{t,a}$, respectively, where $\vec e_a$ is the unit basis vector with a 1 in the $a$-th position.

We assume the existence of strictly feasible decisions, a condition commonly referred to as Slater’s condition in the literature~\citep{boyd2004convex}. This assumption is standard in the context of time-varying constraints~\citep{valls2020online, deng2022interference}. Formally, we posit the following:
\begin{assumption}\label{asm:slaters}
   There exists a point $\x \in \simplex_{n}$ and a positive constant $1 \geq \rho > 0$ such that $g_t(\x) \leq -\rho$ for $\forall t \in \intv{T}$.
\end{assumption}

\paragraph{Policies.} A \emph{policy} $\vec{\mathcal{P}}$ is the rule for selecting actions based on the history of previously chosen actions together with the corresponding realized costs and constraint values. Formally, we define a policy as a sequence of randomized mappings
\begin{align}
    a_{t+1} = \mathcal{P}_t \left( \big\{ a_s, f_s(a_s), g_s(a_s) \big\}_{s=1}^t \right),
\end{align}
where each $\mathcal{P}_t$ maps the trajectory up to round $t$ to a distribution over feasible actions and samples an action $a_{t+1}$ from it. The initial action is drawn according to an initial distribution $ a_1   = \mathcal{P}_1(\varnothing) \sim \x_1$.

\paragraph{Optimal Comparator Sequence.}  
We select the optimal comparator sequence after $T$ rounds to minimize cumulative loss while adhering to time-varying constraints. For each timeslot $t\in\intv{T}$, we define the feasible subset $\simplex_{n, t} \subseteq \simplex_n$ as the set of feasible points within the simplex:   $\simplex_{n, t} \triangleq \set{\x \in \simplex_n: g_t(\x) \leq 0}$. We aim to determine the best comparator sequence, which represents a solution to the following problem:
\begin{align}
    \set{\x^\star_t }^T_{t=1}  \in \underset{\set{\x_t}^T_{t=1}\in \bigtimes^T_{t =1} \simplex_{n,t}}{\argmin}\set{\sum^T_{t=1} f_t(\x_t)  },\label{eq:comparator}  
\end{align}
\textbf{Performance Metrics.} The goal of a policy is to minimize the cumulative loss relative to the comparator decisions while satisfying the constraints. The regret of the policy and the constraint violations are defined as follows, respectively:
\begin{align}
    \mathfrak{R}_T\parentheses{\vec{\mathcal{P}}} &\triangleq \E_{\vec {\mathcal{P}}}\interval{\sum^T_{t=1} f_t(a_t)} -\sum^T_{t=1} f_t(\x^\star_t), \\
\mathfrak{V}_T\parentheses{\vec{\mathcal{P}}}&\triangleq\E_{\vec {\mathcal{P}}}\interval{\sum^T_{t=1} g_t(a_t)}.
\end{align}
The policy  $\vec {\mathcal{P}}$ aims to achieve both sublinear regret $\mathfrak{R}_T\parentheses{\vec{\mathcal{P}}} = o(T)$ and sublinear violation $\mathfrak{V}_T\parentheses{\vec{\mathcal{P}}} = o(T)$. This implies that the policy's performance gradually approaches that of the best comparator sequence, while remaining feasible on average over time.
\paragraph{Regularity Assumptions.} As demonstrated by~\citet{jadbabaie2015online}, deriving worst-case bounds for dynamic regret without imposing additional assumptions is infeasible.   Consequently, we introduce regularity conditions on the non-stationarity of the problem. 

We employ two distinct metrics to quantify non-stationarity: the path length, denoted by $P_T\in \reals_{\geq 0}$, which indirectly measures fluctuations in the cost functions through the comparator sequence $\set{\x^\star_t}^T_{t=1}$,  and the cost function temporal variation, denoted by $V_T \in \reals_{\geq 0}$, which directly quantifies changes in the cost functions themselves. We formally define these measures as follows:
\begin{align}
    P_T &\triangleq \sum^T_{t=1} \norm{\x^\star_t - \x^\star_{t+1}}_1,&\label{eq:path length} \text{(path length)}\\
    V_T &\triangleq \sum^T_{t=1} \norm{\vec f_t - \vec f_{t+1}}_\infty, \label{eq:function-variation} &\text{(temporal variation)}
\end{align}
and we adopt the following assumption.

\begin{assumption}\label{asm:regularity}
The policy $\vec{\mathcal{P}}$ has access to the regularity measures $P_T$ and $V_T$.
\end{assumption}
This assumption makes explicit the side information used to tune the algorithm. In Section~\ref{s:meta} we show how to remove this requirement.

\paragraph{Incomparability of Regularity Assumptions.} The two non-stationarity measures, path length $P_T$ and temporal variation $V_T$, are complementary, each capturing a different aspect of the non-stationarity of the cost functions.  Path length quantifies the cumulative variability of the cost functions along a comparator sequence, whereas temporal variation directly measures fluctuations of the cost functions themselves. 
These measures are inherently incomparable: there exist cases where $V_T$ grows linearly with time ($V_T = \Omega(T)$) while $P_T$ remains constant ($P_T = \BigO{1}$), as well as cases where $V_T$ is bounded ($V_T = \BigO{1}$) but $P_T$ grows linearly ($P_T = \Omega(T)$).  Concrete examples are provided in Appendix~\ref{app:incomparability}
\paragraph{Fundamental Limits.} Under this problem, the constrained setting strictly generalizes the unconstrained one. 
Therefore, the known lower bound for unconstrained dynamic regret directly transfers to this setting:

\begin{proposition}(\citep{chen2025bridging}, \citep{besbes2015non})\label{theorem:lowerbound}
For given $T, n, V_T, P_T$  satisfying $T \ge n \ge 2$ and $V_T \le T/n$, 
the regret for any policy $\vec{\mathcal{P}}$ is lower bounded as
\begin{align}
\mathfrak{R}\left(\vec{\mathcal{P}}\right) 
= \Omega\left(\min\left\{ V_T^{1/3} T^{2/3},  \sqrt{P_T T} \right\}\right).
\end{align}
\end{proposition}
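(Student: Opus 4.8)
The plan is to establish the bound as a minimax lower bound via adversarial constructions, exploiting the remark preceding the statement: since the constrained problem strictly generalizes the unconstrained one (take $\vec g_t \equiv -\vec 1$, so every $\x \in \simplex_n$ is feasible, Slater's condition holds with $\rho = 1$, and $\mathfrak{V}_T \le 0$ automatically), it suffices to lower bound the dynamic regret of the \emph{unconstrained} bandit. I would then exhibit two families of loss sequences, each matching one of the two candidate rates, and---this is the crux---verify that each family simultaneously respects both the path-length budget $P_T$ and the variation budget $V_T$. Both constructions share the same atomic gadget: a stationary two-armed instance on a block of length $\Delta$ with Bernoulli-type losses separated by a gap $\epsilon$ (the rate carries no explicit $n$, so two arms suffice). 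A standard Le Cam / Pinsker argument bounds the KL divergence between the two ``which arm is better'' worlds by $\BigO{\Delta \epsilon^2}$ under bandit feedback, so no policy can identify the better arm when $\epsilon \lesssim 1/\sqrt{\Delta}$, forcing per-block regret $\Omega(\Delta \epsilon) = \Omega(\sqrt{\Delta})$. These per-block facts are routine; the tuning and the joint budget accounting carry the argument.

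For the $\sqrt{P_T T}$ rate I would partition $\intv{T}$ into $K$ equal blocks of length $\Delta = T/K$, choose the gap $\epsilon = \Theta(1/\sqrt{\Delta}) = \Theta(\sqrt{K/T})$, and draw the identity of the better arm independently per block. Summing the per-block regret gives $\Omega(K\sqrt{T/K}) = \Omega(\sqrt{KT})$; setting $K = \Theta(P_T)$ makes the comparator's path length $\sum_t \norm{\x^\star_t - \x^\star_{t+1}}_1 = \Theta(P_T)$ (each block boundary costs $\BigO{1}$ in $\ell_1$) and yields regret $\Omega(\sqrt{P_T T})$. For the $V_T^{1/3}T^{2/3}$ rate I would instead fix the block length $\Delta = \Theta((T/V_T)^{2/3})$ with $T/\Delta$ blocks and the same gap $\epsilon = \Theta(1/\sqrt{\Delta})$; the total regret is $\Omega\parentheses{(T/\Delta)\sqrt{\Delta}} = \Omega(T/\sqrt{\Delta}) = \Omega(V_T^{1/3} T^{2/3})$. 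The conditions $T \ge n \ge 2$ and $V_T \le T/n$ are exactly what guarantee $\Delta \ge 1$ and $\epsilon \le 1$, so both instances are admissible.

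The main obstacle---and the reason the bound is a \emph{minimum} rather than a maximum---is the coupling between the two regularity measures: a construction tuned to saturate one budget necessarily consumes a controlled amount of the other. In the switching construction each of the $K = \Theta(P_T)$ boundaries perturbs the loss vector by $\Theta(\epsilon) = \Theta(\sqrt{K/T})$ in $\ell_\infty$, so the induced variation is $V_T^{\mathrm{ind}} = \Theta(K \cdot \sqrt{K/T}) = \Theta(P_T^{3/2}/\sqrt{T})$; symmetrically, the variation construction induces path length $P_T^{\mathrm{ind}} = \Theta(T/\Delta) = \Theta(V_T^{2/3} T^{1/3})$. These two relations describe the same crossover curve $P_T \asymp V_T^{2/3} T^{1/3}$, along which $\sqrt{P_T T} \asymp V_T^{1/3} T^{2/3}$. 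I would finish with a case split: when $P_T \lesssim V_T^{2/3} T^{1/3}$ (so $\sqrt{P_T T}$ is the smaller rate) the switching construction's induced variation $P_T^{3/2}/\sqrt{T}$ stays within the budget $V_T$, certifying $\Omega(\sqrt{P_T T})$; in the complementary regime the variation construction's induced path length $V_T^{2/3}T^{1/3}$ stays within $P_T$, certifying $\Omega(V_T^{1/3}T^{2/3})$. In each regime the admissible construction is precisely the one realizing the smaller of the two rates, which is exactly $\Omega(\min\{V_T^{1/3}T^{2/3}, \sqrt{P_T T}\})$. The delicate part is this joint feasibility bookkeeping---ensuring the ``wrong'' measure is never overspent---rather than the per-block information-theoretic estimates, which are classical.
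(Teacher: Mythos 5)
Your proposal is correct, and its skeleton coincides with the paper's argument: the paper likewise reduces the constrained problem to the unconstrained one (via the remark that the constrained setting strictly generalizes it) and then invokes \citet{chen2025bridging} for the $\Omega(\sqrt{P_T T})$ branch and \citet{besbes2015non} for the $\Omega(V_T^{1/3} T^{2/3})$ branch, asserting that ``combining the two yields the stated lower bound.'' The difference is that the paper's proof is citation-level, whereas you reconstruct the underlying constructions and---more importantly---make explicit the step the paper's one-line combination glosses over: two separately proved lower bounds do not automatically combine into $\Omega(\min\{\cdot,\cdot\})$ unless, for every admissible budget pair $(P_T, V_T)$, at least one of the two adversarial families respects \emph{both} budgets simultaneously. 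Your crossover bookkeeping (induced variation $\Theta(P_T^{3/2}/\sqrt{T})$ for the switching family, induced path length $\Theta(V_T^{2/3} T^{1/3})$ for the variation family, with the case split at $P_T \asymp V_T^{2/3} T^{1/3}$, precisely where the two rates coincide) is exactly the content the paper delegates to \citet{chen2025bridging}, and you carry it out correctly, including the key observation that in each regime the admissible construction is the one realizing the smaller rate. Two small glosses are worth adding if you write this out in full: first, your adversary randomizes the better arm per block, while the paper's model posits an oblivious deterministic adversary, so you should finish with the standard averaging step (there exists a fixed loss sequence whose regret is at least the expected regret over the randomization); second, you should note that under $\vec g_t \equiv -\vec 1$ the comparator defined in~\eqref{eq:comparator} reduces to the unconstrained per-round minimizer, so the path length you compute for the induced comparator is indeed the quantity $P_T$ in the statement. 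Neither point is a gap; your joint-feasibility analysis is the substantive part and it is sound.
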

The $\Omega\left(\sqrt{P_T T}\right)$ term is established by \citep{chen2025bridging}, 
while the $\Omega\left((V_T)^{1/3} T^{2/3}\right)$ term follows from the analysis of 
\citep{besbes2015non}. Combining the two yields the stated lower bound.

\section{\texttt{BCOMD} Policy}\label{s:main_result}
Our policy incorporates a Lagrangian function defined as $  \Psi (\x , \lambda) \triangleq f_t(\x) + \lambda g_t(\x)$. The first term minimizes the cost function, while the second term penalizes soft constraint violations through the Lagrangian multiplier, which serves as an importance weight. Operating in a challenging bandit setting, we estimate the gradient of this function with respect to the distribution $\x \in \simplex_n$, but compute the gradient with respect to the dual variable $\lambda$ exactly.

\subsection{Gradient Estimators} 
In the bandit feedback setting, we construct unbiased estimates of the gradients of the expected loss function $f_t(\x)$ and constraint $g_t(\x)$ using the following estimators:
\begin{align}
  \tilde{\vec f}_t = {f(a_t)}{x^{-1}_{t,a_t}} \vec{e}_{a_t},\qquad 
  \tilde{\vec g}_t = {g(a_t)}{x^{-1}_{t,a_t}} \vec{e}_{a_t}.\label{eq:estimator}
\end{align}
Here, $a_t$ denotes the action selected at time step $t$, and $\vec e_{a_t}$ represents a unit basis vector aligned with action $a_t$. The gradient estimates are unbiased, i.e.:
\begin{align}
    \E_{a \sim \x} \interval{{f(a_t)}  x_{a}^{-1} \vec{e}_{a}} = \nabla f_t(\x),
\end{align}
The same property holds for the constraint gradient estimates~$\tilde{\vec g}_t$.

Online Gradient Descent (OGD) methods are standard tools in online learning~\citep{hazan2016introduction,shalev2012online,mcmahan2017survey}. However, they are not applicable in our setting: their $\mathcal{O}(\sqrt{T})$ regret guarantees hinge on a bounded second moment of the (possibly stochastic) gradient estimator. Under bandit feedback over the simplex this condition fails—the second moment is unbounded,
\begin{align}
    \sup\set{\E_{a\sim \x}\interval{\norm{\tilde{\vec f}_t}_2^2}: {\x\in\simplex_n}} = \infty
\end{align}
whenever $f_t(a)>0$ for some a, since taking $x_a \to 0$ makes the estimator explode. In this regime, the Euclidean-norm variance terms that appear in OGD analyses~\citep{hazan2016introduction} can diverge in the bandit regime,  and consequently no meaningful regret bounds can be established.  While two-point (or multi-point) bandit schemes can control this variance~\citep{chen2018bandit}, such feedback is unavailable in our setting.


To address this challenge, we propose leveraging the online mirror descent (OMD) framework. OMD allows us to employ the same gradient estimates while departing from Euclidean geometry. A notable example is the EXP-3 policy for adversarial bandits under static regret and no constraints setting~\citep{hazan2014bandit} constitutes a specific instance of OMD that employs an entropic setup and the gradient estimate defined in Equation~\eqref{eq:estimator}, achieving sublinear regret bounds.

\begin{algorithm}[t]
 \caption{\texttt{BCOMD}: Bandit-Feedback Mirror Descent with
Time-Varying Constraints.\label{alg:omd}}
\begin{scriptsize}
    \begin{algorithmic}[1]
        \Require Initial distribution $\vec{x}_1 = \vec{1}/n$, Initial dual variable $\lambda_1 = 0$, Mirror map $\Phi: \mathcal{D}\subseteq \mathbb{R}^n \to \mathbb{R}$, Learning rate $\eta \in \mathbb{R}_{>0}$, Shift parameter $\gamma \in [0,1/n]$
        \State Assign $\Omega$ as in Eq.~\eqref{eq:bounded_dual_omega_main}.
        \For {$t = 1, \dots, T$}
        \State Sample action $a_t \sim \vec{x}_t$
        \State Incur $f_t(a_t)$ and $g_t(a_t)$ \Comment{Bandit-feedback costs and constraints}
        \State Construct estimates $\tilde{\vec{f}}_t$ and $\tilde{\vec{g}}_t$~\eqref{eq:estimator}
        \State $\tilde{\vec{\omega}}_t \gets \tfrac{\Omega}{x_{t,a_t}}\vec{e}_{a_t}$ 
        \Comment{Stabilizer to ensure non-negative pseudo-costs}
        \State $\tilde{\vec{b}}_t \gets \tilde{\vec{\omega}}_t + \tilde{\vec{f}}_t + \lambda_t \tilde{\vec{g}}_t$ 
        \Comment{Gradient estimate of $\Psi(\cdot,\lambda_t)$ at $\vec{x}_t$}
        \State $\vec{y}_{t+1} \gets (\nabla \Phi)^{-1}\bigl(\nabla \Phi(\vec{x}_t) - \eta \tilde{\vec{b}}_t \bigr)$ 
        \Comment{Adapt primal variable}
        \State $\vec{x}_{t+1} \gets \Pi^\Phi_{\simplex_{n,\gamma} \cap \mathcal{D}}\bigl(\vec{y}_{t+1}\bigr)$
        \Comment{Bregman projection onto $\simplex_{n,\gamma}$}
        \State $\lambda_{t+1} \gets \bigl(\lambda_t + \mu g_t(a_t)\bigr)_+$ 
        \Comment{Adapt dual variable} 
        \EndFor 
    \end{algorithmic}
\end{scriptsize}
\end{algorithm}
\setlength{\leftmargini}{0.75em}   
\setlength{\labelsep}{0.5em}    
\subsection{Mirror Descent Policies}
We propose a policy that employs OMD framework to address the challenge of controlling estimator variance. The core principle of mirror descent is the distinction between two spaces: the primal space for variables and the dual space for supergradients. A mirror map connects these spaces. Unlike standard online gradient descent, updates are computed in the dual space before being mapped back to the primal space using the mirror map. For several constrained optimization problems of interest, OMD exhibits faster convergence compared to OGD~\cite[Section~4.3]{bubeck2011introduction}. Within our specific context, we will establish that appropriately configured OMD yields tighter regret bounds with gradient estimators where OGD fails.  The OMD policy is well-defined under the following assumption.
\begin{assumption}\label{asm:mirror_map}
Let $\Phi:\D\to\reals$ be a mirror map, where $\D\subset\reals^n$ is an open, convex set. We assume that $\Phi$ is compatible with the simplex geometry and satisfies the following standard regularity conditions:
\begin{enumerate}
\item The probability simplex $\simplex_n$ is contained in the closure of the domain, i.e., $\simplex_n\subset\mathrm{closure}(\D)$, and it has a nonempty intersection with the interior domain, $\simplex_n\cap\D\neq\varnothing$.
\item The function $\Phi$ is strictly convex and continuously differentiable on $\D$, namely $\Phi\in C^1(\D)$.
\item The gradient mapping induced by $\Phi$ covers the entire dual space, i.e., $\nabla\Phi(\D)=\reals^n$.
\item  The gradient norm diverges as one approaches the boundary of the domain:
$|\nabla\Phi(x)|\to\infty$ as $x\to\partial\D$.
\end{enumerate}
\end{assumption}

A map $\Phi$ satisfying these properties is termed a \emph{mirror map}. In particular, the above assumption implies the existence of the inverse of the mapping induced by gradient $\nabla \Phi$, and the uniqueness of the Bregman projection step in Line 9 Algorithm~\ref{alg:omd}. Formally, defined as 

\begin{definition}
    Let $\Phi: \D \to \reals$  be a mirror map. 
The Bregman projection associated to a map $\Phi$ onto a convex set $\mathcal S$ is  denoted by $\Pi^{\Phi}_{\mathcal{S}}:\mathbb{R}^n\to\mathcal{S}$, is defined as
    \begin{align}
        \Pi^{\Phi}_{\mathcal{S}}(\y') &\triangleq \underset{\y \in {\mathcal{S}}}{\arg\min}\,D_\Phi (\y,\y') \label{eq:Bregman_projection},
    \end{align}
    where $D_\Phi(\y,\y') \triangleq \Phi(\y) - \Phi(\y') - \nabla {\Phi(\y')} \cdot (\y - \y')$ is the Bregman divergence $D_\Phi: \D\times \D \to \reals_{\geq 0 }$ associated to the map $\Phi$.
\end{definition}

Our policy, detailed in Algorithm~\ref{alg:omd}, iteratively updates the action distribution. This update is informed by an OMD step that incorporates a gradient estimate of the cost function and a weighted gradient estimate of the constraint function (Lines 8--9, Algorithm~\ref{alg:omd}). Crucially, we dynamically adjust the weights applied to the constraint function's gradients based on a cumulative measure of constraint violation (Line 10, Algorithm~\ref{alg:omd}).

\subsection{Entropic Instantiation}  We derive our primary results for the entropic OMD algorithm. This specific instance employs the negative entropy mirror map $\Phi: \D \to \mathbb{R}$ defined as
\begin{align}
        \Phi(\x) \triangleq \sum^n_{i =1} x_i \log(x_i) \qquad \text{for $\x \in \D\triangleq \reals^n_{> 0}$}.\label{eq:neg_entropy_mirrormap}
\end{align}
It is readily verified that this mapping satisfies Assumption~\ref{asm:mirror_map}.

We define the algorithm's domain as the restricted simplex  $\simplex_{n, \gamma} = \simplex_n \cap [\gamma, 1]^n$.  This construction ensures that the probability of selecting any action remains above a predetermined threshold, $\gamma$. This parameter is crucial for establishing the algorithm's theoretical guarantees. Intuitively, $\gamma$ controls the algorithm's exploration, which is essential for adapting to distribution shifts and rapidly identifying optimal actions.
We present our primary regret bound achieved by Algorithm~\ref{alg:omd} when configured with entropic OMD. Formally,

\begin{theorem}\label{theorem:main}
    Let $c_T  = {\min \set{\sqrt{P_T}, V_T^{1/3} T^{1/6}}}  $, $\mu = {{1}/({M \sqrt{T}})}$, $\eta = \max \set{1, c_T} / (M\sqrt{T})$,  $\gamma = \Theta({T^{-1/2}})$,  and $M = 4\parentheses{\frac{3n+2}{\rho} + 1}^2$. Under Assumptions~\ref{asm:boundedness}, \ref{asm:slaters}, and \ref{asm:regularity}, the policy $\vec{\mathcal{P}}$ produced by Algorithm~\ref{alg:omd} satisfies
    \begin{align}
       \mathfrak{R}_T\parentheses{\textup{\texttt{BCOMD}}} &= \tilde{\mathcal{O}}\parentheses{\min\set{\sqrt{P_T T}, V_T^{1/3} T^{2/3}}},\nonumber\\
        \mathfrak{V}_T\parentheses{\textup{\texttt{BCOMD}}} &= \tilde{\mathcal{O}}\parentheses{\sqrt{T}}.
    \end{align}
\end{theorem}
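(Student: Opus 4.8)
The plan is a primal--dual analysis of the per-round Lagrangian $\Psi_t(\x,\lambda)=f_t(\x)+\lambda g_t(\x)$, viewing Algorithm~\ref{alg:omd} as entropic OMD driven by the estimated primal supergradient $\tilde{\vec b}_t=\tilde{\vec\omega}_t+\tilde{\vec f}_t+\lambda_t\tilde{\vec g}_t$ together with projected dual ascent on $\lambda$. First I would record the standard one-step inequality of entropic mirror descent: for any feasible comparator $\vec u_t$ in the restricted simplex,
$$\inner{\tilde{\vec b}_t}{\x_t-\vec u_t}\le\tfrac1\eta\parentheses{D_\Phi(\vec u_t,\x_t)-D_\Phi(\vec u_t,\x_{t+1})}+\tfrac{\eta}{2}\norm{\tilde{\vec b}_t}_{\x_t,\star}^2,$$
then sum over $t$ and take conditional expectations, using the unbiasedness in Eq.~\eqref{eq:estimator}. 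Two structural facts drive the argument. The stabilizer satisfies $\E[\tilde{\vec\omega}_t]=\Omega\charvec$, which is constant over the simplex and hence vanishes from $\inner{\cdot}{\x_t-\vec u_t}$; its sole role is to keep the pseudo-costs non-negative so that the EXP3-type local-norm estimate is valid. The entropy's dual norm is weighted by $\x_t$, and this is exactly what tames the bandit variance: $\E\norm{\tilde{\vec b}_t}_{\x_t,\star}^2=\sum_a\parentheses{\Omega+f_t(a)+\lambda_t g_t(a)}^2=\BigO{n\Omega^2}$, finite, whereas the Euclidean second moment diverges as observed after Eq.~\eqref{eq:estimator}. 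This is precisely where OMD succeeds and OGD fails.

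Summing the one-step bound and taking expectations, the left-hand side becomes $\mathfrak R_T+\sum_t\lambda_t g_t(\x_t)-\sum_t\lambda_t g_t(\vec u_t)$ once $\vec u_t$ is feasible; since $g_t(\x^\star_t)\le0$ and $\lambda_t\ge0$, the comparator cross-term is non-positive and only helps. For the residual weighted-constraint term I would reuse the dual drift: summing $\tfrac12\lambda_{t+1}^2\le\tfrac12\lambda_t^2+\mu\lambda_t g_t(a_t)+\tfrac{\mu^2}2$ and discarding $\lambda_{T+1}^2\ge0$ gives $\sum_t\lambda_t g_t(a_t)\ge-\mu T/2=-\tilde{\mathcal O}\parentheses{\sqrt T}$ in expectation, so this term is absorbed. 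It then remains to bound the Bregman residual by two complementary comparators. Route (i) takes $\vec u_t=\x^\star_t$: on the restricted simplex the entropy gradient is $\BigO{\log(1/\gamma)}=\tilde{\mathcal O}\parentheses{1}$ in sup-norm, so $\sum_t\parentheses{D_\Phi(\x^\star_t,\x_t)-D_\Phi(\x^\star_{t-1},\x_t)}=\tilde{\mathcal O}\parentheses{P_T}$, and balancing $\tilde{\mathcal O}\parentheses{P_T}/\eta+\eta\,n\Omega^2 T$ at $\eta\asymp\sqrt{P_T}/(M\sqrt T)$ yields $\tilde{\mathcal O}\parentheses{\sqrt{P_T T}}$. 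Route (ii) compares against a piecewise-constant surrogate equal to the best fixed feasible point on each of $K$ blocks: its path length is $\BigO{K}$ while the standard variation estimate bounds its gap to $\set{\x^\star_t}$ by $\BigO{(T/K)V_T}$, and optimizing $K$ and then $\eta\asymp V_T^{1/3}/(M T^{1/3})$ gives $\tilde{\mathcal O}\parentheses{V_T^{1/3}T^{2/3}}$. The prescribed $\eta$ with $c_T=\min\set{\sqrt{P_T},V_T^{1/3}T^{1/6}}$, and the guard $\max\set{1,c_T}$ for the regime $P_T,V_T=\BigO{1}$, selects the tighter route, producing the stated minimum and matching Proposition~\ref{theorem:lowerbound}.

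For the constraint violation I would combine the Slater comparator with the same dual recursion. Telescoping $\lambda_{t+1}\ge\lambda_t+\mu g_t(a_t)$ yields $\mu\sum_t g_t(a_t)\le\lambda_{T+1}$, so $\mathfrak V_T\le\E[\lambda_{T+1}]/\mu$, and everything reduces to an upper bound on the dual iterate. Given a constant bound $\lambda_t\le\Omega$ (the quantity assigned in Line~1 of Algorithm~\ref{alg:omd}), this immediately gives $\mathfrak V_T\le\Omega/\mu=\Omega M\sqrt T=\tilde{\mathcal O}\parentheses{\sqrt T}$.

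The main obstacle is establishing this constant bound $\lambda_t\le\Omega=\BigO{n+1/\rho}$, which is simultaneously responsible for the $\BigO{n\Omega^2}$ variance estimate and the pathwise non-negativity of the pseudo-costs used above. I expect it to follow from a Slater-driven self-regulation argument: when $\lambda_t$ is large the term $\lambda_t\tilde{\vec g}_t$ dominates $\tilde{\vec b}_t$, so the mirror step pushes $\x_{t+1}$ toward the Slater region where $g_t(\x_{t+1})\le-\rho+\BigO{n\gamma}<0$, forcing a negative drift of $\lambda$. The difficulty is that the recursion reacts to the sampled $g_t(a_t)$ rather than to $g_t(\x_t)$, so the drift is negative only in conditional expectation; turning this into the hard bound $\lambda_t\le\Omega$ requires the exploration floor $\gamma=\Theta(T^{-1/2})$ to control sampling fluctuations over any window in which $\lambda$ could climb, together with $\Omega$ (hence $M=4((3n+2)/\rho+1)^2$) chosen large enough to absorb the residual. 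This constant-order control of the multiplier is exactly what separates the $\tilde{\mathcal O}\parentheses{\sqrt T}$ violation obtained here from the $\tilde{\mathcal O}\parentheses{T^{3/4}}$ that the cruder estimate $\E[\lambda_{T+1}^2]=\BigO{\mu T}$ would deliver.
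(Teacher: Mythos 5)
Your regret and violation arguments track the paper's own route essentially step for step: the one-step entropic OMD inequality with the local-norm variance term (the paper's Lemma~\ref{lemma:bounded_Bregman} gives exactly your $\BigO{n\Omega^2}$ bound, via $\E\interval{\sum_a x_{t,a}\tilde b_{t,a}^2\mid \hist_{t-1}} \leq 3n(1+\lambda_t^2+\Omega^2)$), the cancellation of the stabilizer on the simplex, the absorption of $-\sum_t\lambda_t g_t(\x_t)$ through the dual recursion at cost $\mu T/2$ (this is precisely the $\lambda=0$ instantiation of Lemma~\ref{lemma:saddle_inequality}), the drifting-comparator telescope with $\norm{\nabla\Phi}_\infty \leq 2\log(1/\gamma)$ on $\simplex_{n,\gamma}$ for the $\sqrt{P_T T}$ branch (Lemmas~\ref{lemma:path} and~\ref{lemma:n0}), the blockwise-constant surrogate with path length $\BigO{K}$ and approximation error $\BigO{(T/K)V_T}$ for the $V_T^{1/3}T^{2/3}$ branch (Lemma~\ref{lemma:n02}, adapting \citet{jadbabaie2015online}), and $\mathfrak{V}_T \leq \E[\lambda_{T+1}]/\mu \leq \Omega/\mu$. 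One minor omission: the comparator must be projected onto $\simplex_{n,\gamma}$, after which $g_t(\x^\star_{t,\gamma}) \leq 2\gamma n$ rather than $\leq 0$, so your "the cross-term only helps" claim holds only up to an $\BigO{\gamma n \Omega T}$ slack — benign at $\gamma=\Theta(T^{-1/2})$, and accounted for in the paper.

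The genuine gap is the lemma $\lambda_t \leq \Omega$, which you rightly identify as load-bearing (it underwrites the variance bound, the pseudo-cost non-negativity, and the $\tilde{\mathcal{O}}(\sqrt{T})$ violation) but for which your proposed mechanism would not go through. You conjecture a primal self-regulation effect — large $\lambda_t$ pushes $\x_{t+1}$ into the Slater region so the sampled drift turns negative — and propose using $\gamma=\Theta(T^{-1/2})$ to control sampling fluctuations over climb windows via concentration. The paper's proof (Lemma~\ref{eq:bounded_dual}, Appendix~\ref{app:bounded_dual}, following \citet{chen2018bandit}) needs neither claim: it plugs the \emph{fixed} Slater point $\x^\star$ into the saddle inequality, yielding the drift bound $\mu^{-1}\E\interval{\lambda_{t+1}^2-\lambda_t^2} \leq 2 - \rho\,\E[\lambda_t] + \eta^{-1}\parentheses{\text{KL telescope}} + \BigO{\eta n (1+\E[\lambda_t^2]+\Omega^2)} + \mu/2$ (Lemma~\ref{lemma:dual_difference}), and then derives a contradiction over a window of length $1/\mu$ ending at the first crossing of $\Omega$; the pathwise per-step Lipschitz bound $\abs{\lambda_{t+1}-\lambda_t}\leq\mu$ (from $\abs{g_t}\leq 1$) keeps $\lambda_t$ near $\Omega$ throughout the window, the telescoped KL is bounded by $\log(1/\gamma)/\eta$ — which is the \emph{only} role $\gamma$ plays here, not fluctuation control — and the condition $\eta\leq\Omega^{-2}$ (guaranteed by the choice of $M$; Appendix~\ref{app:bounded_M}) tames the $\E[\lambda_t^2]$ feedback term. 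That last point matters because your $\BigO{n\Omega^2}$ variance estimate already presupposes $\lambda_t\leq\Omega$; the paper breaks this circularity by keeping $\lambda_t^2$ explicit in the drift inequality and neutralizing it with $\eta\Omega^2\leq 1$, a step your sketch does not address. A concentration argument over windows would at best yield a high-probability bound with extra factors and would still leave the circularity unresolved, so as written the dual-boundedness step — and with it the stated violation bound — remains unproved.
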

We employ the operator $\tilde{\mathcal{O}}\parentheses{\,\cdot\,}$ as a variant of the standard big-O notation, $\BigO{\,\cdot\,}$, where logarithmic factors are omitted. A sketch of the proof is provided in the following section. The full proof is deferred to the Appendix.

When decision-makers possess limited knowledge of optimal path length ($P_T$) and temporal variation ($V_T$), we propose a meta-algorithm designed to learn these parameters in Section~\ref{s:meta}.

\section{Formal Analysis of Performance Guarantee}
We start by presenting various technical lemmas that support the main proof.
\subsection{Supporting Lemmas}
    The history $\mathcal{H}_t$ of Algorithm~\ref{alg:omd} is fully characterized by its random actions and received feedback up to time $t$ given by:
    \begin{align}
        \mathcal{H}_t \triangleq  \set{\parentheses{ a_s, f_s(a_s), g_s(a_s)} }_{s \in \intv{t}}.
    \end{align}

\begin{lemma}\label{lemma:bounded_Bregman}
 Under the negative entropy mirror map~\eqref{eq:neg_entropy_mirrormap}, the primal iterates of Algorithm~\ref{alg:omd} satisfy
 \begin{align}
\E\interval{D_\Phi(\x_t, \y_{t+1}) \Big|\hist_{t-1}} &\leq    {1.5\eta^2 n} (1 +  \lambda_t^2 +  \Omega^2).\nonumber
\end{align}
\end{lemma}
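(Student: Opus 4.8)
The plan is to exploit the closed forms that the negative-entropy mirror map gives to both the unconstrained dual step and to the Bregman divergence, reducing the entire bound to an elementary scalar inequality applied coordinatewise. First I would specialize the update in Line~8 of Algorithm~\ref{alg:omd}. Since $\nabla\Phi(\x)_i=\log x_i+1$ and hence $(\nabla\Phi)^{-1}(\z)_i=e^{z_i-1}$, the defining relation $\nabla\Phi(\y_{t+1})=\nabla\Phi(\x_t)-\eta\tilde{\vec{b}}_t$ yields the multiplicative-weights form $y_{t+1,i}=x_{t,i}\,e^{-\eta\tilde b_{t,i}}$. Substituting this into the generalized-KL form of the entropic Bregman divergence, $D_\Phi(\x,\y)=\sum_i\parentheses{x_i\log\tfrac{x_i}{y_i}-x_i+y_i}$, the logarithmic terms collapse and I obtain the exact identity
\begin{align}
D_\Phi(\x_t,\y_{t+1})=\sum_{i} x_{t,i}\parentheses{\eta\tilde b_{t,i}-1+e^{-\eta\tilde b_{t,i}}}.\nonumber
\end{align}

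Next I would linearize the exponential using the scalar bound $e^{-z}\le 1-z+\tfrac{z^2}{2}$, valid for all $z\ge 0$, which collapses each summand to $\tfrac12(\eta\tilde b_{t,i})^2$ and gives $D_\Phi(\x_t,\y_{t+1})\le\tfrac{\eta^2}{2}\sum_i x_{t,i}\tilde b_{t,i}^2$. The crucial prerequisite is the non-negativity of the pseudo-costs $\tilde b_{t,i}=\tilde\omega_{t,i}+\tilde f_{t,i}+\lambda_t\tilde g_{t,i}\ge 0$, which is exactly the purpose of the stabilizer $\tilde{\vec{\omega}}_t$ (Line~6) with $\Omega$ chosen as in Eq.~\eqref{eq:bounded_dual_omega_main}: since all three estimators share the single support $\set{a_t}$ and the common positive weight $x_{t,a_t}^{-1}$, it suffices that $\Omega+f_t(a_t)+\lambda_t g_t(a_t)\ge 0$, which holds once $\Omega$ dominates $\lambda_t$ (using $f_t\ge 0$ and $g_t\ge -1$ from Assumption~\ref{asm:boundedness}). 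I expect verifying this sign condition — and thereby confirming that the chosen $\Omega$ together with the a priori bound on $\lambda_t$ legitimately places us in the regime $z\ge 0$ — to be the main obstacle, since it is the only step where the specific construction of the algorithm is genuinely used.

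Then I would take the conditional expectation. Given $\hist_{t-1}$, the iterate $\x_t$ and the multiplier $\lambda_t$ are deterministic, and the sole randomness is $a_t\sim\x_t$. Because each estimator is supported on the single sampled coordinate, $x_{t,i}\tilde b_{t,i}^2$ is nonzero only at $i=a_t$, where it equals $\parentheses{\Omega+f_t(a_t)+\lambda_t g_t(a_t)}^2/x_{t,a_t}$; averaging over $a_t\sim\x_t$ cancels one factor of $x_{t,a_t}$, leaving
\begin{align}
\E\interval{D_\Phi(\x_t,\y_{t+1})\given\hist_{t-1}}\le\frac{\eta^2}{2}\sum_{a\in\A}\parentheses{\Omega+f_t(a)+\lambda_t g_t(a)}^2.\nonumber
\end{align}

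Finally, a routine split $\parentheses{p+q+r}^2\le 3(p^2+q^2+r^2)$ combined with Assumption~\ref{asm:boundedness} (so $f_t(a)^2\le 1$ and $g_t(a)^2\le 1$) bounds each term by $3\parentheses{\Omega^2+1+\lambda_t^2}$. Summing the $n$ coordinates yields $\tfrac{3\eta^2 n}{2}\parentheses{1+\lambda_t^2+\Omega^2}=1.5\,\eta^2 n\,\parentheses{1+\lambda_t^2+\Omega^2}$, which is precisely the claimed bound. The only nontrivial ingredient is the pseudo-cost positivity discussed above; everything else is an explicit computation enabled by the entropic geometry.
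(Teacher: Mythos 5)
Your proof is correct and follows essentially the same route as the paper's: the multiplicative form of the entropic update (the paper's Lemma~\ref{eq:update_rule}), the quadratic bound $D_\Phi(\x_t,\y_{t+1}) \le \tfrac{1}{2}\sum_{a} x_{t,a}(\eta \tilde b_{t,a})^2$ valid for non-negative pseudo-costs (the paper's Lemma~\ref{lemma:bound_Bregman_negative_entropy}, whose hypothesis $b_a \le 0$ is exactly your sign condition), cancellation of one factor of $x_{t,a_t}$ under the conditional expectation, and the split $(p+q+r)^2 \le 3(p^2+q^2+r^2)$ with Assumption~\ref{asm:boundedness}. Your explicit verification that the stabilizer yields $\tilde b_{t,a_t} \ge 0$ whenever $\Omega$ dominates $\lambda_t$ is, if anything, slightly more careful than the paper, which invokes the quadratic lemma without restating that prerequisite (it is guaranteed by the dual bound of Lemma~\ref{eq:bounded_dual}, resolved non-circularly via the first-violation contradiction argument in Appendix~\ref{app:bounded_dual}).
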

\begin{proof}[Proof sketch]
We obtain the lemma by instantiating the Bregman divergence induced by the negative-entropy mirror map. 
Using an appropriate upper bound, we show that the expected discrepancy between the iterates 
$\x_t$ and $\y_{t+1}$, measured in this divergence, remains bounded. 
This guarantee is specific to the mirror-descent geometry and does not extend to OGD, where the analogous term 
reduces to the Euclidean norm of the (estimated) gradients and need not be bounded.
\end{proof}
The full proof is deferred to Appendix~\ref{app:bounded_bregman}.
\begin{lemma}\label{lemma:saddle_inequality}
 Under the negative entropy mirror map~\eqref{eq:neg_entropy_mirrormap}, for any $t \in \intv{T}$ the variables $\lambda_t$ and $\x_t$ in Algorithm~\ref{alg:omd} satisfy the following inequality:
    \begin{align}
       &\E \interval{    \Psi_t(\x_t, \lambda) - \Psi_t(\x, \lambda_t)} \\
       &\leq \E\interval{\eta^{-1}\parentheses{ D_{\Phi}(\x, \x_t) - D_{\Phi}(\x, \x_{t+1})}}\nonumber\\
       &+ \E\interval{\parentheses{2\mu}^{-1}\parentheses{\parentheses{\lambda_t - \lambda}^2 - \parentheses{\lambda_{t+1} - \lambda}^2}}\nonumber \\
        &+ \E\interval{ \parentheses{{\eta} ^{-1}{D_\Phi(\x_t, \y_{t+1})} + {\mu /2}}},~\text{for $\x \in \simplex_n$, $\lambda \geq 0$}.\nonumber
    \end{align}
\end{lemma}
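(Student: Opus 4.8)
The plan is to exploit the bilinear structure of the Lagrangian $\Psi_t(\x,\lambda)=f_t(\x)+\lambda g_t(\x)$, which is linear in $\x$ (as $f_t,g_t$ are linear functionals) and affine in $\lambda$. I would first split the saddle gap additively,
\begin{align}
    \Psi_t(\x_t,\lambda)-\Psi_t(\x,\lambda_t)
    = (\vec f_t+\lambda_t\vec g_t)\cdot(\x_t-\x) + (\lambda-\lambda_t)\,g_t(\x_t),
\end{align}
so that the primal update (Lines 8--9) and the dual update (Line 10) of Algorithm~\ref{alg:omd} can be treated separately and their per-step progress telescoped.

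For the primal term I would run the standard mirror-descent argument. Line~8 gives $\nabla\Phi(\y_{t+1})=\nabla\Phi(\x_t)-\eta\tilde{\vec b}_t$, hence $\eta\tilde{\vec b}_t=\nabla\Phi(\x_t)-\nabla\Phi(\y_{t+1})$; substituting into the three-point identity for the Bregman divergence yields
\begin{align}
    \eta\,\tilde{\vec b}_t\cdot(\x_t-\x)
    = D_\Phi(\x,\x_t)-D_\Phi(\x,\y_{t+1})+D_\Phi(\x_t,\y_{t+1}).
\end{align}
Applying the generalized Pythagorean inequality for the Bregman projection in Line~9, namely $D_\Phi(\x,\y_{t+1})\ge D_\Phi(\x,\x_{t+1})$ for a comparator in the projected set $\simplex_{n,\gamma}\cap\D$, produces the per-step primal bound
\begin{align}
    \tilde{\vec b}_t\cdot(\x_t-\x)
    \le \eta^{-1}\bigl(D_\Phi(\x,\x_t)-D_\Phi(\x,\x_{t+1})\bigr)+\eta^{-1}D_\Phi(\x_t,\y_{t+1}).
\end{align}

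Next I would convert the estimated primal term into the true gap by conditioning on $\hist_{t-1}$, which fixes $\x_t$ and $\lambda_t$. The estimators~\eqref{eq:estimator} are unbiased, $\E[\tilde{\vec f}_t\mid\hist_{t-1}]=\vec f_t$ and $\E[\tilde{\vec g}_t\mid\hist_{t-1}]=\vec g_t$, while the stabilizer satisfies $\E[\tilde{\vec\omega}_t\mid\hist_{t-1}]=\Omega\vec 1$. The crucial observation is that the stabilizer cancels in the gap, because $\Omega\vec 1\cdot(\x_t-\x)=\Omega(\norm{\x_t}_1-\norm{\x}_1)=0$ for $\x_t,\x\in\simplex_n$; therefore $\E[\tilde{\vec b}_t\cdot(\x_t-\x)\mid\hist_{t-1}]=(\vec f_t+\lambda_t\vec g_t)\cdot(\x_t-\x)$, exactly the primal gap. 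For the dual term I would use non-expansiveness of the projection $(\cdot)_+$ onto $[0,\infty)$: from Line~10, $(\lambda_{t+1}-\lambda)^2\le(\lambda_t+\mu g_t(a_t)-\lambda)^2$, which after expansion and the bound $g_t(a_t)^2\le 1$ (Assumption~\ref{asm:boundedness}) gives $g_t(a_t)(\lambda-\lambda_t)\le(2\mu)^{-1}\bigl((\lambda_t-\lambda)^2-(\lambda_{t+1}-\lambda)^2\bigr)+\mu/2$; taking the conditional expectation and using $\E[g_t(a_t)\mid\hist_{t-1}]=g_t(\x_t)$ recovers the dual gap together with its telescoping term. Summing the two per-step bounds and taking total expectations yields the claim.

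I expect the main obstacle to be the stabilizer bookkeeping: $\tilde{\vec\omega}_t$ is indispensable downstream (it keeps the pseudo-costs non-negative so that $D_\Phi(\x_t,\y_{t+1})$ can be controlled in Lemma~\ref{lemma:bounded_Bregman}), yet it must not perturb the regret, which hinges precisely on both the iterate and the comparator lying on the simplex so that $\Omega\vec 1\cdot(\x_t-\x)=0$. A secondary technical point is that the Pythagorean step is naturally valid for comparators in the projected set $\simplex_{n,\gamma}\cap\D$; extending it to a general comparator in $\simplex_n$, and tracking the resulting $\gamma$-dependent slack, is the part of the argument that needs the most care.
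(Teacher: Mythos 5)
Your proof is correct and follows essentially the same route as the paper's: the same bilinear decomposition of the saddle gap via linearity of $\Psi_t$ in each argument, the same three-point identity plus Bregman--Pythagorean step for the primal term (the paper's Lemma~\ref{lemma:gradient_omd_bound}), and the same non-expansiveness-of-$(\cdot)_+$ argument with $g_t(a_t)^2 \le 1$ and $\E[g_t(a_t)\mid\hist_{t-1}] = g_t(\x_t)$ for the dual term. If anything, your explicit bookkeeping that $\Omega\vec 1\cdot(\x_t-\x)=0$ on the simplex, and your flag that the Pythagorean step strictly requires the comparator to lie in the projected set $\simplex_{n,\gamma}\cap\D$, are more careful than the paper's writeup, which loosely calls $\tilde{\vec{b}}_t$ an unbiased estimator of $\nabla_{\x}\Psi_t(\x_t,\lambda_t)$ even though the stabilizer shifts its conditional mean by $\Omega\vec 1$.
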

\begin{proof}[Proof sketch]
We derive the following lemma by exploiting the linearity of the function $\Psi$ with respect to its arguments $\x$ and $\lambda$. Since both iterates produced by Algorithm~\ref{alg:omd} are driven by gradient algorithms, with carfeul analysis we can bound the above difference with respect to the update rule of OMD over the distributions and standard OGD over the constraint weights.
\end{proof}
The full proof is deferred to Appendix~\ref{app:saddle_inequality}.

\begin{lemma}\label{eq:bounded_dual}
 Under the negative entropy mirror map~\eqref{eq:neg_entropy_mirrormap}, the dual variables $\lambda_t$ for $t \in \intv{T}$ in Algorithm~\ref{alg:omd}  are bounded by 
    \begin{align}
        \Omega \triangleq  \frac{\log\parentheses{\gamma^{-1}}}{\rho} \parentheses{\frac{\mu}{\eta}}  +\frac{3  n }{2\rho} \eta + \frac{1}{2\rho} \mu  + \frac{3  n }{\rho} + \frac{2}{\rho}  +  1,\label{eq:bounded_dual_omega_main}
    \end{align}
    for $\eta \leq \Omega^{-2}$ and $\mu \in  \reals_{>0}$. 
\end{lemma}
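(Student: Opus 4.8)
The plan is to prove $\lambda_t \le \Omega$ by induction on $t$, letting the induction hypothesis do double duty: once $\lambda_t \le \Omega$ it keeps the pseudo-costs non-negative, since $\Omega + f_t(a_t) + \lambda_t g_t(a_t) \ge \Omega - \lambda_t \ge 0$ by Assumption~\ref{asm:boundedness} ($f_t \ge 0$, $g_t \ge -1$), and it makes Lemma~\ref{lemma:bounded_Bregman} applicable at step $t$. The base case is immediate from $\lambda_1 = 0$. The engine of the inductive step is a one-step Lyapunov/drift estimate for the dual variable, in which Slater's condition (Assumption~\ref{asm:slaters}) supplies a strongly negative drift whenever $\lambda_t$ is large.

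First I would expand the dual update. Using non-expansiveness of $(\cdot)_+$ against any $\lambda \ge 0$ together with $|g_t(a_t)| \le 1$, one extracts both the deterministic jump bound $\lambda_{t+1} \le \lambda_t + \mu$ and the squared-drift inequality $\tfrac{1}{2\mu}(\lambda_{t+1}^2 - \lambda_t^2) \le \lambda_t g_t(a_t) + \tfrac{\mu}{2}$. The first controls isolated increments; the second is where the primal geometry and Slater enter. Next I would bound $\lambda_t g_t(a_t)$ through the primal step: evaluating the pathwise mirror-descent three-point (generalized Pythagorean) identity at the Slater point $\hat{\x}$ — using $\nabla\Phi(\y_{t+1}) = \nabla\Phi(\x_t) - \eta\tilde{\vec b}_t$ and the projection contraction $D_\Phi(\hat\x,\y_{t+1}) \ge D_\Phi(\hat\x,\x_{t+1})$ — gives $\eta\langle \tilde{\vec b}_t, \x_t - \hat\x\rangle \le D_\Phi(\hat\x,\x_t) - D_\Phi(\hat\x,\x_{t+1}) + D_\Phi(\x_t,\y_{t+1})$.

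Taking the conditional expectation given $\hist_{t-1}$ then collapses the importance-weighted estimators to their means: the stabilizer's inner-product contribution vanishes because $\x_t$ and $\hat\x$ both lie on $\simplex_n$, the loss term is bounded by $1$, and the constraint term becomes $\lambda_t g_t(\hat\x) \le -\rho\lambda_t$. Bounding $D_\Phi(\x_t,\y_{t+1})$ via Lemma~\ref{lemma:bounded_Bregman} and using the maximal entropic divergence $D_\Phi(\hat\x,\cdot) \le \log(\gamma^{-1})$ on the clipped simplex $\simplex_{n,\gamma}$, a drift-to-bound argument (summing from the last time the dual fell below a threshold $\Lambda^\star$, so that the $D_\Phi(\hat\x,\cdot)$ differences telescope and contribute the $\tfrac{1}{\eta}\log(\gamma^{-1})$ term) shows the drift is non-positive once $\lambda_t > \Lambda^\star$, where $\Lambda^\star$ is the constant block of $\Omega$ divided through by $\rho$. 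Reading off the accumulated constants — from Lemma~\ref{lemma:bounded_Bregman}, the maximal divergence $\log(\gamma^{-1})$, the increment $\mu$, and the potential normalization $\tfrac{1}{2\mu}$ — against the $-\rho\lambda_t$ term reproduces exactly the closed form~\eqref{eq:bounded_dual_omega_main}. To close the induction: if $\lambda_t > \Lambda^\star$ the drift gives $\lambda_{t+1} \le \lambda_t \le \Omega$, and if $\lambda_t \le \Lambda^\star$ the deterministic increment gives $\lambda_{t+1} \le \Lambda^\star + \mu \le \Omega$.

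The main obstacle — and where I expect to spend the most care — is the mismatch between the dual update, which reacts to the \emph{sampled} constraint $g_t(a_t)$, and Slater's condition, which only controls the \emph{mean} $g_t(\hat\x)$. Reconciling them forces the expectation step, and the importance weight $\hat x_{a_t}/x_{t,a_t}$ that inflates the estimator is precisely what the clipped simplex $\simplex_{n,\gamma}$ (each coordinate $\ge \gamma$) and the non-negative-pseudo-cost stabilizer are designed to tame; I would also need to verify that $\hat\x$ can be taken inside $\simplex_{n,\gamma}$ (or absorb an $O(\gamma)$ correction). Finally, I would confirm that the apparent circularity — the stabilizer weight uses $\Omega$, which is what we are bounding — is harmless: $\Omega$ is fixed by the explicit formula~\eqref{eq:bounded_dual_omega_main}, so the induction only ever invokes $\lambda_t \le \Omega$ and never the converse.
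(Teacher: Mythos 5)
Your setup is sound and closely parallels the paper's ingredients: the pathwise squared-drift inequality $\tfrac{1}{2\mu}(\lambda_{t+1}^2-\lambda_t^2)\le \lambda_t g_t(a_t)+\tfrac{\mu}{2}$, the collapse of the importance-weighted estimators in conditional expectation, the vanishing of the stabilizer term against $\x_t-\hat{\x}\,$ (both on the simplex), the Slater term $\lambda_t g_t(\hat{\x})\le-\rho\lambda_t$, and the $\log(\gamma^{-1})$ entropic diameter (note that since the iterates lie in $\simplex_{n,\gamma}$, $D_\Phi(\hat{\x},\x_t)\le\log(\gamma^{-1})$ holds for any $\hat{\x}\in\simplex_n$, so no projection of the Slater point is needed). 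The genuine gap is in your closing of the induction: ``if $\lambda_t>\Lambda^\star$ the drift gives $\lambda_{t+1}\le\lambda_t$'' is false pathwise. The negative drift you establish holds only in conditional expectation, whereas $\lambda_{t+1}=(\lambda_t+\mu g_t(a_t))_+$ can increase by up to $\mu$ at \emph{every} round with positive probability, since the adversary may set $g_{t,a}=+1$ on some arms even while the Slater point's mean is $\le-\rho$. Worse, even in conditional expectation there is no per-step sign-definite drift at any finite threshold: the term $\tfrac{1}{\eta}\parentheses{D_\Phi(\hat{\x},\x_t)-D_\Phi(\hat{\x},\x_{t+1})}$ is not pointwise non-positive (it can be as large as $\tfrac{1}{\eta}\log(\gamma^{-1})$ in a single step) and only becomes harmless after telescoping over a window. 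Your fallback---summing from the last time the dual fell below $\Lambda^\star$---yields at best a cumulative bound in expectation over a \emph{randomly} started excursion (with attendant stopping-time measurability issues your sketch does not address), and an expected-drift statement over such a window does not deliver the realized bound $\lambda_{t+1}\le\Omega$ that the induction hypothesis must propagate, both for the stabilizer's non-negativity and for invoking Lemma~\ref{lemma:bounded_Bregman} at the next step.

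The paper (Appendix~\ref{app:bounded_dual}, following the structure of \citet{chen2018bandit}) avoids per-step monotonicity entirely: it argues by contradiction at the \emph{first} time $T_0$ with $\lambda_{T_0}>\Omega$ and sums the dual drift inequality (Lemma~\ref{lemma:dual_difference}, combined with the telescoping bound of Lemma~\ref{lemma:path}) over the window of deterministic length $1/\mu$ ending at $T_0$. The increment bound $\lambda_{t+1}\le\lambda_t+\mu$ gives $\lambda_{T_0-s}>\Omega-s\mu$, so the dual variable stays uniformly large across the entire window; the Slater term then contributes accumulated negative mass of order $\rho\,\Omega/\mu$, which dominates the single telescoped $\tfrac{1}{\eta}\log(\gamma^{-1})$ plus the per-round constants, forcing $\Omega<\Omega$. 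The first-violation framing also legitimately supplies $\lambda_t\le\Omega$ throughout the window (your observation that the circularity in the stabilizer is benign is correct for the same reason), and the hypothesis $\eta\le\Omega^{-2}$ enters precisely to absorb the $3\eta n\lambda_t^2$ variance contribution from Lemma~\ref{lemma:bounded_Bregman} into the $\tfrac{3n}{\rho}$ block of~\eqref{eq:bounded_dual_omega_main}---a step your constant accounting leaves implicit. To repair your proof, replace the per-step closing with this fixed-length windowed contradiction; the rest of your reductions can be kept essentially as written.
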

\begin{proof}[Proof sketch]
    We establish the following Lemma by contradiction. Assuming the existence of a $t=t_0$ such that $\lambda_t > \Omega$, we demonstrate, in conjunction with Lemma~\ref{lemma:saddle_inequality}, that this assumption is untenable.
\end{proof}
The full proof is deferred to Appendix~\ref{app:bounded_dual}.

\begin{lemma} 
Let $\gamma \in \reals_{> 0}$ and $\set{\x^\star_{t, \gamma}}^T_{t=1}$ be the projection of the comparator sequence~\eqref{eq:comparator} onto $\simplex_{n,\gamma}$.  Under the negative entropy mirror map~\eqref{eq:neg_entropy_mirrormap}, the primal decisions $\set{\x_t}^T_{t=1}$ of Algorithm~\ref{alg:omd} satisfy the following:
    \begin{align}
       &\sum^T_{t=1}\parentheses{D_\Phi(\x^\star_{t,\gamma}, \x_t) - D_\Phi(\x^\star_{t,\gamma}, \x_{t+1})}  \\
       &\leq \log(1/\gamma)\parentheses{1 + 2 \sum^{T-1}_{t=1} \norm{ \x^\star_{t+1} - \x^\star_t}}.
    \end{align}
\end{lemma}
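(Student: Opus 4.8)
The plan is to reduce this shifting-comparator sum to the ordinary fixed-comparator telescope plus a controllable drift, and then to bound the drift using only the fact that the negative-entropy gradient is bounded by $\log(1/\gamma)$ on the restricted simplex $\simplex_{n,\gamma}$. First I would apply an Abel-type reindexing. Writing $d_t(\z)\triangleq D_\Phi(\z,\x_t)$, the left-hand side is $\sum_{t=1}^T\parentheses{d_t(\x^\star_{t,\gamma})-d_{t+1}(\x^\star_{t,\gamma})}$; shifting the index in the second sum regroups it as
\[
d_1(\x^\star_{1,\gamma})+\sum_{t=2}^T\parentheses{d_t(\x^\star_{t,\gamma})-d_t(\x^\star_{t-1,\gamma})}-d_{T+1}(\x^\star_{T,\gamma}).
\]
Since Bregman divergences are nonnegative, the last term can be dropped. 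This isolates an initial term and $T-1$ drift terms, each comparing two divergences that share the same (algorithm-generated) second argument $\x_t$.

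For the initial term, recall $\x_1=\charvec/n$ and that for the negative entropy the Bregman divergence on the simplex is the KL divergence, so $d_1(\x^\star_{1,\gamma})=\Phi(\x^\star_{1,\gamma})+\log n\le\log n\le\log(1/\gamma)$, using $\Phi\le 0$ on the simplex and $\gamma\le 1/n$. This produces the additive $\log(1/\gamma)$ in the claimed bound. To handle the projected increments I would realize the projection by the shift map $\x^\star_{t,\gamma}=(1-n\gamma)\x^\star_t+\gamma\charvec$, which maps $\simplex_n$ into $\simplex_{n,\gamma}$ and is an $\ell_1$-contraction: $\norm{\x^\star_{t,\gamma}-\x^\star_{t-1,\gamma}}_1=(1-n\gamma)\norm{\x^\star_t-\x^\star_{t-1}}_1\le\norm{\x^\star_t-\x^\star_{t-1}}_1$. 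This is what lets me replace the projected increments on the left by the raw path-length increments on the right.

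The core estimate — and the step I expect to be the main obstacle — is bounding a single drift term. Expanding the definition with $\nabla\Phi(\x)_i=\log x_i+1$ gives the exact identity
\begin{align}
d_t(\x^\star_{t,\gamma})-d_t(\x^\star_{t-1,\gamma})&=\Phi(\x^\star_{t,\gamma})-\Phi(\x^\star_{t-1,\gamma})\nonumber\\
&\quad-\nabla\Phi(\x_t)\cdot\parentheses{\x^\star_{t,\gamma}-\x^\star_{t-1,\gamma}}.\nonumber
\end{align}
Because both projected comparators lie on the simplex, their difference is zero-sum, so the constant $+1$ in $\nabla\Phi$ cancels and the inner product collapses to $\sum_i\log(x_{t,i})\parentheses{x^\star_{t,\gamma,i}-x^\star_{t-1,\gamma,i}}$. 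The crucial observation is that the iterate $\x_t$ lies in $\simplex_{n,\gamma}$ by the Bregman projection in Line~9, so every coordinate is in $[\gamma,1]$ and $\abs{\log x_{t,i}}\le\log(1/\gamma)$; the same zero-sum cancellation applied through the mean value theorem turns the entropy difference into $\nabla\Phi(\xi)\cdot(\x^\star_{t,\gamma}-\x^\star_{t-1,\gamma})=\sum_i\log(\xi_i)(\cdots)$ for some $\xi$ on the segment joining the two comparators, which also lies in $\simplex_{n,\gamma}$, giving $\abs{\log\xi_i}\le\log(1/\gamma)$. Applying Hölder's inequality to each contribution bounds the drift term by $2\log(1/\gamma)\norm{\x^\star_{t,\gamma}-\x^\star_{t-1,\gamma}}_1$.

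Finally I would assemble the pieces: summing the drift bound over $t=2,\dots,T$, applying the $\ell_1$-contraction of the shift map term by term, and adding the $\log(1/\gamma)$ from the initial term yields exactly $\log(1/\gamma)\parentheses{1+2\sum_{t=1}^{T-1}\norm{\x^\star_{t+1}-\x^\star_t}_1}$. The delicate points are that the dependence on the random iterates $\x_t$ enters only through the uniform coordinate lower bound $\gamma$, so no expectation or variance argument is required, and that the two separate $\log(1/\gamma)$ factors — one from the entropy difference, one from the cross term — add to give precisely the factor $2$ on the right-hand side.
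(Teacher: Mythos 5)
Your proof is correct and follows essentially the same skeleton as the paper's own argument (Lemma~\ref{lemma:path}, Appendix~\ref{app:path}): the identical reindexing that isolates drift terms sharing a common second argument, dropping the final nonnegative divergence, bounding the initial term by the KL diameter at $\x_1=\charvec/n$, and controlling each drift term by $2\log(1/\gamma)$ times the $\ell_1$ increment using boundedness of the entropic gradient on $\simplex_{n,\gamma}$, the zero-sum cancellation of the constant coordinate of $\nabla\Phi$, and H\"older. Two local differences are worth flagging. First, where you expand the Bregman difference directly and apply the mean value theorem to $\Phi(\x^\star_{t,\gamma})-\Phi(\x^\star_{t-1,\gamma})$, the paper invokes its three-point identity (Lemma~\ref{lemma:Pythagorean}) and discards the term $-D_\Phi(\x^\star_{t,\gamma},\x^\star_{t+1,\gamma})\le 0$; these are algebraically the same computation and produce the same factor $2$. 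Second, and more substantively, you instantiate $\x^\star_{t,\gamma}$ as the mixing map $(1-n\gamma)\x^\star_t+\gamma\charvec$ and verify its $\ell_1$-contractivity explicitly, whereas the paper leaves ``projection'' unspecified and its last displayed inequality silently replaces the projected increments $\norm{\x^\star_{t+1,\gamma}-\x^\star_{t,\gamma}}_1$ by the raw increments (with an apparent typo); for a Euclidean or Bregman metric projection onto $\simplex_{n,\gamma}$, $\ell_1$-nonexpansiveness is not immediate, so your concrete choice actually patches a loose end. Since the downstream uses of the lemma (e.g., Lemma~\ref{lemma:n0}) only require that $\x^\star_{t,\gamma}\in\simplex_{n,\gamma}$ and that it be within $\BigO{n\gamma}$ of $\x^\star_t$ in $\ell_1$ — both satisfied by your map — this substitution is harmless. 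One small point to make explicit: your argument uses $\gamma\le 1/n$ in three places (so the mixing map lands in $\simplex_{n,\gamma}$, so $\x_1\in\simplex_{n,\gamma}$, and so $\log n\le\log(1/\gamma)$); this is consistent with the algorithm's requirement $\gamma\in[0,1/n]$ but is stronger than the bare $\gamma>0$ in the lemma statement, and should be stated.
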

\begin{proof}[Proof sketch]
We extend the standard mirror descent analysis of \citet{bubeck2011introduction} to establish the above relationship. Specifically, for fixed comparator sequences, we derive telescoping terms upper-bounded by the range term $\log(n)$, coinciding with the diameter of our set under the Bregman divergence associated with the negative entropy mirror map. In contrast to the classical approach, our analysis for non-fixed comparator sequences highlights the indispensable role of the parameter $\gamma>0$ in deriving regret guarantees within the dynamic setting.
\end{proof}
The full proof is deferred to Appendix~\ref{app:path}.

\begin{lemma}\label{lemma:n0}
    Consider a comparator sequence $\set{\vec u_t}^T_{t=1} \in \simplex_n^T$. Under the negative entropy mirror map~\eqref{eq:neg_entropy_mirrormap}, Algorithm~\ref{alg:omd} has the following regret guarantee under Asms.~\ref{asm:boundedness} and~\ref{asm:slaters}   for $\beta = n (3/2 + 3 \Omega^2) $, $\eta = \frac{\eta_0}{\sqrt{\beta T}}$ and $\eta_0 > \log(n)/\sqrt{2} $:
    \begin{align}
         &\E\interval{ \sum^T_{t=1} f_t(a_t) - f_t(\vec u_{t}) }  \\
         &\leq \frac{\parentheses{\log(n) + 2 \log(1/\gamma) \sum^{T-1}_{t=1}\norm{\vec u_t- \vec u_{t+1}}_1 }  }{\eta_0}\sqrt{\beta T}  \nonumber\\
         &+ \eta_0 \sqrt{\beta T}+ 2\gamma  T + \frac{\mu T}{2}.
    \end{align}
\end{lemma}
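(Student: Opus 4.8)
The plan is to derive the bound directly from the saddle-point inequality of Lemma~\ref{lemma:saddle_inequality}, instantiated at $\lambda = 0$ and $\x = \vec{u}_t$, then sum over $t \in \intv{T}$ and dispatch the resulting terms with the three preceding lemmas. First I would note that since $a_t \sim \x_t$ and $f_t$ is linear, the tower property gives $\E[\sum_t f_t(a_t)] = \E[\sum_t f_t(\x_t)]$, so it suffices to control $\E[\sum_t (f_t(\x_t) - f_t(\vec{u}_t))]$. Taking $\lambda = 0$ turns the left-hand side of Lemma~\ref{lemma:saddle_inequality} into $f_t(\x_t) - f_t(\vec{u}_t) - \lambda_t g_t(\vec{u}_t)$; because $\lambda_t \ge 0$ and the comparators of interest are feasible ($g_t(\vec{u}_t) \le 0$), the coupling term $-\lambda_t g_t(\vec{u}_t)$ is nonnegative and can be dropped from the left, isolating exactly the primal regret increment.

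Summing the right-hand side, I would handle its pieces separately. The dual telescoping term $(2\mu)^{-1}\sum_t(\lambda_t^2 - \lambda_{t+1}^2) = (2\mu)^{-1}(\lambda_1^2 - \lambda_{T+1}^2) \le 0$ vanishes since $\lambda_1 = 0$, and the per-round constant contributes $\mu T/2$. The stability term $\eta^{-1}\sum_t \E[D_\Phi(\x_t,\y_{t+1})]$ is bounded by Lemma~\ref{lemma:bounded_Bregman}, which gives $\E[D_\Phi(\x_t,\y_{t+1}) \mid \hist_{t-1}] \le 1.5\eta^2 n(1 + \lambda_t^2 + \Omega^2)$; invoking the dual bound $\lambda_t \le \Omega$ of Lemma~\ref{eq:bounded_dual} to replace $\lambda_t^2$ by $\Omega^2$ yields the per-round estimate $1.5\eta^2 n(1 + 2\Omega^2) = \eta^2\beta$ with $\beta = n(3/2 + 3\Omega^2)$, hence $\eta^{-1}\sum_t\E[D_\Phi(\x_t,\y_{t+1})] \le \eta\beta T$, which with $\eta = \eta_0/\sqrt{\beta T}$ is exactly $\eta_0\sqrt{\beta T}$.

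The remaining piece is the moving-comparator Bregman sum $\eta^{-1}\sum_t(D_\Phi(\vec{u}_t,\x_t) - D_\Phi(\vec{u}_t,\x_{t+1}))$, which I would bound with the dynamic-comparator argument behind the preceding (path-length) lemma. Since the comparator drifts, the sum does not telescope cleanly; instead I would regroup it as $D_\Phi(\vec{u}_1,\x_1)$ plus per-step drift terms $D_\Phi(\vec{u}_{t+1},\x_{t+1}) - D_\Phi(\vec{u}_t,\x_{t+1})$. The initial term is at most $\log n$ by the uniform initialization $\x_1 = \vec{1}/n$, while each drift term splits into a telescoping entropy difference and a $\sum_i (u_{t+1,i}-u_{t,i})\log x_{t+1,i}$ contribution bounded by $\log(1/\gamma)\,\|\vec{u}_{t+1}-\vec{u}_t\|_1$; here the exploration floor $x_{t,i}\ge\gamma$ is precisely what keeps $|\log x_{t,i}|$ and each $D_\Phi(\vec{u}_t,\x_t)$ finite. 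Comparing an arbitrary $\vec{u}_t \in \simplex_n$ against its $\gamma$-interior surrogate in $\simplex_{n,\gamma}$ then costs $O(\gamma)$ per round, producing the $2\gamma T$ slack. After substituting $\eta^{-1} = \sqrt{\beta T}/\eta_0$ this term becomes $\eta_0^{-1}(\log n + 2\log(1/\gamma)\sum_t\|\vec{u}_t-\vec{u}_{t+1}\|_1)\sqrt{\beta T}$, and adding the four contributions yields the claimed inequality, with $\eta_0 > \log(n)/\sqrt{2}$ and $\eta \le \Omega^{-2}$ serving exactly to make Lemmas~\ref{lemma:bounded_Bregman}--\ref{eq:bounded_dual} applicable for the chosen parameters.

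The main obstacle is the stability term. Under bandit feedback the estimators $\tilde{\vec f}_t,\tilde{\vec g}_t$ have unbounded Euclidean second moment (as the excerpt stresses for OGD), so the only reason $\eta^{-1}\sum_t\E[D_\Phi(\x_t,\y_{t+1})]$ stays at the target $O(\eta_0\sqrt{\beta T})$ scale is the entropic geometry of Lemma~\ref{lemma:bounded_Bregman} together with the a priori boundedness $\lambda_t \le \Omega$ of Lemma~\ref{eq:bounded_dual}: the penalty weight enters $\beta$ only \emph{multiplicatively}, through $\Omega^2$, rather than through a diverging variance. Securing this simultaneous use of the mirror-map stability bound and the dual-boundedness bound is the crux of the argument; a secondary but indispensable subtlety is that the moving-comparator telescoping is finite at all only because of the floor $\gamma$, which is also the origin of the $2\gamma T$ term.
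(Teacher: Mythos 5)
Your proposal is correct and follows essentially the same route as the paper's proof: instantiate the saddle-point inequality (Lemma~\ref{lemma:saddle_inequality}) at $\lambda=0$, sum over $t$, bound the stability term via Lemma~\ref{lemma:bounded_Bregman} together with the dual bound $\lambda_t\le\Omega$ of Lemma~\ref{eq:bounded_dual} to obtain $\eta\beta T=\eta_0\sqrt{\beta T}$, and control the drifting-comparator Bregman sum exactly as in Lemma~\ref{lemma:path}, with the $\gamma$-projection of the comparator producing the $O(\gamma T)$ slack. The only bookkeeping nuance is that you drop the coupling term $-\lambda_t g_t(\vec u_t)$ at the exact feasible comparator before projecting, whereas strictly the saddle inequality must be instantiated at the single projected point $\x^\star_{t,\gamma}$, where the coupling is only bounded by $-\lambda_t g_t(\x^\star_{t,\gamma})\le 2\gamma n\Omega$ per round --- an $O(\gamma\Omega T)$ term that the paper's own appendix tracks and then, like you, absorbs into the stated $2\gamma T$-type slack.
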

\begin{proof}[Proof sketch]
We employ Lemmas 1 through 4 for this proof. Specifically, summing Lemma 2 from $t = 1$ to $T$ and setting $\lambda = 0$ completes the argument.
\end{proof}
The full proof is deferred to Appendix~\ref{app:path_length}.

\begin{lemma}     Assume that $\lambda_t \le \Omega$ for all $t \in [T]$. Fix an arbitrary comparator sequence $\{\vec u_t\}_{t=1}^T \in \simplex_n^T$. We run Algorithm~\ref{alg:omd} with the negative-entropy mirror map in~\eqref{eq:neg_entropy_mirrormap}. Let $\beta = n(3/2 + 3\Omega^2)$ and set $\eta = \eta_0/\sqrt{\beta T}$ with $\eta_0 > \log(n)/\sqrt{2}$. Under these choices, we obtain the following regret guarantee. \label{lemma:n02}
    \begin{align}
     &\E\interval{ \sum^T_{t=1} f_t(a_t) - f_t(\vec u_{t}) }  \leq 2 \eta_0 \sqrt{\beta T}+ 2\gamma  T + \frac{\mu T}{2} \nonumber\\
     &+ \frac{4 \log(1/\gamma) T V_T }{\eta_0^2 - \log(n)/2} \mathds{1}\parentheses{E_T}.
    \end{align}
    where $E_T \triangleq \parentheses{\sum^{T-1}_{t=1}\norm{\vec u_t- \vec u_{t+1}}_1 > \frac{ \eta_0 ^2 - \log(n)/2}{\log(1/\gamma)}}$.
\end{lemma}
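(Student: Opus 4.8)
The plan is to turn the path-length bound of Lemma~\ref{lemma:n0} into a variation bound by splitting on the event $E_T$ and, on its complement, running a blocking argument. Write $S=\sum_{t=1}^{T-1}\norm{\vec u_t-\vec u_{t+1}}_1$ and let $\tau=(\eta_0^2-\log(n)/2)/\log(1/\gamma)$ be the threshold defining $E_T$. When $E_T$ fails, $S\le\tau$, so in Lemma~\ref{lemma:n0} the path-length contribution satisfies $2\log(1/\gamma)S\le 2\eta_0^2-\log n$; adding the range term $\log n$ and dividing by $\eta_0$ collapses the first summand to $\BigO{\eta_0\sqrt{\beta T}}$, which is absorbed together with the stability term $\eta_0\sqrt{\beta T}$. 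Since $\mathds{1}(E_T)=0$ here, the claimed bound follows with no dependence on $V_T$.

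When $E_T$ holds the comparator drifts too much to be tracked directly, so I would stop comparing to $\{\vec u_t\}$ and instead exploit that the per-step minimizer $\z_t\in\argmin_{\x\in\simplex_n}f_t(\x)$ satisfies $f_t(\z_t)\le f_t(\vec u_t)$; hence the regret against $\{\vec u_t\}$ is at most the regret against $\{\z_t\}$. I then partition $[T]$ into $K$ contiguous blocks of length $\Delta=T/K$, define the block-optimal static point $\w_k\in\argmin_{\x\in\simplex_n}\sum_{t\in B_k}f_t(\x)$, and set the piecewise-constant proxy $\vec v_t=\w_k$ for $t\in B_k$. Decomposing, the regret against $\{\z_t\}$ equals the regret against $\{\vec v_t\}$ plus the block-approximation error $\sum_k\sum_{t\in B_k}\parentheses{f_t(\w_k)-f_t(\z_t)}$.

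For the first piece, $\{\vec v_t\}$ is piecewise constant with path length at most $2(K-1)$, so Lemma~\ref{lemma:n0} applies with this value; choosing $K=\Theta\parentheses{(\eta_0^2-\log(n)/2)/\log(1/\gamma)}$ forces the induced path-length penalty to be $\BigO{\eta_0\sqrt{\beta T}}$ and lets it merge into the stability term, leaving the $2\eta_0\sqrt{\beta T}$ leading order. For the second piece I would use that within a block any fixed point has loss fluctuation at most $V_{B_k}=\sum_{t\in B_k}\norm{\vec f_t-\vec f_{t+1}}_\infty$: fixing a reference $s\in B_k$, $\sum_{t\in B_k}f_t(\w_k)\le\sum_{t\in B_k}f_t(\z_s)\le\sum_{t\in B_k}\parentheses{f_t(\z_t)+2V_{B_k}}$, so each block error is at most $2\Delta V_{B_k}$ and, summing over blocks, at most $2\Delta V_T$. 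With $\Delta=T/K=\Theta\parentheses{\log(1/\gamma)T/(\eta_0^2-\log(n)/2)}$ this is exactly the term $\tfrac{4\log(1/\gamma)TV_T}{\eta_0^2-\log(n)/2}$.

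The hard part is the block-approximation step together with the calibration of $K$: one must show the best static point per block is within $\BigO{\Delta V_{B_k}}$ of the per-step dynamic optimum, and then pick the block count so that the path-length penalty paid by the proxy $\{\vec v_t\}$ is precisely swallowed by the $\eta_0\sqrt{\beta T}$ stability term---this coupling is what ties the denominator to $\eta_0^2-\log(n)/2$. The remaining bookkeeping, namely carrying the projections onto $\simplex_{n,\gamma}$ (which produce the $2\gamma T$ term) and checking that the reduction $f_t(\z_t)\le f_t(\vec u_t)$ stays within the feasible set, is routine given Lemmas~\ref{lemma:bounded_Bregman}--\ref{lemma:n0}.
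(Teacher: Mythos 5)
Your proposal is sound and arrives at exactly the paper's calibration, but it is organized differently from the paper's proof, so a comparison is worth recording. The paper (adapting Lemma~2 of \citet{jadbabaie2015online}) introduces the restricted class $\mathcal{U}_T$ of \emph{feasible} comparator sequences with path length at most $\tau=(\eta_0^2-\log(n)/2)/\log(1/\gamma)$, takes the loss-minimizing sequence $\{\vec u^\star_t\}$ in that class, applies Lemma~\ref{lemma:n0} to it, gets the indicator $\mathds{1}(E_T)$ automatically from optimality of $\vec u^\star$ (the gap $\sum_t f_t(\vec u^\star_t)-f_t(\vec u_t)$ is nonpositive whenever the comparator itself lies in $\mathcal{U}_T$), and on $E_T$ bounds the residual gap by batching with $B=\tau/2$ blocks plus a proof by contradiction showing $f_t(\x^\star_{t_k})-f_t(\x^\star_t)\le 2\sum_{s\in\mathcal{T}_k}\sup_{\x}\abs{f_s(\x)-f_{s-1}(\x)}$. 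You skip the best-in-class detour entirely: you split on $E_T$ upfront, build the piecewise-constant proxy directly from block minimizers, and prove the per-block approximation bound $f_t(\z_s)\le f_t(\z_t)+2V_{B_k}$ by two elementary optimality-plus-variation steps, which replaces the paper's contradiction argument; with $K\approx\tau/2$ your $2\Delta V_T$ reproduces the term $4\log(1/\gamma)TV_T/(\eta_0^2-\log(n)/2)$ exactly. Your route is more direct and self-contained; what the paper's detour through $\mathcal{U}_T$ buys is that every intermediate comparator is kept feasible, and this is less cosmetic than it looks: the proof of Lemma~\ref{lemma:n0} controls the term $\sum_t \lambda_t g_t(\,\cdot\,)$ using $g_t(\x^\star_t)\le 0$, so its ``arbitrary comparator'' statement is really only safe for feasible sequences, whereas your unconstrained minimizers $\z_t$ and block optima $\w_k$ may violate $g_t$ and could contribute up to $\Omega$ per round through the dual term. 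To make your argument airtight you should define $\z_t$ and $\w_k$ as \emph{constrained} minimizers over $\set{\x\in\simplex_n : g_t(\x)\le 0}$, exactly as the paper does with $\x^\star_t$; your two-step approximation bound goes through with this substitution. Two pieces of slack you share with the paper's own proof and which therefore do not count against you: the $E_T$-fails branch actually yields $3\eta_0\sqrt{\beta T}$ (path term $2\eta_0$ plus stability term $\eta_0$) rather than the stated $2\eta_0\sqrt{\beta T}$ (the paper's proof likewise derives $3\eta_0\sqrt{\beta T}$), and within-block feasibility of the frozen comparator is implicitly assumed even though $g_t$ may vary inside a block.
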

\begin{proof}[Proof sketch]
We extend the argument of Lemma~2 in \citet{jadbabaie2015online} to complete the proof.
\end{proof}
The full proof is deferred to Appendix~\ref{app:temporal_variation}.
\subsection{Proof of Theorem~\ref{theorem:main}}
\label{proof:main}
\begin{proof}
      Combining Lemmas~\ref{lemma:n0} and~\ref{lemma:n02}, we obtain
    \begin{align}
          &\E\interval{ \sum^T_{t=1} f_t(a_t) - f_t(\vec u^\star_{t}) }\\
          &= \min \Big\{ 2 \eta_0 \sqrt{\beta T} + \frac{4 \log(1/\gamma) T V_T }{\eta_0^2 - \log(n)/2} , \\
          &\parentheses{ \frac{\parentheses{\log(n)/2 + \log(\frac{1}{\gamma}) P_T  }  }{\eta_0}   +  \eta_0 }\sqrt{\beta T}\Big\} +2 \gamma T + \frac{\mu T}{2}.
    \end{align}
    We next set $\eta = \eta_0/\sqrt{\beta T}$ with
$\eta_0 = \min\{\sqrt{P_T}, V_T^{1/3}T^{1/6}\}/M$, choose $\mu = 1/(M\sqrt{T})$, and take $\gamma=\Theta(T^{-1/2})$.
Substituting these parameter choices into the preceding bound yields
    \begin{align}
        \E\interval{ \sum^T_{t=1} f_t(a_t) - f_t(\vec u^\star_{t}) } = \BigO{\min\set{\sqrt{P_T T},{V_T^{1/3} T^{2/3}}}}.
    \end{align}
For the cumulative constraint violation, note that the dual update implies
\begin{align}
\mu^{-1}\E\interval{ \lambda_{t+1} -  \lambda_t\Big| \hist_{t-1}}  &\geq  g_t(\x_t).
\end{align}
Therefore, for $\mu=\Theta(T^{-1/2})$, we have
\begin{align}
    &\E\interval{\sum^T_{t=1} g_t(a_t)} = \E \interval{\sum^T_{t=1} g_t(\x_t)}\\
    &\leq \E\interval{\sum^T_{t=1} {\mu } ^{-1}\E\interval{ \lambda_{t+1} -  \lambda_t\Big| \hist_{t-1}}}   \\
    &= {\mu }^{-1}{\E[\lambda_{T+1}]}  \leq {\mu}^{-1}{\Omega} = \BigO{\sqrt{T}} \label{eq:bound_proof_dual},
\end{align}
where the penultimate inequality uses $\E[\lambda_{T+1}]\le \Omega$.

Finally, our choice of $M$ ensures that $\eta \le \Omega^{-2}$ (Appendix~\ref{app:bounded_M}); hence, Lemma~\ref{eq:bounded_dual} guarantees that the dual iterates remain bounded by $\Omega$, which justifies the bound in~\eqref{eq:bound_proof_dual}. This completes the proof.
\end{proof}

\subsection{Meta-Algorithm for Learning Path Length and Temporal Variation}\label{s:meta}

We eliminate the need for prior knowledge of $P_T$ or $V_T$ by employing a 
\emph{meta-algorithm} (\texttt{MBCOMD} in Algorithm~\ref{alg:meta-doubling} in the Appendix). The time horizon is partitioned into 
geometrically growing phases $\mathcal I_m$ of length $L_m = 2^{m-1}$, 
for $m = 1, \dots, M = \lceil \log_2 T \rceil$.  In each phase $\mathcal I_m$, we run $K_m = \lceil \log_2 L_m \rceil$ instances of
\texttt{BCOMD} from a geometric grid. At the beginning of every phase, the dual variable 
is reset.  The meta-learner maintains a mixture distribution over these experts and 
aggregates their predictions. Bandit feedback is broadcast to all experts, 
enabling them to update via entropic mirror descent with appropriate expert-level 
losses (see Algorithm~\ref{alg:meta-doubling} in the Appendix). 
\paragraph{Bandit with Expert Advice (Meta-Bandits).}
Formally, let $\mathcal{A}$ be the action set as before and $\mathcal{E}$ the expert set with $|\mathcal{E}|=K$. 
At each round $t$: 
(1) each expert $k\in\mathcal{E}$ outputs a distribution $\vec x^{(k)}_t \in \simplex_n$; 
(2) the learner samples and plays action $a_t \sim \xmeta_t \in \simplex_K$; 
(3) the learner observes the bandit feedback for $a_t$ and constructs unbiased estimates of cost and constraint; 
(4) the learner updates $\xmeta_t$ and the internal states of the experts. 
The regret in this setting satisfies  $\widetilde{\mathcal{O}}\left(\sqrt{nL \log K}\right)$ over any block of length $L$ (see Lemma~\ref{lem:meta-phase} in Appendix), i.e., sublinear in the number of actions $n$ and only logarithmic in the number of experts $K$. 
In our setting, $K_m = \Theta(\log L_m)$, so the per-phase overhead is $\mathcal{O}(\sqrt{nL_m \log K_m})$. 
Summing across phases with the doubling schedule gives a global overhead $\mathcal{O}(\sqrt{nT \log\log T})$, which is absorbed into $\widetilde{\mathcal{O}}(\cdot)$.

\begin{theorem}[Adaptive regret and violation via doubling]\label{thm:meta}
Under Assumptions~\ref{asm:boundedness} and \ref{asm:slaters}, \textup{\texttt{MBCOMD}} (Algorithm~\ref{alg:meta-doubling} run across phases $m=1,\dots,\lceil\log_2 T\rceil$) achieves
\begin{align}
\mathfrak{R}_T(\textup{\texttt{MBCOMD}}) 
&= \tilde{\mathcal O}\!\left(\min\set{\sqrt{P_T} T^{2/3},\, V_T^{1/3} T^{7/9}}\right),\nonumber \\
\mathfrak{V}_T(\textup{\texttt{MBCOMD}}) 
&= \tilde{\mathcal O}(\sqrt{T}).
\end{align}
\end{theorem}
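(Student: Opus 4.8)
The plan is to reduce the adaptive guarantee to the per-phase guarantee of Theorem~\ref{theorem:main}, paying only a lower-order price for not knowing $P_T$ or $V_T$ in advance. Fix a phase $\mathcal{I}_m$ of length $L_m = 2^{m-1}$, and write $P_{T,m}$ and $V_{T,m}$ for the path length and temporal variation restricted to $\mathcal{I}_m$; since consecutive-comparator increments are additive across the partition, $\sum_m P_{T,m} \le P_T$ and $\sum_m V_{T,m} \le V_T$. Inside each phase I would split the regret of the aggregated play against the restricted comparator $\set{\x^\star_t}_{t\in\mathcal{I}_m}$ into a tracking term---the regret of the meta-mixture $\xmeta_t$ relative to the single best \texttt{BCOMD} expert of the grid---and a base term---the regret of that expert against the comparator. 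Constraint violation is decoupled from this decomposition and handled through the dual variable, which is reset at the start of every phase so that violations cannot compound across phases.

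For the base term I would first argue that the geometric grid of $K_m=\lceil\log_2 L_m\rceil$ learning rates spans the admissible range of the phase-local tuning $c_{T,m}=\min\set{\sqrt{P_{T,m}},\,V_{T,m}^{1/3}L_m^{1/6}}$, so some grid point sits within a constant factor of the optimal rate; running Theorem~\ref{theorem:main} over horizon $L_m$ with that expert (and $\mu=\Theta(L_m^{-1/2})$, $\gamma=\Theta(L_m^{-1/2})$) controls its regret. After accounting for the overhead of tuning only at phase granularity and of re-initializing the dual each phase, I would establish a per-phase base bound of $\tilde{\mathcal{O}}(\min\set{\sqrt{P_{T,m}}\,L_m^{2/3},\,V_{T,m}^{1/3}L_m^{7/9}})$, together with additive $\Theta(\sqrt{L_m})$ contributions from the $\gamma$- and $\mu$-terms. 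For the tracking term I would invoke the bandit-with-expert-advice guarantee (Lemma~\ref{lem:meta-phase}), giving $\tilde{\mathcal{O}}(\sqrt{nL_m\log K_m})$ per phase.

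Summing over phases is where the exponents are assembled. The tracking terms and the additive $\Theta(\sqrt{L_m})$ contributions each sum geometrically to $\tilde{\mathcal{O}}(\sqrt{T})$ and are lower order. For the base terms I would apply Cauchy--Schwarz to the path-length route, $\sum_m \sqrt{P_{T,m}}\,L_m^{2/3}\le (\sum_m P_{T,m})^{1/2}(\sum_m L_m^{4/3})^{1/2}$, and H\"older's inequality with exponents $(3,\tfrac{3}{2})$ to the variation route, $\sum_m V_{T,m}^{1/3}L_m^{7/9}\le (\sum_m V_{T,m})^{1/3}(\sum_m L_m^{7/6})^{2/3}$. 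Because $L_m=2^{m-1}$, the geometric sums satisfy $\sum_m L_m^{4/3}=\Theta(T^{4/3})$ and $\sum_m L_m^{7/6}=\Theta(T^{7/6})$, so the two routes evaluate to $\tilde{\mathcal{O}}(\sqrt{P_T}\,T^{2/3})$ and $\tilde{\mathcal{O}}(V_T^{1/3}T^{7/9})$; taking their minimum yields the claimed regret.

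The violation bound follows from the phase-local dual analysis: with the reset and $\mu=\Theta(L_m^{-1/2})$, Lemma~\ref{eq:bounded_dual} certifies $\Omega=\tilde{\mathcal{O}}(1)$, so the telescoping in the proof of Theorem~\ref{theorem:main} gives a per-phase violation of $\mu^{-1}\E[\lambda_{L_m+1}]=\tilde{\mathcal{O}}(\sqrt{L_m})$, and $\sum_m\sqrt{L_m}=\mathcal{O}(\sqrt{T})$ closes the bound. The step I expect to be the main obstacle is the tracking term: a single sampled action must furnish unbiased loss and constraint estimates to \emph{every} grid expert, so the estimator is importance-weighted by the meta-mixture probability and its second moment can diverge exactly as in the OGD regime of Section~\ref{s:main_result}. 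Keeping it bounded---so that the aggregation regret remains $\tilde{\mathcal{O}}(\sqrt{nL_m\log K_m})$---requires propagating the non-Euclidean, exploration-floored construction of Algorithm~\ref{alg:omd} to the meta level, and simultaneously verifying that the dual bound $\Omega$ survives both the reset and the mixture play; pinning down the precise per-phase exponents that govern the extra $T^{1/6}$ and $T^{1/9}$ loss relative to the known-parameter rates is the delicate part of the bookkeeping.
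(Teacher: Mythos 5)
Your proposal reproduces the paper's architecture faithfully---the per-phase decomposition into tracking plus base regret, the per-phase base bound $\tilde{\mathcal{O}}(\min\set{\sqrt{P_{T,m}}\,L_m^{2/3},\,V_{T,m}^{1/3}L_m^{7/9}})$, Cauchy--Schwarz and H\"older with exponents $(3,\tfrac{3}{2})$ over the doubling schedule, and the violation bound via the phase-wise dual reset with $\mu=\Theta(L_m^{-1/2})$. But the one step you explicitly defer (``pinning down the precise per-phase exponents'') is the crux of the entire proof, and your diagnosis of where the difficulty lives is inverted. You place the variance danger in the \emph{tracking} term; in fact the meta-learner is the safe part: in Lemma~\ref{lem:meta-phase} the mixture identity $\sum_k x^{\mathrm{meta}}_{t,k}\tilde m_{t,k}^2 \le \sum_a x^{\mathrm{meta}}_{t,a}\hat f_t(a)^2$ (via Jensen) makes the conditional second moment collapse to $\sum_a f_t(a)^2 \le n$ with no exploration floor needed, because the importance weights are taken with respect to the very distribution being played. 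The genuinely dangerous term is the \emph{base} one: each grid expert updates with estimators weighted by $1/x^{\mathrm{meta}}_{t,a_t}$ rather than by its own probabilities, so Lemma~\ref{lemma:bounded_Bregman} fails as stated and must be re-derived with an extra factor $1/\gamma^{\mathrm{meta}}$, where $\gamma^{\mathrm{meta}}_m=\Theta(L_m^{-1/3})$ is the floor imposed on the meta distribution. Consequently your claim that running ``Theorem~\ref{theorem:main} over horizon $L_m$ with that expert\ldots controls its regret'' cannot be right: if Theorem~\ref{theorem:main} applied per phase you would get $\min\set{\sqrt{P_{T,m}L_m},\,V_{T,m}^{1/3}L_m^{2/3}}$, and your own Cauchy--Schwarz step would then yield $\tilde{\mathcal{O}}(\sqrt{P_T T})$ overall---strictly better than the theorem being proved, which is a signal that the direct reduction is unsound.

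The degraded exponents arise from re-balancing against the inflated off-policy variance: the paper bounds the best expert's phase regret by $\tilde{\mathcal{O}}\bigl(\min\set{P_{\mathcal{I}}/\eta,\,V_{\mathcal{I}}L/\eta_0^2} + \eta L/\gamma^{\mathrm{meta}}\bigr)$ and optimizes with $\eta=\eta_0/c_{\mathcal{I}}$, $c_{\mathcal{I}}=\Theta(L^{2/3})$, and $\eta_0=\Theta\bigl(\min\set{\sqrt{P_{\mathcal{I}}},\,V_{\mathcal{I}}^{1/3}L^{1/9}}\bigr)$---note the exponent $L^{1/9}$, not the Theorem~\ref{theorem:main} tuning $c_{T,m}=\min\set{\sqrt{P_{T,m}},\,V_{T,m}^{1/3}L_m^{1/6}}$ that your grid-coverage argument targets, so the geometric grid you invoke is aimed at the wrong rate. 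Your attribution of the exponent loss to ``tuning only at phase granularity and re-initializing the dual'' is also incorrect: those effects cost only additive $\tilde{\mathcal{O}}(\sqrt{L_m})$ terms, which (as you correctly note elsewhere) sum to $\tilde{\mathcal{O}}(\sqrt{T})$ and cannot degrade a $L_m^{1/2}$ exponent to $L_m^{2/3}$. In short: the summation machinery and the violation half of your argument match the paper and are fine, but the per-phase base bound---the heart of the theorem---is asserted rather than derived, and the missing derivation is precisely the $1/\gamma^{\mathrm{meta}}$-inflated one-step Bregman bound together with the re-tuned learning-rate cascade.
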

The full proof is provided  in the Appendix~\ref{s:appenix_meta}.
\section{Numerical Evaluation}\label{s:experimental_results}
We conclude with a numerical evaluation demonstrating the effectiveness of our proposed policy.
\paragraph{Policies and Hyperparameters.} We implement our proposed policy, \texttt{BCOMD}, in Algorithm~\ref{alg:omd} under the entropic setup. We configure the hyperparameters of \texttt{BCOMD} as follows:  $\eta \in [10^{-3}, 4 \times 10^{-2}]$, $\gamma \in [10^{-5},10^{-3}]$, and $\mu = \eta /2$, and $\Omega = 0$.  Additionally, we implement the \texttt{R-GP-UCB} policy~\citep{deng2022interference} as a benchmark. This policy employs Gaussian Processes (GP) coupled with an Upper Confidence Bound (UCB)-inspired exploration strategy. For \texttt{R-GP-UCB}, we adopt a windowed restarting approach.  For \texttt{R-GP-UCB}, we tune the five hyperparameters $\lambda \in [5 \times 10^{-2}, 10^{-1}]$, $W \in [2\times 10^{3}, 8 \times 10^{3}]$, $\delta \in [10^{-4}, 5\times 10^{-3}]$, $R \in [1,2]$, $\tau \in [10^{-3}, 10^{-2}]$. To ensure fairness, we maintain a consistent resolution across all hyperparameter grids during the search process. We run the experiment on Intel(R) Xeon(R) Platinum 8164 CPU (104 cores). 

\paragraph{Non-Stationary Setup.} We construct a demonstrative example with $n=25$ arms.  A synthetic environment was constructed with fixed cost and constraint functions as initial conditions; each arm is associated with costs $f(a)=1+\sin(\pi(a/(n-1)))$ and $g(a) = 0.5\mathds{1}(a \leq n/1.5) - 1/4$. To introduce non-stationarity, the cost and constraint values were cyclically shifted by five arms within each time window of length $W = 2 \times 10^{2}$, repeating this process six times. The constraints are truncated to values within $[-10^{3}, \infty)$. Gaussian noise $\xi \sim N(0,0.1)$ was added independently to the function values at each timeslot to simulate real-world variability. The trace is depicted in Figure~\ref{fig:traces}.
\begin{figure}[t]
    \centering
    \subcaptionbox{Costs}{\includegraphics[width=0.45\linewidth]{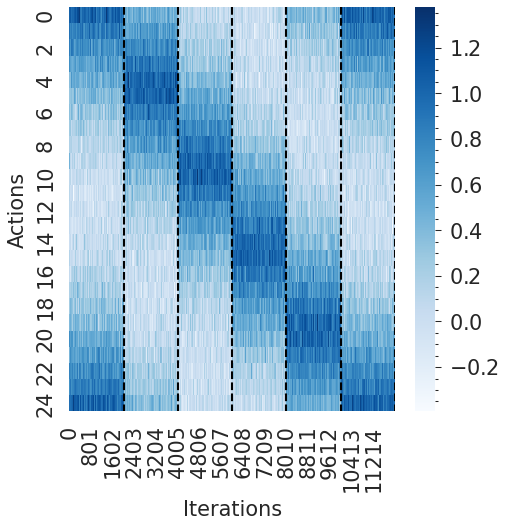}}
    \subcaptionbox{Constraints}{\includegraphics[width=0.45\linewidth]{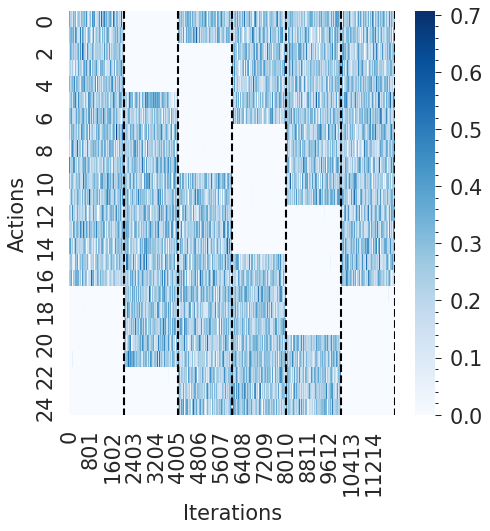}}
    \caption{Non-stationary Trace Generation Setup. The underlying function changes every $2 \times 10^3$ timeslots for six times.}\label{fig:traces}
\end{figure}
\begin{figure}[t]
    \centering
    \subcaptionbox{}{\includegraphics[width=0.45\linewidth]{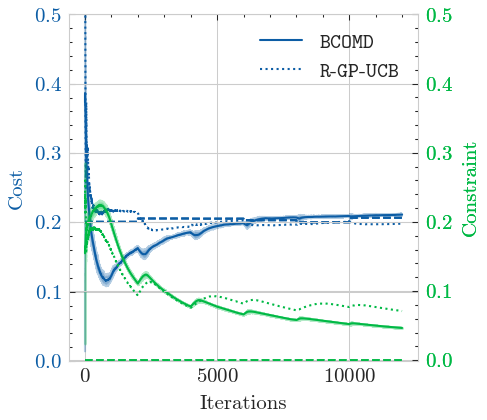}}
    \subcaptionbox{}{\includegraphics[width=0.38\linewidth]{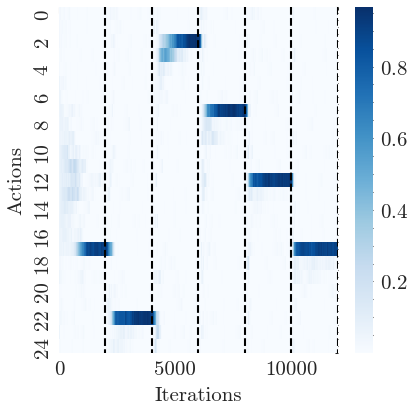}}
    \caption{Subfigure~(a) illustrates the cumulative costs and constraint satisfaction of both \texttt{BCOMD} and \texttt{R-GP-UCB} policies under a non-stationary environment. Subfigure~(b) presents the action distribution $\A$ acquired by the \texttt{BCOMD} algorithm. }
    \label{fig:curve}
\end{figure}
\paragraph{Discussion.} We execute both \texttt{BCOMD} and \texttt{R-GP-UCB} policies on the generated trace. Figure~\ref{fig:curve}~(a) presents the simulation results corresponding to the optimal hyperparameter values identified through grid search. The figure clearly demonstrates the superiority of our approach by achieving concurrently lower cumulative costs and constraint violation. Furthermore, we showcase in Figure~\ref{fig:curve}~(b) the robustness of \texttt{BCOMD} by visualizing the learned distribution over arms across time, which  reveals the policy's ability to identify optimal and feasible actions in diverse time slots.

\section{Conclusion}\label{s:conclusion}
In this work, we present a novel primal-dual algorithm that extends online mirror descent by incorporating tailored gradient estimators and robust constraint handling. We provide rigorous theoretical analysis establishing sublinear dynamic regret and sublinear constraint violation for our proposed policy. Our empirical findings corroborate our theoretical results, demonstrating state-of-the-art performance in terms of both regret and constraint violation.

As avenues for future research, we propose exploring the impact of different mirror maps on algorithm performance. In particular, we are interested in investigating the Tsallis entropy map as a potential alternative to the standard entropy map. This exploration is motivated by the promising results reported in the literature~\citep{zimmert2021tsallis}, where the Tsallis entropy map has been shown to achieve optimal regret bounds in both adversarial and stationary stochastic settings without requiring involved concentration reasoning. Secondly, we intend to assess the adaptability of our proposed methods to a wider range of bandit problem formulations, such as combinatorial bandits and bandit-feedback submodular maximization. By incorporating time-varying constraints, we anticipate addressing a broader spectrum of real-world applications.

\bibliographystyle{apalike}
\bibliography{paperbib}

\clearpage
\onecolumn 
\appendix  

{
  \centering
  
  \hrule height 4pt 
  \vspace{0.15in} 
  
  {\LARGE \bfseries Technical Appendix \par}
  
  \vspace{0.15in}
  
  \hrule height 1pt 
  \vspace{0.3in} 
}

\startcontents[appendices]

This appendix provides supporting results and deferred proofs.
Appendix~A reviews standard preliminaries on Bregman divergences and projections and derives a one-step inequality for Online Mirror Descent.
Appendix~B specializes these tools to the negative-entropy mirror map, establishing KL-diameter and quadratic entropic bounds.
Appendix~C develops problem-specific lemmas for the bandit primal--dual analysis, including control of the conditional divergence term, a saddle-point inequality, and bounds on the dual iterates.
Appendix~D proves the main regret guarantees, including bounds in terms of comparator path length $P_T$ and temporal variation $V_T$.
Appendix~E presents the adaptive meta-algorithm and its analysis.
Finally, Appendix~F discusses computational complexity, and Appendix~G provides examples showing that $P_T$ and $V_T$ are generally incomparable.
\printcontents[appendices]{}{1}{\section*{Contents}}
\clearpage

\section{Standard Preliminaries: Bregman Divergences and Projections}
\subsection{Three-Point (``Pythagorean'') Identity for Bregman Divergences}
\begin{lemma}\label{lemma:Pythagorean}
Let $\Phi:\D\to\reals$ be differentiable and let $D_\Phi$ be the associated Bregman divergence~\eqref{eq:Bregman_projection}. Then for any $\x,\y,\z\in\D$,
    \begin{align}
D_\Phi (\x, \y)+D_\Phi (\z, \x)-D_\Phi (\z, \y) = \parentheses{\nabla \Phi(\x) - \nabla \Phi(\y)} \cdot (\x- \z).
    \end{align}
\end{lemma}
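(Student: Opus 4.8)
The plan is to prove this purely by algebraic expansion; no geometric or analytic machinery is needed, and there is no genuine obstacle here. The key observation is that each of the three Bregman divergences contributes copies of $\Phi$ evaluated at the three points, and these cancel completely when the divergences are combined with the stated signs, leaving only the linear (inner-product) terms, which then regroup into the claimed expression.

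First I would write out the three divergences directly from the definition $D_\Phi(\mathbf a,\mathbf b) = \Phi(\mathbf a) - \Phi(\mathbf b) - \nabla\Phi(\mathbf b)\cdot(\mathbf a - \mathbf b)$. Explicitly, $D_\Phi(\x,\y) = \Phi(\x)-\Phi(\y)-\nabla\Phi(\y)\cdot(\x-\y)$, then $D_\Phi(\z,\x) = \Phi(\z)-\Phi(\x)-\nabla\Phi(\x)\cdot(\z-\x)$, and $D_\Phi(\z,\y) = \Phi(\z)-\Phi(\y)-\nabla\Phi(\y)\cdot(\z-\y)$.

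Then I would form the combination $D_\Phi(\x,\y)+D_\Phi(\z,\x)-D_\Phi(\z,\y)$ and verify that the $\Phi$-values cancel pairwise: $\Phi(\x)$ appears with a $+$ from the first term and a $-$ from the second; $\Phi(\y)$ with a $-$ from the first and a $+$ from the third; and $\Phi(\z)$ with a $+$ from the second and a $-$ from the third. What remains is precisely the collection of gradient terms $-\nabla\Phi(\y)\cdot(\x-\y)-\nabla\Phi(\x)\cdot(\z-\x)+\nabla\Phi(\y)\cdot(\z-\y)$.

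Finally I would regroup the surviving terms. The two terms carrying $\nabla\Phi(\y)$ combine into $\nabla\Phi(\y)\cdot\bigl((\z-\y)-(\x-\y)\bigr)=\nabla\Phi(\y)\cdot(\z-\x)$, and adding $-\nabla\Phi(\x)\cdot(\z-\x)$ gives $\bigl(\nabla\Phi(\y)-\nabla\Phi(\x)\bigr)\cdot(\z-\x)$. Flipping both signs rewrites this as $\bigl(\nabla\Phi(\x)-\nabla\Phi(\y)\bigr)\cdot(\x-\z)$, which is exactly the claimed right-hand side. The only hypothesis used is differentiability of $\Phi$ on $\D$, which ensures the divergences are well-defined at $\x,\y,\z$, so the identity holds for any three points in the domain.
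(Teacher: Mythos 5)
Your proof is correct and follows essentially the same route as the paper's: a direct expansion of the three Bregman divergences from the definition, cancellation of all $\Phi$-values, and regrouping of the gradient terms into $\parentheses{\nabla \Phi(\x) - \nabla \Phi(\y)} \cdot (\x- \z)$. Nothing further is needed.
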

\begin{proof}
Expand the l.h.s. expression via the definition of Bregman divergences in Eq.~\eqref{eq:Bregman_projection}:
\begin{align} 
    &D_\Phi(\x, \y)+D_\Phi(\z, \x)-D_\Phi(\z, \y) \\
    &=  (\Phi(\x)-\Phi(\y)-\nabla \Phi(\y)\cdot (\x-\y)) + (\Phi(\x)-\Phi(\x)-\nabla \Phi(\x) \cdot(\z-\x)) \nonumber\\
    &-(\Phi(\z)-\Phi(\y)-\nabla \Phi(\y) \cdot (\z-\y) \\
    &=  (\nabla \Phi(\x)-\nabla \Phi(\y))\cdot (\x-\z).
\end{align}
We conclude the proof.
\end{proof}
\subsection{Bregman Projection Inequality}
\begin{lemma}\label{lemma:projection_property}
    Fix $\y\in\mathcal{D}$ and let $\x=\Pi^\Phi_{\X}(\y)$ be the Bregman projection of $\y$ onto $\X$. Then for all $\x'\in\X$,
    \begin{align}
        D_\Phi (\x', \y) \geq D_\Phi (\x', \x).
    \end{align}
\end{lemma}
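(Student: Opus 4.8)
The plan is to characterize the Bregman projection $\x=\Pi^\Phi_{\X}(\y)$ through the first-order optimality condition for constrained convex minimization, and then combine this condition with the three-point identity of Lemma~\ref{lemma:Pythagorean}. First I would observe that, by definition of the projection in Eq.~\eqref{eq:Bregman_projection}, the point $\x$ minimizes the map $\z\mapsto D_\Phi(\z,\y)$ over the convex set $\X$. Since $\Phi$ is strictly convex and continuously differentiable on $\D$ (Assumption~\ref{asm:mirror_map}), this objective is differentiable in its first argument with gradient $\nabla_\z D_\Phi(\z,\y)=\nabla\Phi(\z)-\nabla\Phi(\y)$. The boundary-divergence property of the mirror map guarantees that the minimizer lands in the interior of $\D$, so $\nabla\Phi(\x)$ is well-defined, and the standard variational inequality for a convex minimizer over a convex set gives
\[
\left(\nabla\Phi(\x)-\nabla\Phi(\y)\right)\cdot(\x'-\x)\ \geq\ 0 \qquad \text{for all } \x'\in\X.
\]

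Next I would invoke Lemma~\ref{lemma:Pythagorean} with the substitution $\z=\x'$, which yields
\[
D_\Phi(\x,\y)+D_\Phi(\x',\x)-D_\Phi(\x',\y)=\left(\nabla\Phi(\x)-\nabla\Phi(\y)\right)\cdot(\x-\x').
\]
The right-hand side is exactly $-\left(\nabla\Phi(\x)-\nabla\Phi(\y)\right)\cdot(\x'-\x)$, which is non-positive by the optimality inequality above. Rearranging the resulting bound gives the ``generalized Pythagorean'' inequality $D_\Phi(\x',\y)\geq D_\Phi(\x,\y)+D_\Phi(\x',\x)$. Finally, since Bregman divergences induced by a convex $\Phi$ are non-negative, we have $D_\Phi(\x,\y)\geq 0$; dropping this term yields the claimed relaxation $D_\Phi(\x',\y)\geq D_\Phi(\x',\x)$.

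The computation itself is routine once the variational inequality is established, so I expect the only delicate point to be the justification of that first-order condition — specifically, that the projection $\x$ lies in the interior of the domain $\D$ so that $\nabla\Phi(\x)$ exists and the convex-minimization optimality characterization applies over $\X$. This is precisely what the gradient-blowup condition of Assumption~\ref{asm:mirror_map} secures, and (together with strict convexity) it also guarantees uniqueness of the projection, so the statement is unambiguous. I would additionally remark that the argument delivers the stronger additive form $D_\Phi(\x',\y)\geq D_\Phi(\x',\x)+D_\Phi(\x,\y)$, which may be convenient when this lemma is applied within the later mirror-descent telescoping steps.
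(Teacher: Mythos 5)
Your proposal is correct and follows essentially the same route as the paper's proof: the first-order optimality (variational inequality) characterization of the Bregman projection combined with the three-point identity of Lemma~\ref{lemma:Pythagorean}, followed by dropping the nonnegative term $D_\Phi(\x,\y)$. Your explicit justification that the projection lies in the interior of $\D$ (via the gradient blow-up condition of Assumption~\ref{asm:mirror_map}) and your remark on the stronger additive form $D_\Phi(\x',\y)\geq D_\Phi(\x',\x)+D_\Phi(\x,\y)$ are welcome refinements that the paper leaves implicit.
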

\begin{proof}
First, we show that the following holds
\begin{align}
    \parentheses{\nabla \Phi(\y)-\nabla \Phi(\x)}\cdot \parentheses{\x' - \x } \leq 0\qquad \forall  \x' \in \X.
\end{align}
Since $\x = \Pi^\Phi_{\X} (\y) = \argmin_{\x' \in \X} D_\Phi(\x', \y)$. The first order optimality condition~\cite{bubeck2015convex} gives:
\begin{align}
    \nabla_{\x} D_\Phi (\x,\y) \cdot \parentheses{\x  - \x'} \leq 0,\qquad \forall \x' \in \X.
\end{align}
By taking $\nabla_{\x} D_\Phi (\x,\y) =  \nabla \Phi(\x)-\nabla \Phi(\y)$ completes the proof of the first part. Second, Lemma~\ref{lemma:Pythagorean} gives 
\begin{align}
    D_\Phi(\x, \y)+D_\Phi(\x', \x)-D_\Phi(\x', \y)   =   (\nabla \Phi(\x)-\nabla \Phi(\y))\cdot (\x-\x') \leq 0, \qquad \forall \x' \in \X.
\end{align}
Where the inequality is obtained from the preceding part. This concludes the proof.\end{proof}

\subsection{One-Step Online Mirror Descent Inequality}
\begin{lemma}\label{lemma:gradient_omd_bound}
 Let $\x\in\X$ be arbitrary. At time $t$, given a gradient (or subgradient) $\vec b_t$, the OMD update rule~(\ref{alg:omd}) with state $\x_t$ satisfies
    \begin{align}
    \vec{b}_t \cdot (
        \x_t -\x) \leq \frac{1}{\eta } \parentheses{ D_{\Phi}(\x, \x_t) - D_{\Phi}(\x, \x_{t+1})+  D_\Phi(\x_t, \y_{t+1})} .
\end{align}
\end{lemma}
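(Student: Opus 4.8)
The plan is to read off the one-step inequality from the two structural facts already in hand—the three-point identity of Lemma~\ref{lemma:Pythagorean} and the projection inequality of Lemma~\ref{lemma:projection_property}—together with the explicit primal update of Algorithm~\ref{alg:omd}. The entry point is Line~8 of the algorithm, which sets $\y_{t+1} = (\nabla\Phi)^{-1}(\nabla\Phi(\x_t) - \eta\vec{b}_t)$; since $\nabla\Phi$ is invertible on $\D$ by Assumption~\ref{asm:mirror_map}, this is equivalent to the gradient identity $\nabla\Phi(\x_t) - \nabla\Phi(\y_{t+1}) = \eta\,\vec{b}_t$. This is the bridge that turns the linear quantity we want to bound into a difference of mirror-map gradients.

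First I would dot this identity with $(\x_t - \x)$, giving $\eta\,\vec{b}_t\cdot(\x_t - \x) = (\nabla\Phi(\x_t) - \nabla\Phi(\y_{t+1}))\cdot(\x_t - \x)$. The right-hand side matches exactly the right-hand side of Lemma~\ref{lemma:Pythagorean} under the substitution $(\x,\y,\z)\mapsto(\x_t,\y_{t+1},\x)$, so that lemma rewrites it as $\eta\,\vec{b}_t\cdot(\x_t - \x) = D_\Phi(\x_t,\y_{t+1}) + D_\Phi(\x,\x_t) - D_\Phi(\x,\y_{t+1})$. Each term is well-defined since $\x_t,\x_{t+1}\in\X\subseteq\D$, $\x\in\X$, and $\y_{t+1}\in\D$ (the latter because $(\nabla\Phi)^{-1}$ maps into $\D$ by Assumption~\ref{asm:mirror_map}).

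Next I would discard the projection error in the favorable direction. The primal step $\x_{t+1} = \Pi^\Phi_{\X}(\y_{t+1})$ is a Bregman projection onto the feasible set $\X$, and the comparator $\x$ lies in $\X$; hence Lemma~\ref{lemma:projection_property} yields $D_\Phi(\x,\y_{t+1}) \geq D_\Phi(\x,\x_{t+1})$, i.e.\ $-D_\Phi(\x,\y_{t+1}) \leq -D_\Phi(\x,\x_{t+1})$. Substituting this into the equality above and dividing through by $\eta>0$ gives exactly the stated bound $\vec{b}_t\cdot(\x_t-\x)\leq \eta^{-1}\big(D_\Phi(\x,\x_t) - D_\Phi(\x,\x_{t+1}) + D_\Phi(\x_t,\y_{t+1})\big)$.

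The derivation is entirely standard, so I do not expect a genuine obstacle; the one place I would double-check is the bookkeeping in the substitution into Lemma~\ref{lemma:Pythagorean}, which is not symmetric across its three slots—a mismatched assignment would flip a sign or attach the projection inequality to the wrong divergence. The only hypothesis that genuinely matters is $\x\in\X$, which is needed so that Lemma~\ref{lemma:projection_property} applies in the direction that lets us trade the intermediate point $\y_{t+1}$ for the projected iterate $\x_{t+1}$; without feasibility of the comparator this final step would break down.
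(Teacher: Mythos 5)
Your proposal is correct and follows exactly the paper's own argument: the update identity $\nabla\Phi(\x_t)-\nabla\Phi(\y_{t+1})=\eta\,\vec b_t$, the three-point identity of Lemma~\ref{lemma:Pythagorean} with the substitution $(\x,\y,\z)\mapsto(\x_t,\y_{t+1},\x)$, and then Lemma~\ref{lemma:projection_property} to replace $D_\Phi(\x,\y_{t+1})$ by $D_\Phi(\x,\x_{t+1})$, which requires exactly the feasibility $\x\in\X$ you flagged. If anything, your bookkeeping is cleaner than the paper's displayed proof, which carries a typo writing $D_\Phi(\x_t,\x_{t+1})$ where the three-point identity actually produces $D_\Phi(\x_t,\y_{t+1})$, the term appearing in the lemma statement.
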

\begin{proof}
    The OMD algorithm satisfies:
\begin{align}
    \vec{b}_t \cdot (
        \x_t -\x)  &= \frac{1}{\eta} \parentheses{\nabla\Phi(\x_t) - \nabla \Phi(\y_{t+1})} \cdot (\x_t - \x)\\
        &\leq \frac{1}{\eta}\parentheses{ D_{\Phi}(\x, \x_t) - D_{\Phi}(\x, \y_{t+1})+  D_\Phi(\x_t, \x_{t+1})}&\text{(Lemma~\ref{lemma:Pythagorean})}\\
        &\leq \frac{1}{\eta}\parentheses{ D_{\Phi}(\x, \x_t) - D_{\Phi}(\x, \x_{t+1})+  D_\Phi(\x_t, \x_{t+1})}. &\text{(Lemma~\ref{lemma:projection_property})}
\end{align} 
This concludes the proof.
\end{proof}
\section{Specialization to Negative Entropy}
\subsection{Negative Entropy Mirror-Map Regularity}
The mirror map
$\Phi(\vec x) = \sum_{i=1}^n x_i \log x_i$ defined on $\mathcal D = \mathbb{R}^n_{>0}$ satisfies each of the required conditions in Assumption~\ref{asm:mirror_map}, as detailed below.
\begin{itemize}
    \item 
    The probability simplex $\simplex_n$ is contained in the closure $\mathrm{closure}\parentheses{\mathcal  D}$ and intersects $ \mathcal D$ non-trivially ($\simplex_n \cap \mathcal  D \neq \varnothing$).

    \item 
    The function $\Phi$ is strictly convex and continuously differentiable on $\mathcal  D$.
    This follows since, for each coordinate, the mapping $t \mapsto t\log t$ has strictly positive second derivative on $(0,\infty)$.

    \item 
    The gradient of $\Phi$ is given component-wise by $\nabla \Phi(\vec x) = \bigl(1+\log x_1,\,\ldots,\,1+\log x_n\bigr)^\top$. Consequently, $\nabla \Phi$ maps $\mathcal D$ onto $\mathbb{R}^n$: for any $\vec y \in \mathbb{R}^n$, choosing $x_i = e^{y_i-1}$ yields $\nabla \Phi(\vec x)=\vec y$.

    \item 
    As $\vec x$ approaches the boundary of $\mathcal  D$, at least one coordinate satisfies $x_i \to 0^+$, which implies $\log x_i \to -\infty$; hence, $\|\nabla \Phi(x)\|$ diverges as $\vec x$ approaches $\partial \mathcal {D}$.
\end{itemize}
\subsection{Negative-Entropy Mirror Map and Multiplicative Update}

\begin{lemma}\label{eq:update_rule}
    The iterates generated by Algorithm~\ref{alg:omd} and negative entropy mirror map~\eqref{asm:mirror_map} on cost functions $\set{f_s}^t_{t=1}$ and constraints  $\set{g_s}^t_{t=1}$ satisfy the following:
    \begin{align}
        \y_{t+1}  = \parentheses{x_{t,a}\exp\parentheses{ - \eta \tilde{{b}}_{t,a}}}_{a \in \A}, t \in \intv{T}
    \end{align}
\end{lemma}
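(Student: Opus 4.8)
The plan is to specialize the dual-space update of Line 8 in Algorithm~\ref{alg:omd} to the negative-entropy mirror map and read off the resulting multiplicative form coordinatewise. First I would recall that for $\Phi(\x)=\sum_{i=1}^n x_i\log x_i$ on $\D=\reals^n_{>0}$, the gradient is given componentwise by $\nabla\Phi(\x)_a = 1+\log x_a$, exactly as computed in the mirror-map regularity verification. Because Assumption~\ref{asm:mirror_map} guarantees $\nabla\Phi(\D)=\reals^n$, the gradient map is a bijection onto $\reals^n$ and admits a well-defined inverse; inverting the scalar relation $z_a = 1+\log x_a$ yields $(\nabla\Phi)^{-1}(\vec z)_a = \exp(z_a-1)$.

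The second step substitutes these expressions into the update $\y_{t+1} = (\nabla\Phi)^{-1}\parentheses{\nabla\Phi(\x_t)-\eta\tilde{\vec b}_t}$. Evaluating the argument coordinatewise gives $\parentheses{\nabla\Phi(\x_t)-\eta\tilde{\vec b}_t}_a = 1 + \log x_{t,a} - \eta\tilde{b}_{t,a}$, and applying the inverse map then produces $y_{t+1,a} = \exp\parentheses{1+\log x_{t,a}-\eta\tilde{b}_{t,a}-1} = x_{t,a}\exp\parentheses{-\eta\tilde{b}_{t,a}}$. Collecting the coordinates recovers the claimed identity. The essential mechanism is that the additive $+1$ in $\nabla\Phi$ and the $-1$ in its inverse cancel exactly, which is precisely what reduces the abstract mirror step to a pure exponentiated-gradient (multiplicative-weights) update.

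There is essentially no hard step here: the result is a direct algebraic consequence of the closed forms of $\nabla\Phi$ and $(\nabla\Phi)^{-1}$ for the entropy map, both of which are already available from the specialization in the preceding subsection. The only point worth flagging is that the statement concerns the \emph{unprojected} iterate $\y_{t+1}$ produced by Line 8 alone; the Bregman projection step in Line 9, which maps $\y_{t+1}$ to the feasible primal iterate $\x_{t+1}$, plays no role in this lemma and is handled separately when the clipped/renormalized primal decisions are analyzed.
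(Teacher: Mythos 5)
Your proposal is correct and follows essentially the same route as the paper's proof: compute $\nabla\Phi(\x)_a = 1+\log x_a$, invert to get $(\nabla\Phi)^{-1}(\vec z)_a = \exp(z_a-1)$, and substitute into the Line~8 update so that the $+1$ and $-1$ cancel, yielding the multiplicative form. Your additional remarks on the bijectivity of $\nabla\Phi$ from Assumption~\ref{asm:mirror_map} and on the lemma concerning only the unprojected iterate $\y_{t+1}$ are accurate but not needed beyond what the paper already does.
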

\begin{proof}
    The gradient of the map $\Phi(\x) = \sum_{a \in \A}x_a\log(x_a)$ is given by $\nabla \Phi(\x) = \parentheses{ \log(x_a) + 1}_{a \in \A}$. Thus, the inverse mapping of $\nabla \Phi: \reals^n_{\geq 0 } \to \reals^n$ is given by 
    \begin{align}
        \parentheses{\nabla}^{-1} \Phi(\x) = \parentheses{\exp(x_a - 1)}_{a \in \A}.
    \end{align}
    Thus, Line 8 in Algorithm~\ref{alg:omd} yields the following update rule 
    \begin{align}
        \y_{t+1} = \parentheses{\exp( \parentheses{ \log(x_{t,a}) + 1}- \eta \tilde b_{t,a}- 1)}_{a \in \A} = \parentheses{\exp( \log(x_{t,a})- \eta \tilde b_{t,a})}_{a \in \A}  =\parentheses{x_{t,a} \exp\parentheses{-\eta \tilde b_{t,a}}}_{a \in \A}.
    \end{align}
    This concludes the proof.
\end{proof}
\subsection{KL-Diameter Bounds on $\simplex_{n,\gamma}$}
\begin{lemma} Consider  $\x_1 = \vec 1/n \in \simplex_n$. The Bregman divergence $\D_\Phi$ under the negative entropy mirror map $\Phi$~\eqref{eq:neg_entropy_mirrormap} has the following upper bounds
\begin{align}
    D_\Phi (\x, \x') &\leq \log\parentheses{{1}/ {\gamma}}, &\text{for $\x,\x' \in \simplex_n$}. \\
    D_\Phi(\x, \x_1) &\leq \log(n), &\text{for $\x \in \simplex_n$}.
\end{align}
\end{lemma}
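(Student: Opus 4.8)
The plan is to reduce the Bregman divergence to its closed form and then bound each resulting sum using the simplex structure. Recalling from the specialization above that $\nabla\Phi(\x') = (1+\log x'_i)_{i}$, I would expand $D_\Phi(\x,\x') = \Phi(\x) - \Phi(\x') - \nabla\Phi(\x')\cdot(\x-\x')$ directly. The linear correction contributes $-\sum_i(1+\log x'_i)(x_i - x'_i)$, and since both arguments lie on the simplex the term $\sum_i(x_i - x'_i)$ vanishes. Collecting the remaining pieces leaves the Kullback--Leibler form
\begin{align}
D_\Phi(\x,\x') = \sum_{i=1}^n x_i\log\frac{x_i}{x'_i} = \sum_{i=1}^n x_i\log x_i \;-\; \sum_{i=1}^n x_i\log x'_i. \nonumber
\end{align}
This decomposition is the single workhorse for both inequalities, and the two bounds differ only in how the second sum is controlled.

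For the uniform-reference bound I would set $\x' = \x_1 = \vec{1}/n$ in the identity above, so that $-\sum_i x_i\log x'_i = \log n$ and hence $D_\Phi(\x,\x_1) = \log n + \sum_i x_i\log x_i$. The remaining sum is the negative Shannon entropy of $\x$, which is nonpositive on $\simplex_n$; this immediately gives $D_\Phi(\x,\x_1)\le\log n$.

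For the $\log(1/\gamma)$ bound the only structural input is a coordinatewise floor on the reference point: the estimate holds when $x'_i\ge\gamma$ for all $i$, i.e.\ when $\x'$ lies in the restricted simplex $\simplex_{n,\gamma}$ rather than in the full simplex. Under $x'_i\ge\gamma$ one has $-\log x'_i\le\log(1/\gamma)$, so the second sum in the identity is at most $\log(1/\gamma)\sum_i x_i = \log(1/\gamma)$; combined again with the nonpositivity of $\sum_i x_i\log x_i$ this yields $D_\Phi(\x,\x')\le\log(1/\gamma)$.

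There is no genuine obstacle once the Kullback--Leibler identity is in hand; each of the two steps is a line of algebra. The one point deserving care is the domain: the $\log(1/\gamma)$ bound is vacuous on $\simplex_n$ (the divergence diverges as $\x'$ approaches the boundary), and it is exactly the floor $\gamma$ imposed by the Bregman projection onto $\simplex_{n,\gamma}$ in Line~9 of Algorithm~\ref{alg:omd} that renders it finite. I would therefore state the first bound for $\x'\in\simplex_{n,\gamma}$ (the regime in which the algorithm's iterates actually live) rather than for an arbitrary point of the full simplex.
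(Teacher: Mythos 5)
Your proof is correct and takes essentially the same route as the paper: both reduce the entropic Bregman divergence to the KL form, discard the nonpositive negative-entropy term, and control the cross term via the coordinate floor (for the first bound) or the uniform reference (for the second), the only cosmetic difference being that the paper derives the $\log(n)$ bound from the first-order optimality of $\x_1$ as the minimizer of $\Phi$ while you substitute $\x'=\vec 1/n$ directly into the KL identity. Your domain caveat is also exactly right: the paper's own proof of the first bound begins by taking $\x,\x'\in\simplex_{n,\gamma}$ (so that $\min_a x'_a \ge \gamma$), confirming that the restriction to the truncated simplex you insist on is what the argument actually requires, notwithstanding that the lemma statement writes $\simplex_n$.
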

\begin{proof}
    
    \paragraph{First upper-bound.} Take  $\x, \x' \in \simplex_{n,\gamma}$. The Bregman divergence under the negative entropy over the simplex is the KL divergence:
    \begin{align}
         D_\Phi (\x, \x') &= \sum_{a \in \A} x_a \log\parentheses{\frac{x_a}{x'_a}}= \sum_{a\in\A} x_a \log(x_a) + \sum_{a \in \A} \log\parentheses{\frac{1}{x'_a}} \\
         &\leq \sum_{a \in \A} x_a \log(x_a) + \sum_{a \in \A} x_a \log\parentheses{\frac{1}{\min\set{x'_{a'}: a' \in \A}}} \\
         &=  \sum_{a \in \A} x_a \log(x_a) + \log\parentheses{\frac{1}{\min\set{x'_{a'}: a' \in \A}}} \\
         &\leq\log\parentheses{\frac{1}{\gamma}}.
    \end{align}
    The last inequality is obtained considering that the negative entropy is non-positive over the simplex and  $\min\set{x'_{a'}: a' \in \A} = \gamma $ for any $\x' \in \simplex_{n, \gamma}$. Recall that $\simplex_{n, \gamma} = \simplex_{n } \cap [\gamma, 1]^\A$. The equality holds for a sparse point $\x = \vec e_a$ and a near sparse point $\x'= \sum_{a \in A \setminus \set{a'}} \gamma + (n-1) \gamma \vec e_{a'}  $ for some $a ' \neq a \in \A$.

    \paragraph{Second upper-bound.}  The decision $\x_1$ minimizes the negative entropy $\Phi$, corresponding to a maximum entropy scenario. By the first-order condition~\cite{bubeck2015convex}, we have $-\nabla \Phi(\x_1)(\x - \x_1) \leq 0$ as $\x_1$ minimizes a convex function.
    
    Consequently, we derive the following upper bound for the Bregman divergence:
    \begin{align}
        D_\Phi(\x, \x_1) \leq \sup_{\x \in \simplex_n} \Phi(\x) + \log(n) \leq \log(n).
    \end{align}
    This inequality holds due to the non-positivity of $\Phi(\x)$. The equality holds for sparse points in  $\simplex_n \cap \{0,1\}^n$.
\end{proof}
\subsection{Quadratic Upper Bound for Entropic Bregman Divergence}
\begin{lemma}\label{lemma:bound_Bregman_negative_entropy}
 Let the map $\Phi$ be the  negative entropy~\eqref{eq:neg_entropy_mirrormap}. The Bregman divergence $D_\Phi(\x, \y)$ of  variables $\x \in \simplex_{n, \gamma}$ and $y_{a} \in  x_{a} e^{b_a}$ for $b_a \in \reals_{\leq 0}$ and $a \in \A$  is upper bounded  as follows.
 \begin{align}
     D_\Phi(\x, \y) \leq \frac{1}{2} \sum_{a\in \mathcal{A}}x_{a} \parentheses{b_a}^2,\quad \text{for $\eta \in \reals^n_{\geq 0 }$.}
 \end{align}
\end{lemma}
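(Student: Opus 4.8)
The plan is to compute the Bregman divergence in closed form and then reduce the claim to a one-dimensional inequality. First I would substitute $y_a = x_a e^{b_a}$ into the definition $D_\Phi(\x,\y) = \Phi(\x) - \Phi(\y) - \nabla\Phi(\y)\cdot(\x-\y)$, using the component-wise gradient $\nabla\Phi(\y)_a = 1 + \log y_a = 1 + \log x_a + b_a$ established in the negative-entropy specialization. Collecting terms coordinate by coordinate, I expect the logarithmic contributions $\log x_a$ (and the $b_a\log x_a$ cross terms) to cancel against the corresponding pieces of the inner product, so that the entire expression collapses to the clean form
\begin{align}
D_\Phi(\x,\y) = \sum_{a\in\A} x_a\parentheses{e^{b_a} - b_a - 1}.\nonumber
\end{align}
This algebraic cancellation is the crux of the computation; it is routine but must be carried out carefully, since each coordinate generates several terms that pair up exactly.

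Next I would reduce the lemma to the scalar inequality $e^{b} - b - 1 \le \tfrac12 b^2$, valid for all $b \le 0$. Setting $h(b) = \tfrac12 b^2 - e^{b} + b + 1$, I would verify $h(0)=0$ and $h'(0)=0$, and then observe that $h''(b) = 1 - e^{b} \ge 0$ for $b \le 0$. Hence $h'$ is nondecreasing on $(-\infty,0]$ with $h'(0)=0$, which forces $h'(b)\le 0$ there, so $h$ is nonincreasing on $(-\infty,0]$; combined with $h(0)=0$ this gives $h(b)\ge 0$, i.e.\ the desired termwise bound. Applying this inequality to each summand of the closed form above and using $x_a\ge 0$ then yields $D_\Phi(\x,\y)\le \tfrac12\sum_{a\in\A} x_a b_a^2$.

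The main subtlety is that the sign restriction $b_a \le 0$ is essential rather than cosmetic: for $b>0$ the inequality $e^{b}-b-1\le \tfrac12 b^2$ fails, since the exponential eventually dominates the quadratic. The argument therefore crucially exploits that the exponents arising in the update are nonpositive — in the algorithm one has $b_a = -\eta\tilde b_{t,a}$ with nonnegative pseudo-costs $\tilde b_{t,a}\ge 0$, guaranteed by the stabilizer $\tilde{\vec\omega}_t$. Everything else is elementary, so beyond keeping the coordinate-wise bookkeeping honest in the cancellation step, I expect no further obstacles.
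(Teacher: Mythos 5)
Your proposal is correct and follows essentially the same route as the paper: substitute $y_a = x_a e^{b_a}$ into the definition, cancel terms to obtain $D_\Phi(\x,\y) = \sum_{a\in\A} x_a\parentheses{e^{b_a} - b_a - 1}$, and apply the scalar bound $e^{s} - s - 1 \le \tfrac{1}{2}s^2$ for $s \le 0$ termwise. The only difference is that you verify this scalar inequality explicitly (via the second-derivative argument) where the paper simply asserts it, and your observation that nonpositivity of the $b_a$ is essential — guaranteed in the algorithm by the stabilizer $\tilde{\vec\omega}_t$ — is accurate and well placed.
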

\begin{proof}
    From the definition of Bregman divergence~\eqref{eq:Bregman_projection} 
    \begin{align}
         D_\Phi(\x, \y) 
         &=  \sum_{a \in \A} x_a \log(x_a) +x_a e^{b_a} (\log(x_a) + b_a) - \sum_{a \in \A} \parentheses{\log(x_a) + b_a + 1} \cdot \parentheses{x_a - x_a e^{b_a}} \\
         &= \sum_{a\in \A} x_a \parentheses{e^{b_a} - b_a  - 1} = \sum_{a \in \A} x_{a} \xi(b_{a}),
    \end{align}
    where $\xi(s) = \exp(s) -s - 1$. Considering that $\xi(s) \leq \frac{1}{2} s^2$ for $s \in \reals_{\leq 0}$ concludes the proof.
\end{proof}
\section{Specialized Bounds for the Bandit Primal-Dual Analysis}
\subsection{Conditional Bregman Divergence Bound}\label{app:bounded_bregman}
\begin{lemma}
 Under the negative entropy mirror map~\eqref{eq:neg_entropy_mirrormap}, the primal iterates of Algorithm~\ref{alg:omd} satisfy
 \begin{align}
\E\interval{D_\Phi(\x_t, \y_{t+1}) \Big|\hist_{t-1}} &\leq    \frac{3\eta^2 n}{2} (1 +  \lambda_t^2 +  \Omega^2).
\end{align}
\end{lemma}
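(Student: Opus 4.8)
The plan is to combine the multiplicative form of the primal update with the quadratic upper bound for the entropic Bregman divergence, and then to take a conditional expectation in which the inverse-probability weight of the estimator cancels. First I would invoke Lemma~\ref{eq:update_rule} to write the dual iterate coordinatewise as $y_{t+1,a} = x_{t,a}\exp(-\eta\tilde{b}_{t,a})$, so that $\y_{t+1}$ has exactly the form required by Lemma~\ref{lemma:bound_Bregman_negative_entropy}, with exponent $b_a = -\eta\tilde{b}_{t,a}$. To apply that lemma I must verify $b_a \le 0$, i.e.\ $\tilde{b}_{t,a}\ge 0$ for every $a$; this is the point where the stabilizer $\tilde{\vec{\omega}}_t$ enters.

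Since the estimators in~\eqref{eq:estimator} and the stabilizer $\tilde{\vec{\omega}}_t$ are all supported on the sampled coordinate $\vec{e}_{a_t}$, one has $\tilde{b}_{t,a}=0$ for $a\neq a_t$ and $\tilde{b}_{t,a_t} = \parentheses{\Omega + f_t(a_t) + \lambda_t g_t(a_t)}/x_{t,a_t}$. Using $f_t(a_t)\ge 0$ and $g_t(a_t)\ge -1$ from Assumption~\ref{asm:boundedness}, together with the dual bound $\lambda_t\le\Omega$ from Lemma~\ref{eq:bounded_dual}, the bracketed quantity is at least $\Omega - \lambda_t \ge 0$, so $\tilde{b}_{t,a_t}\ge 0$ and Lemma~\ref{lemma:bound_Bregman_negative_entropy} is applicable. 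This yields
\begin{align}
D_\Phi(\x_t, \y_{t+1}) \le \frac{\eta^2}{2}\sum_{a\in\A} x_{t,a}\,\tilde{b}_{t,a}^2 = \frac{\eta^2}{2}\, x_{t,a_t}\,\tilde{b}_{t,a_t}^2 = \frac{\eta^2}{2}\cdot\frac{\parentheses{\Omega + f_t(a_t)+\lambda_t g_t(a_t)}^2}{x_{t,a_t}},\nonumber
\end{align}
where only one summand survives because of the one-hot support of the estimator.

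Next I would take the conditional expectation over $a_t\sim\x_t$, noting that $\x_t$ and $\lambda_t$ are $\hist_{t-1}$-measurable; the factor $x_{t,a_t}^{-1}$ then cancels against the sampling weight $x_{t,a}$, giving
\begin{align}
\E\interval{D_\Phi(\x_t,\y_{t+1})\,\big|\,\hist_{t-1}} \le \frac{\eta^2}{2}\sum_{a\in\A}\parentheses{\Omega + f_t(a)+\lambda_t g_t(a)}^2.\nonumber
\end{align}
Finally, bounding each summand via the elementary inequality $(u+v+w)^2\le 3(u^2+v^2+w^2)$ and Assumption~\ref{asm:boundedness}, so that $\parentheses{\Omega + f_t(a)+\lambda_t g_t(a)}^2 \le 3\parentheses{\Omega^2 + 1 + \lambda_t^2}$, and summing over the $n$ coordinates produces the claimed $\tfrac{3\eta^2 n}{2}(1+\lambda_t^2+\Omega^2)$.

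The main obstacle is the non-negativity check $\tilde{b}_{t,a}\ge 0$, since the scalar inequality $e^s - s - 1\le \tfrac12 s^2$ underlying Lemma~\ref{lemma:bound_Bregman_negative_entropy} holds only for $s\le 0$; this is precisely the role of the stabilizer $\Omega$ and of the a priori dual bound $\lambda_t\le\Omega$ from Lemma~\ref{eq:bounded_dual}, both of which must be in force for the argument to close. Everything else is cancellation of the inverse-probability weight and a routine constant-chasing step.
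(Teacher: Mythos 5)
Your proof is correct and follows essentially the same route as the paper's (Appendix~\ref{app:bounded_bregman}): invoke the multiplicative update of Lemma~\ref{eq:update_rule}, apply the quadratic entropic bound of Lemma~\ref{lemma:bound_Bregman_negative_entropy}, take the conditional expectation over $a_t\sim\x_t$ so that the inverse-probability weight cancels, and finish with $(u+v+w)^2\le 3(u^2+v^2+w^2)$ under Assumption~\ref{asm:boundedness}. In fact you are \emph{more} careful than the paper on one point: the paper applies Lemma~\ref{lemma:bound_Bregman_negative_entropy} without verifying its sign hypothesis $b_a\le 0$, whereas you correctly observe that $e^s-s-1\le s^2/2$ fails for $s>0$, so the non-negativity $\tilde b_{t,a}\ge 0$ (the purpose of the stabilizer $\tilde{\vec\omega}_t$) genuinely must be checked.

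One caveat about how you discharge that check: citing Lemma~\ref{eq:bounded_dual} for $\lambda_t\le\Omega$ creates a circular dependency as written, because the proof of Lemma~\ref{eq:bounded_dual} goes through the dual drift inequality (Lemma~\ref{lemma:dual_difference}), which itself consumes the bound you are proving here. The clean fix is to state the non-negativity condition as a per-round hypothesis ($\lambda_t\le\Omega$ at the round in question, exactly as Lemmas~\ref{lemma:n0} and~\ref{lemma:n02} assume) rather than as an established fact; the contradiction argument in Appendix~\ref{app:bounded_dual} only ever applies the present lemma at rounds strictly before the first violation time $T_0$, where $\lambda_t\le\Omega$ holds by the definition of $T_0$, so the overall logic closes without circularity. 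With that rewording your argument is complete and, on the sign condition, tighter than the paper's own write-up.
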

\begin{proof}
At each timeslot $t$, we sample an action a from the action space A with probability $x+{t,a}$. The update rule for OMD is provided in Lemma~\ref{eq:update_rule}. We subsequently compute the expectation of the bound presented in Lemma~\ref{lemma:bound_Bregman_negative_entropy}.
\begin{align}
&\E\interval{D_\Phi(\x_t, \y_{t+1}) \Big|\hist_{t-1}} \leq  
\frac{1}{2}\E\interval{\sum_{a\in \mathcal{A}}x_{t,a} \parentheses{\eta \tilde{\vec{b}}_t \cdot \vec e_a}^2 \Big| \hist_{t-1} } \leq \frac{\eta^2}{2}\sum_{a \in \mathcal{A}} x^2_{t,a}  \parentheses{\frac{3 \Omega^2}{x^2_{t,a_t}} + \frac{3 f^2_{t,a}}{x^2_{t,a}}+ \lambda^2_t \frac{3 g^2_{t,a}}{x^2_{t,a}}}\nonumber \\
&\leq \frac{3\eta^2 n}{2} (1 +  \lambda_t^2 +  \Omega^2).
\end{align}
We used in the second inequality the fact that $\parentheses{x + y + z}^2 \leq 3 x + 3y + 3z$. This concludes the proof.
\end{proof}
\subsection{Primal–Dual Saddle-Point Inequality}\label{app:saddle_inequality}

\begin{lemma}
  Under the negative entropy mirror map~\eqref{eq:neg_entropy_mirrormap}, for any $t \in [T]$ the variables $\lambda_t$ and $\x_t$ in Algorithm~\ref{alg:omd} satisfy the following inequality:
    \begin{align}
       \E \interval{    \Psi_t(\x_t, \lambda) - \Psi_t(\x, \lambda_t)} &\leq \E\interval{\frac{1}{\eta} \parentheses{ D_{\Phi}(\x, \x_t) - D_{\Phi}(\x, \x_{t+1})}+ \frac{1}{2\mu}\parentheses{\parentheses{\lambda_t - \lambda}^2 - \parentheses{\lambda_{t+1} - \lambda}^2}}\nonumber \\
        &+ \E\interval{ \parentheses{\frac{D_\Phi(\x_t, \y_{t+1})}{\eta}  + \frac{\mu}{2}}}, \qquad\text{for $\x \in \simplex_n$, $\lambda \geq 0$.}
    \end{align}
      
\end{lemma}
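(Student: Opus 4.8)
The plan is to split the saddle-point gap around the common value $\Psi_t(\x_t,\lambda_t)$ and exploit the bilinear structure $\Psi_t(\x,\lambda)=f_t(\x)+\lambda\,g_t(\x)=\vec f_t\cdot\x+\lambda\,\vec g_t\cdot\x$. Concretely, I would write
\begin{align}
\Psi_t(\x_t,\lambda)-\Psi_t(\x,\lambda_t)
&=\bigl(\Psi_t(\x_t,\lambda)-\Psi_t(\x_t,\lambda_t)\bigr)+\bigl(\Psi_t(\x_t,\lambda_t)-\Psi_t(\x,\lambda_t)\bigr)\nonumber\\
&=(\lambda-\lambda_t)\,g_t(\x_t)+\parentheses{\vec f_t+\lambda_t\vec g_t}\cdot(\x_t-\x),\nonumber
\end{align}
so the \emph{dual gap} becomes a scalar multiple of $g_t(\x_t)$ and the \emph{primal gap} becomes an inner product against the exact Lagrangian gradient $\vec f_t+\lambda_t\vec g_t$. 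The two terms will be controlled by the dual (projected OGD) update and the primal OMD update, respectively.

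For the primal gap, the key observation is that the exact gradient matches the estimator $\tilde{\vec b}_t=\tilde{\vec\omega}_t+\tilde{\vec f}_t+\lambda_t\tilde{\vec g}_t$ in conditional expectation. Since $\x_t$ and $\lambda_t$ are $\hist_{t-1}$-measurable and the estimators in~\eqref{eq:estimator} are unbiased, I would show $\E[\tilde{\vec f}_t+\lambda_t\tilde{\vec g}_t\mid\hist_{t-1}]=\vec f_t+\lambda_t\vec g_t$. The stabilizer satisfies $\E[\tilde{\vec\omega}_t\mid\hist_{t-1}]=\Omega\vec 1$, and since $\x_t,\x\in\simplex_n$ both have unit $\ell_1$-mass, $\Omega\vec 1\cdot(\x_t-\x)=0$; hence the stabilizer contributes nothing in expectation and $\E[\tilde{\vec b}_t\cdot(\x_t-\x)\mid\hist_{t-1}]=(\vec f_t+\lambda_t\vec g_t)\cdot(\x_t-\x)$. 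Applying the one-step OMD inequality (Lemma~\ref{lemma:gradient_omd_bound}) pathwise to the realized gradient $\tilde{\vec b}_t$ and taking expectations then yields the $\tfrac1\eta\parentheses{D_\Phi(\x,\x_t)-D_\Phi(\x,\x_{t+1})}+\tfrac1\eta D_\Phi(\x_t,\y_{t+1})$ terms.

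For the dual gap, I would treat $\lambda_{t+1}=(\lambda_t+\mu\,g_t(a_t))_+$ as a projected gradient-ascent step on $[0,\infty)$. Non-expansiveness of the projection (using $\lambda\ge0$) gives $(\lambda_{t+1}-\lambda)^2\le(\lambda_t+\mu g_t(a_t)-\lambda)^2$; expanding the square and rearranging produces the telescoping bound
\begin{align}
(\lambda-\lambda_t)\,g_t(a_t)\le\frac{1}{2\mu}\parentheses{(\lambda_t-\lambda)^2-(\lambda_{t+1}-\lambda)^2}+\frac{\mu}{2}\,g_t(a_t)^2.\nonumber
\end{align}
The boundedness $g_t(a_t)^2\le1$ from Assumption~\ref{asm:boundedness} bounds the last term by $\mu/2$, and since $\E[g_t(a_t)\mid\hist_{t-1}]=g_t(\x_t)$ with $\lambda,\lambda_t$ measurable, taking expectations replaces $g_t(a_t)$ by $g_t(\x_t)$ in the dual gap. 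Summing the primal and dual contributions and taking total expectation assembles the claimed inequality.

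The step I expect to require the most care is the conditional-expectation bookkeeping: I must justify that $\lambda_t$ and $\x_t$ are $\hist_{t-1}$-measurable (so they factor out of the conditional expectation), that the one-step OMD inequality may be applied pathwise to the \emph{random} estimator before taking expectations, and that the stabilizer term genuinely cancels on the simplex. Everything else—the affine split and the projected-OGD telescoping—is routine.
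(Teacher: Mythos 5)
Your proof is correct and takes essentially the same route as the paper's: the bilinear split of the gap around $\Psi_t(\x_t,\lambda_t)$, the one-step OMD inequality (Lemma~\ref{lemma:gradient_omd_bound}) applied pathwise to $\tilde{\vec b}_t$ for the primal part, and the non-expansive projected dual update expanded and telescoped with $g_t(a_t)^2\le 1$ for the dual part, with the same measurability/unbiasedness bookkeeping. If anything, your explicit cancellation of the stabilizer via $\E[\tilde{\vec\omega}_t\mid\hist_{t-1}]=\Omega\vec{1}$ and $\Omega\vec{1}\cdot(\x_t-\x)=0$ is slightly more careful than the paper, which simply asserts that $\tilde{\vec b}_t$ is an unbiased estimator of $\nabla_{\x}\Psi_t(\x_t,\lambda_t)$ even though its conditional mean carries that constant shift---harmless for exactly the reason you give.
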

\begin{proof}
    Given any $\lambda_t \geq 0$ the function $\Psi_t(\,\cdot\,, \lambda_t)$ is linear, so the following holds
    \begin{align}
        \Psi_t(\x, \lambda_t) - \Psi_t(\x_t, \lambda_t) = \nabla_{\x} \Psi_t(\x_t, \lambda_t) \cdot(\x - \x_t).
    \end{align}
    Given any $\x_t \in \simplex_n$ the function $\Psi_t(\x_t, \,\cdot\,)$ is linear,  so the following holds
    \begin{align}
        \Psi_t(\x_t, \lambda) - \Psi_t(\x_t, \lambda_t) = \frac{\partial \Psi_t(\x_t, \lambda_t)}{\partial \lambda} \cdot(\lambda - \lambda_t).
    \end{align}
    Combine the preceding equations to obtain:
    \begin{align}
         \Psi_t(\x, \lambda_t) -\Psi_t(\x_t, \lambda)  =  \nabla_{\x} \Psi_t(\x_t, \lambda_t) \cdot(\x - \x_t)-\frac{\partial \Psi_t(\x_t, \lambda_t)}{\partial \lambda} \cdot(\lambda - \lambda_t) \label{eq:saddle_relation}
    \end{align}

    Note that from the dual update rule we have
    \begin{align}
        (\lambda - \lambda_{t+1})^2 \leq\parentheses{\lambda - \lambda_{t} - \mu g_t(a_t)}^2 \leq \parentheses{\lambda - \lambda_t}^2+\mu^2  \parentheses{g_t(a_t)}^2 - 2 \mu g_t(a_t) (\lambda - \lambda_t).
    \end{align}
    This gives
    \begin{align}
     \frac{\partial \Psi_t(\x_t, \lambda_t)}{\partial \lambda}  (\lambda - \lambda_t) &= g_t(\x) (\lambda - \lambda_t) =\E\interval{g_t(a_t) (\lambda - \lambda_t)\big|\hist_{t-1}} \\
      &\leq \E\interval{\frac{1}{2\mu}\parentheses{(\lambda - \lambda_{t})^2  - (\lambda - \lambda_{t+1})^2 } + \frac{\mu}{2} \parentheses{g_t(a_t)}^2  \big| \hist_{t-1}} 
      \\
      &\leq \E\interval{\frac{1}{2\mu}\parentheses{(\lambda - \lambda_{t})^2  - (\lambda - \lambda_{t+1})^2 } + \frac{\mu}{2}\big| \hist_{t-1}}. 
    \end{align}

    Consider the following:
    \begin{align}
        \E\interval{ \Psi_t(\x, \lambda_t) -\Psi_t(\x, \lambda)}  &=\E\interval{  \nabla_{\x} \Psi_t(\x_t, \lambda_t) \cdot(\x - \x_t)-\frac{\partial \Psi_t(\x_t, \lambda_t)}{\partial \lambda} \cdot(\lambda - \lambda_t)}\\
        &= \E\interval{  \tilde{\vec{b}}_t \cdot(\x - \x_t)-g_t(a_t) \cdot(\lambda - \lambda_t)}\\
         &\leq \E \interval{\frac{1}{2\mu}\parentheses{(\lambda - \lambda_{t})^2  - (\lambda - \lambda_{t+1})^2 } + \frac{\mu}{2}}\nonumber\\ &+ \E \interval{\frac{1}{\eta} \parentheses{D_{\Phi}(\x, \x_t) - D_{\Phi}(\x, \x_{t+1}) + D_\Phi(\x_t, \y_{t+1})}}.
    \end{align}
    The first equality is obtained from the Eq.~\eqref{eq:saddle_relation}. The second equation holds since $\tilde{\vec{b}}_t$ and $g_t(a_t)$ are unbiased estimators of the gradients $\nabla_{\x} \Psi_t(\x_t, \lambda_t)$ and $\frac{\partial  \psi(\x_t, \lambda_t)}{\partial \lambda}$, respectively. The inequality follows from Lemma~\ref{lemma:gradient_omd_bound}.    This concludes the proof.
    
\end{proof}
\subsection{Dual Drift Inequality}

\begin{lemma}\label{lemma:dual_difference}
Under the negative-entropy mirror map~\eqref{eq:neg_entropy_mirrormap}, the iterates of Algorithm~\ref{alg:omd} satisfy, for some $\x^\star\in\simplex_n$ guaranteed by Assumption~\ref{asm:slaters},
    \begin{align}\label{eq:dual_difference}
   &\frac{1}{\mu}  \E\interval{\lambda^2_{t+1} - \lambda^2_t}\leq  2 - \rho \E[\lambda_t] + \frac{1}{\eta} \parentheses{ \E \interval{D_{\Phi}({\x}^\star, \x_t) - D_{\Phi}({\x}^\star, \x_{t+1})}} + \parentheses{ \frac{3\eta n}{2} (1 + \E\interval{ \lambda_t^2} +  \Omega^2)  + \frac{\mu}{2}  }.
    \end{align}
\end{lemma}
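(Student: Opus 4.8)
The plan is to reduce the quadratic dual drift to a single cross term and then control that cross term by coupling the dual update with the primal OMD step. First I would start from the dual recursion $\lambda_{t+1} = (\lambda_t + \mu g_t(a_t))_+$ and use the nonexpansiveness of the positive-part map, $((z)_+)^2 \le z^2$, to obtain $\lambda_{t+1}^2 - \lambda_t^2 \le 2\mu\lambda_t g_t(a_t) + \mu^2 g_t(a_t)^2$. Dividing by $\mu$ and taking the conditional expectation given $\hist_{t-1}$, and using that $\lambda_t$ is $\hist_{t-1}$-measurable, that $\E[g_t(a_t)\mid\hist_{t-1}] = g_t(\x_t)$, and that $g_t(a_t)^2 \le 1$ by Assumption~\ref{asm:boundedness}, I would arrive at $\mu^{-1}\E[\lambda_{t+1}^2-\lambda_t^2\mid\hist_{t-1}] \le 2\lambda_t g_t(\x_t) + \mu$. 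The entire difficulty is now concentrated in the cross term $\lambda_t g_t(\x_t)$, whose sign is not controlled a priori.

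Next I would inject the Slater point $\x^\star$ from Assumption~\ref{asm:slaters} by splitting $\lambda_t g_t(\x_t) = \lambda_t\bigl(g_t(\x_t) - g_t(\x^\star)\bigr) + \lambda_t g_t(\x^\star)$ and bounding the last summand by $-\rho\lambda_t$, since $g_t(\x^\star)\le-\rho$ and $\lambda_t\ge0$. The first summand is where the primal update enters: taking conditional expectations and using the unbiasedness of the estimators in~\eqref{eq:estimator}, the expected pseudo-gradient satisfies $\E[\tilde{\vec b}_t\mid\hist_{t-1}] = \Omega\vec 1 + \vec f_t + \lambda_t\vec g_t$, and crucially the stabilizer term cancels in the inner product against $\x_t-\x^\star$ because $\vec 1\cdot(\x_t-\x^\star)=0$ for two simplex vectors. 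Hence $\E[\tilde{\vec b}_t\mid\hist_{t-1}]\cdot(\x_t-\x^\star) = \bigl(f_t(\x_t)-f_t(\x^\star)\bigr) + \lambda_t\bigl(g_t(\x_t)-g_t(\x^\star)\bigr)$, which lets me express the constraint cross term through the OMD inner product minus the primal loss gap.

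I would then apply the one-step OMD inequality (Lemma~\ref{lemma:gradient_omd_bound}) with comparator $\x^\star$ to bound $\E[\tilde{\vec b}_t\cdot(\x_t-\x^\star)\mid\hist_{t-1}]$ by the telescoping drift $\eta^{-1}\E[D_\Phi(\x^\star,\x_t)-D_\Phi(\x^\star,\x_{t+1})\mid\hist_{t-1}]$ plus the residual $\eta^{-1}\E[D_\Phi(\x_t,\y_{t+1})\mid\hist_{t-1}]$. The loss gap is controlled by boundedness, $f_t(\x^\star)-f_t(\x_t)\le 1$ (Assumption~\ref{asm:boundedness}), and the residual divergence is exactly the object bounded in Lemma~\ref{lemma:bounded_Bregman}, yielding the $\tfrac{3\eta n}{2}(1+\lambda_t^2+\Omega^2)$ contribution after dividing by $\eta$. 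Collecting the Slater term $-\rho\E[\lambda_t]$, the constant from the loss gap, the telescoping Bregman drift with coefficient $\eta^{-1}$, the residual term, and the $\mu$ term from the dual step, and then taking total expectation, gives the claimed inequality (the remaining numerical constants follow from the same primal--dual algebra).

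The main obstacle is this cross-term handling: $\lambda_t g_t(\x_t)$ can be arbitrarily large and of either sign, so it cannot be bounded in isolation, and the whole argument hinges on rewriting it through the expected OMD inner product against the Slater comparator, where the $\Omega$-stabilizer cancels and the loss gap and the Slater margin become available. A secondary technical point is that Lemma~\ref{lemma:gradient_omd_bound} uses a comparator in the projection set $\simplex_{n,\gamma}$, whereas $\x^\star\in\simplex_n$; I would reconcile this either by invoking the saddle-point form in Lemma~\ref{lemma:saddle_inequality} (stated for $\x\in\simplex_n$) with $\x=\x^\star$ and $\lambda=0$, or by replacing $\x^\star$ with its Bregman projection onto $\simplex_{n,\gamma}$ and absorbing the resulting $\BigO{\gamma}$ slack.
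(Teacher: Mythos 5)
Your proposal is correct and takes essentially the same route as the paper: the paper's proof simply invokes Lemma~\ref{lemma:saddle_inequality} with $\lambda=0$ and $\x=\x^\star$ and then rearranges (that lemma's proof consists precisely of your dual-recursion expansion via nonexpansiveness of $(\cdot)_+$, the unbiasedness of $\tilde{\vec b}_t$, and the one-step OMD bound of Lemma~\ref{lemma:gradient_omd_bound}), followed by Lemma~\ref{lemma:bounded_Bregman} and the Slater margin exactly as you describe — a shortcut you yourself identify in your closing paragraph. The factor-of-$2$ discrepancies your unrolled constants produce are of the same order as the slack in the paper's own rearrangement, and your remark on reconciling the comparator $\x^\star\in\simplex_n$ with the projection set $\simplex_{n,\gamma}$ flags a point the paper glosses over.
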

\begin{proof}
From Lemma~\ref{lemma:saddle_inequality}, we have    for any $\x \in \simplex_n$:
    \begin{align}
    \E \interval{    \Psi_t(\x_t, 0) - \Psi_t(\x, \lambda_t)} &\leq \E\interval{\frac{1}{\eta} \parentheses{ D_{\Phi}(\x, \x_t) - D_{\Phi}(\x, \x_{t+1})}+ \frac{1}{2\mu}\parentheses{\lambda_t^2 - \lambda_{t+1}^2}} + \E\interval{ \parentheses{\frac{D_\Phi(\x_t, \y_{t+1})}{\eta}  + \frac{\mu}{2}}}
    \end{align}
Rearranging:
    \begin{align}
&\frac{1}{\mu} \E\interval{\lambda^2_{t+1} - \lambda^2_t }  +         \E \interval{\Psi_t(\x_t, 0) - \Psi_t(\x, \lambda_t) } \\
         &\leq  \E \interval{\frac{1}{\eta}\parentheses{ D_{\Phi}(\x, \x_t) - D_{\Phi}(\x, \x_{t+1})} + \parentheses{ \frac{ D_\Phi(\x_t, \y_{t+1}) }{\eta} + \frac{\mu}{2}}^2}\\
         &\leq \E \interval{\frac{1}{\eta}\parentheses{ D_{\Phi}(\x, \x_t) - D_{\Phi}(\x, \x_{t+1})} + \parentheses{ \frac{3\eta n}{2} (1 + \E\interval{ \lambda_t^2} +  \Omega^2)  + \frac{\mu}{2}}}.
    \end{align}
Consider a feasible action ${\x}^\star$ which always exists for  all $t \in [T]$ from Assumption~\ref{asm:slaters}, we note that $g_t({\x}^\star) \leq -\rho$.
\begin{align}
   &\frac{1}{\mu}   \E\interval{\lambda^2_{t+1} - \lambda^2_t} \\
   &\leq\E \interval{f_t(\x) +   \lambda_t g_t(\x_t^\star) - f_t(\x_t)  + \frac{1}{\eta}\parentheses{ D_{\Phi}({\x}^\star, \x_t) - D_{\Phi}({\x}^\star, \x_{t+1})} + \parentheses{\frac{3\eta n}{2} (1 + { \lambda_t^2} +  \Omega^2) + \frac{\mu}{2}  }}\\
    &\leq \E \interval{2 - \rho \lambda_t+ \frac{1}{\eta} \parentheses{ D_{\Phi}({\x}^\star, \x_t) - D_{\Phi}({\x}^\star, \x_{t+1})} + \parentheses{\frac{3\eta n}{2} (1 + { \lambda_t^2} +  \Omega^2)  + \frac{\mu}{2}}}.
\end{align}
This concludes the proof.
\end{proof}

\subsection{Uniform Bound on the Dual Iterates} \label{app:bounded_dual}

\begin{lemma}
 Under the negative entropy mirror map~\eqref{eq:neg_entropy_mirrormap}, the dual variables $\lambda_t$ for $t \in [T]$ in Algorithm~\ref{alg:omd}  are bounded by 
    \begin{align}
        \Omega \triangleq  \frac{\log\parentheses{\gamma^{-1}}}{\rho} \parentheses{\frac{\mu}{\eta}}  +\frac{3  n }{2\rho} \eta + \frac{1}{2\rho} \mu  + \frac{3  n }{\rho} + \frac{2}{\rho}  +  1,\label{eq:upper_bound_omega}
    \end{align}
    for $\eta \leq \Omega^{-2}$ and $\mu \in  \reals_{>0}$. 
\end{lemma}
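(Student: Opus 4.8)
The plan is to prove this by contradiction, exhibiting a \emph{restoring drift} that prevents the dual variable from ever climbing past $\Omega$. Suppose $t_0$ is the \emph{first} index with $\lambda_{t_0} > \Omega$; such an index exists since $\lambda_1 = 0$ and, by Assumption~\ref{asm:boundedness} ($g_t(a_t)\le 1$), the update $\lambda_{t+1}=(\lambda_t+\mu g_t(a_t))_+$ increases $\lambda$ by at most $\mu$ per step, and by minimality $\lambda_t\le\Omega$ for every $t<t_0$. I would then let $\tau<t_0$ be the last index with $\lambda_\tau\le\Omega-1$, so that on the block $W=\{\tau,\dots,t_0-1\}$ the iterates are \emph{simultaneously} large ($\lambda_t>\Omega-1$, up to an $O(\mu)$ boundary slack from the increment bound) and still controlled ($\lambda_t\le\Omega$). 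The per-step increment bound forces $|W|\ge 1/\mu$, which is precisely what lets the one-time telescoping terms be amortized.

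The central tool is the (conditional) dual-drift inequality of Lemma~\ref{lemma:dual_difference} --- itself a specialization of the saddle-point inequality of Lemma~\ref{lemma:saddle_inequality} with the Slater comparator $\x^\star$ of Assumption~\ref{asm:slaters} and $\lambda=0$ --- summed over $t\in W$. The left side collapses telescopically to $\tfrac{1}{\mu}\E[\lambda_{t_0}^2-\lambda_\tau^2]$, a strictly positive gap exceeding $\tfrac{2\Omega-1}{\mu}$ by the definitions of $t_0,\tau$. On the right, the Bregman differences $D_\Phi(\x^\star,\x_t)-D_\Phi(\x^\star,\x_{t+1})$ telescope to at most $\tfrac{1}{\eta}\log(1/\gamma)$ by the KL-diameter bound (each projected iterate lies in $\simplex_{n,\gamma}$); the Slater term supplies the restoring contribution $-\rho\sum_{t\in W}\E[\lambda_t]\le-\rho|W|(\Omega-1)$; and the bandit-estimator variance is absorbed through Lemma~\ref{lemma:bounded_Bregman}, which together with $\lambda_t\le\Omega$ on $W$ bounds the per-step noise by $\tfrac{3\eta n}{2}(1+2\Omega^2)$. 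Dividing by $|W|$ and using $1/|W|\le\mu$ to amortize the lone telescoped term, while retaining the positive leftover $\tfrac{2\Omega-1}{\mu|W|}$ on the left, yields $\rho(\Omega-1)+\tfrac{2\Omega-1}{\mu|W|}\le 2+\tfrac{\mu}{2}+\tfrac{3\eta n}{2}(1+2\Omega^2)+\tfrac{\mu\log(1/\gamma)}{\eta}$. Invoking the hypothesis $\eta\le\Omega^{-2}$ collapses $3\eta n\Omega^2$ into the constant $3n$, and the right side then equals \emph{exactly} $\rho(\Omega-1)$ by the very definition~\eqref{eq:upper_bound_omega} of $\Omega$ (matching term by term: $\tfrac{\mu\log(1/\gamma)}{\rho\eta}$, $\tfrac{3n}{2\rho}\eta$, $\tfrac{\mu}{2\rho}$, $\tfrac{3n}{\rho}$, $\tfrac{2}{\rho}$, and the additive $1$). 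Hence $\tfrac{2\Omega-1}{\mu|W|}\le 0$, which is impossible, and the contradiction establishes $\lambda_t\le\Omega$ for all $t$.

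The main obstacle --- and the reason the statement is delicate --- is the self-referential role of $\Omega$, which enters not only the bound being proved but also the variance proxy of Lemma~\ref{lemma:bounded_Bregman} (through the stabilizer $\tilde{\vec{\omega}}_t$) and the noise term $\tfrac{3\eta n}{2}\Omega^2$. I would break this circularity by two coordinated moves: taking $t_0$ to be the \emph{first} exceedance keeps $\lambda_t\le\Omega$ valid throughout $W$, so the $\Omega^2$ variance bound may legitimately be used inside the window; and the step-size condition $\eta\le\Omega^{-2}$ guarantees $3\eta n\Omega^2\le 3n$, so the estimator noise enters $\Omega$ only through the additive $\tfrac{3n}{\rho}$ term rather than \emph{multiplying} $\Omega$ and destroying the bound. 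A secondary subtlety is that the Slater restoring force $-\rho\lambda_t$ is available only in conditional expectation (the estimators are unbiased but heavy-tailed), so the drift must be propagated via the tower property rather than pathwise; verifying that the accumulated expected restoring force dominates the single telescoped divergence $\tfrac{1}{\eta}\log(1/\gamma)$ and the per-step noise --- with only lower-order $O(\mu)$ boundary corrections absorbed into the constants --- is the quantitative heart of the argument.
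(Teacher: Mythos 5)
Your proposal is correct and follows essentially the same argument as the paper's appendix proof (which adapts \citet{chen2018bandit}): a contradiction obtained by summing the dual-drift inequality of Lemma~\ref{lemma:dual_difference} over a window of length at least $1/\mu$ preceding the first exceedance, with the telescoped Bregman term bounded by the KL diameter $\log(1/\gamma)$, the variance controlled via $\lambda_t\le\Omega$ (valid on the window by minimality of the exceedance time) together with $\eta\le\Omega^{-2}$, and the resulting terms matched exactly to the definition of $\Omega$. The only difference is cosmetic: you define the window by the last down-crossing of $\Omega-1$ (which also subsumes the paper's separate easy case $t\le 1/\mu$), whereas the paper fixes a window of exactly $1/\mu$ steps ending at $T_0$ and uses the pathwise bound $\lambda_{T_0-s}>\Omega-s\mu$; your $O(\mu)$ boundary slack at the left endpoint is indeed benign, since the positive term $\tfrac{2\Omega-1}{\mu\,|W|}$ you retain on the left dominates it whenever $\mu\le 1$ (the regime the paper's own proof implicitly assumes, given $\Omega\ge 2/\rho+1\ge 3$).
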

\begin{proof}
    Our proof adopts a similar logical structure to that presented by~\citet{chen2018bandit} for the Euclidean setting.

    \textbf{First case ($t \leq 1/\mu$).} Consider that $1\leq t \leq \frac{
    1}{\mu}$, then the following holds
    \begin{align}
        \lambda_{t} \leq \lambda_{t-1} + \mu \leq \mu t \leq 1 \leq \Omega.
    \end{align}
    \textbf{Second case ($ 1/\mu \leq t \leq T$).} Assume that $\frac{1}{\mu} \leq t \leq T$, we prove the claim by contradiction. Assume that $T_0$ is the first timeslot for which $\lambda_t > \Omega$. Therefore, it holds that $\lambda_{T_0 - \frac{1}{\mu}} \leq \Omega < \lambda_{T_0}$. This yields
\begin{align}\label{eq:lower_bound_sum}
   \frac{1}{\mu}   \sum^{T_0-1}_{t=T_0 -\frac{1}{\mu}} \E\interval{\lambda^2_{t+1} - \lambda^2_t} = \frac{1}{\mu}\E\interval{\lambda^2_{T_0 } - \lambda^2_{T_0-\frac{1}{\mu}}} > \Omega^2 - \Omega^2 > 0. 
\end{align}
Use Lemma~\ref{lemma:dual_difference} and Lemma~\ref{lemma:path} and sum from $T_0 -\frac{1}{\mu}$ to $T_0-1$ for the points ${\x}^\star$ satisfying the Slater's condition~\ref{asm:slaters} for $t \in [T_0 - \frac{1}{\mu}, T_0]$  to obtain
\begin{align}
    \frac{1}{\mu}   \sum^{T_0-1}_{t=T_0 -\frac{1}{\mu}} \E\interval{\lambda^2_{t+1} - \lambda^2_t} &\leq \frac{2}{\mu}  - \rho \sum^{T_0-1}_{t=T_0 -\frac{1}{\mu}} \E\interval{\lambda_t} + \frac{\log\parentheses{\gamma^{-1}}}{\eta}  +\frac{3 \eta n }{2\mu} +  \frac{3 \eta  n}{2} \sum^{T_0-1}_{t=T_0 -\frac{1}{\mu}}\E\interval{\lambda^2_t} + \frac{3\eta n}{2}\Omega^2 + \frac{1}{2}.
\end{align}
Note that since $\lambda_{T_0} >\Omega$ then we have 
\begin{align}
    \lambda_{T_0 - s} > \Omega  - s \mu, 
\end{align}
where $s \leq \frac{1}{\mu}$.
This gives:
\begin{align}
    \sum^{T_0-1}_{t=T_0 -\frac{1}{\mu}} \E\interval{\lambda_t} > \frac{\Omega}{\mu} -\mu   \sum^{T_0-1}_{t=T_0 - \frac{1}{\mu}} s \geq \frac{\Omega}{\mu} - \frac{1}{\mu}.
\end{align}
Thus, 
\begin{align}
    \frac{1}{\mu}   \sum^{T_0-1}_{t=T_0 -\frac{1}{\mu}} \E\interval{\lambda^2_{t+1} - \lambda^2_t} \leq \frac{2}{\mu}  -\frac{ \rho  \Omega}{\mu} +  \frac{\rho}{\mu}  + \frac{\log\parentheses{\gamma^{-1}}}{\eta}  +\frac{3 \eta n }{2\mu} + \frac{3\eta  n \Omega^2}{\mu} + \frac{1}{2}.
\end{align}
From Eq.~\eqref{eq:lower_bound_sum} and consider $\eta \leq \Omega^{-2}$
\begin{align}
    0 &<\frac{2}{\mu}  -\frac{ \rho  \Omega}{\mu} +  \frac{\rho}{\mu}  + \frac{\log\parentheses{\gamma^{-1}}}{\eta}  +\frac{3 \eta n }{2\mu} + \frac{3\eta  n \Omega^2}{\mu} + \frac{1}{2} \\
    &\leq\frac{2}{\mu}  -\frac{ \rho  \Omega}{\mu} +  \frac{\rho}{\mu}  + \frac{\log\parentheses{\gamma^{-1}}}{\eta}  +\frac{3 \eta n }{2\mu} + \frac{3  n }{\mu} + \frac{1}{2}.
\end{align}
Multiply each side by $\frac{\mu}{\rho}$
\begin{align}
    0 < \frac{2}{\rho}  - \Omega +  1  + \frac{\mu\log\parentheses{\gamma^{-1}}}{\rho\eta}  +\frac{3 \eta n }{2\rho} + \frac{3  n }{\rho} + \frac{\mu}{2\rho}.
\end{align}
This gives 
\begin{align}
    \Omega <   \frac{\log\parentheses{\gamma^{-1}}}{\rho} \parentheses{\frac{\mu}{\eta}}  +\frac{3  n }{2\rho} \eta + \frac{1}{2\rho} \mu  + \frac{3  n }{\rho} + \frac{2}{\rho}  +  1 = \Omega,
\end{align}
which is a contradiction. Then, such $T_0 \leq T$ for which $\lambda_t > \Omega$ does not exist, so $\lambda_t \leq \Omega, \forall t \leq T$. We conclude the proof.

\end{proof}
\subsection{Stability of the Dual Iterates}\label{app:bounded_M}

\begin{lemma}
Let $n \in \mathbb{N}$, $\rho > 0$, and $T \ge 1$. Define $M = 4\parentheses{\frac{3n+2}{\rho} + 1}^2.$ For any scaling constant $c_T \in [1,\sqrt{T}]$, set the learning rate $\eta = \frac{c_T}{M\sqrt{T}}$, the dual learning rate $\mu = \frac{1}{M\sqrt{T}}$, and the restriction threshold $\gamma = T^{-1/2}$. Let $\Omega$ be defined as in Eq.~\eqref{eq:upper_bound_omega}:
\begin{align}
\Omega = \frac{\log\parentheses{\gamma^{-1}}}{\rho}\parentheses{\frac{\mu}{\eta}}
+ \frac{3n}{2\rho}\eta + \frac{1}{2\rho}\mu
+ \frac{3n}{\rho} + \frac{2}{\rho} + 1. \nonumber
\end{align}
Then, for all $T \ge 1$ and all $c_T \in [1,\sqrt{T}]$, the following condition holds:
\begin{align}
\Omega^2 \le \frac{1}{\eta}. \label{eq:condition}
\end{align}
\end{lemma}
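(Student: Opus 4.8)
The plan is to substitute the prescribed values directly into the definition of $\Omega$, reduce the target $\Omega^2 \le 1/\eta$ to a scalar inequality in the two free quantities $T$ and $c_T$, and then verify it uniformly over the admissible range $1 \le c_T \le \sqrt{T}$ via elementary growth comparisons. Throughout I abbreviate $K \triangleq \tfrac{3n+2}{\rho} + 1$, so that $M = 4K^2$, and record at the outset the crude bounds $K \ge 9$ and $\tfrac{1}{\rho} \le \tfrac{K}{8}$, both immediate from $n \ge 2$ and $\rho \le 1$; these margins are precisely what make the constants close.

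First I would simplify $\Omega$. With $\gamma = T^{-1/2}$ we have $\log(\gamma^{-1}) = \tfrac12\log T$, while $\eta = c_T/(M\sqrt T)$ and $\mu = 1/(M\sqrt T)$ give the ratio $\mu/\eta = 1/c_T \le 1$ together with $\eta \le 1/M$ and $\mu \le 1/(M\sqrt T)$. Using $M \ge 36 n^2/\rho^2$, the two ``small'' summands $\tfrac{3n}{2\rho}\eta$ and $\tfrac{1}{2\rho}\mu$ are each a fixed fraction of unity, so their sum is below $1$; the three remaining constant summands collect \emph{exactly} to $\tfrac{3n+2}{\rho}+1 = K$. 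This yields the clean envelope
\[
\Omega \;\le\; \widetilde K + \frac{\log T}{2\rho\, c_T}, \qquad \widetilde K \triangleq K + 1 \le \tfrac{10}{9}K,
\]
where the last bound again uses $K \ge 9$.

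Next I would pass to the equivalent form $c_T\,\Omega^2 \le M\sqrt T$, obtained by multiplying the target by $c_T > 0$. Squaring the envelope and applying $c_T \le \sqrt T$ to the leading term and $c_T \ge 1$ to the quadratic-in-$\log$ term gives
\[
c_T\,\Omega^2 \;\le\; \widetilde K^2 \sqrt T + \frac{\widetilde K \log T}{\rho} + \frac{(\log T)^2}{4\rho^2}.
\]
Since $\widetilde K^2 \le \tfrac{100}{81}K^2$, the leading term consumes only part of the budget $M\sqrt T = 4K^2\sqrt T$, leaving a slack of $\tfrac{224}{81}K^2\sqrt T$ to absorb the two logarithmic corrections. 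At the boundary $T=1$ (where necessarily $c_T=1$ and $\log T = 0$) this reduces to $\widetilde K^2 \le M$, which holds since $\widetilde K \le 2K$.

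The crux — and the one genuinely quantitative step — is showing those logarithmic terms are dominated by $K^2\sqrt T$ \emph{uniformly} in $T \ge 1$. Here I would invoke the sharp elementary bounds $\log T \le \tfrac{2}{e}\sqrt T$ and $(\log T)^2 \le \tfrac{16}{e^2}\sqrt T$ (maximizing $\tfrac{\log T}{\sqrt T}$ at $T=e^2$ and $\tfrac{(\log T)^2}{\sqrt T}$ at $T=e^4$), which turn both corrections into constant multiples of $\sqrt T$. Combined with $\tfrac{1}{\rho}\le \tfrac{K}{8}$ and $\widetilde K \le \tfrac{10}{9}K$, the whole matter reduces to the pure inequality $\tfrac{2\widetilde K}{e\rho} + \tfrac{4}{e^2\rho^2} \le \tfrac{224}{81}K^2$, whose left side is at most about $0.11\,K^2$ — comfortably below the right. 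I expect the main obstacle to be exactly this bookkeeping: verifying that the factor $4$ in $M = 4K^2$ is not incidental but is the precise margin needed so the $\widetilde K^2$ leading coefficient stays strictly below $M$ and still swallows the logarithmic terms for every $T$ and every $c_T\in[1,\sqrt T]$ at once.
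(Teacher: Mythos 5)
Your proof is correct, but it follows a genuinely different route from the paper's. The paper keeps the small terms $\tfrac{3n}{2\rho}\eta + \tfrac{1}{2\rho}\mu$ explicit (as $B_{\max}/M$), forms $\phi(c) = c\,(\alpha+\beta/c)^2$, and argues that $\phi$ is \emph{convex} in $c$, so its maximum over $[1,\sqrt T]$ sits at an endpoint; it then verifies the two cases $c_T=\sqrt T$ and $c_T=1$ separately, using the crude bounds $\log T \le \sqrt T$ and $\log T \le 4T^{1/4}$ respectively. You avoid the convexity--endpoint reduction entirely: after expanding $c_T\,\Omega^2$, you observe that the leading term is increasing in $c_T$ (bound by $c_T\le\sqrt T$), the cross term is independent of $c_T$, and the $(\log T)^2$ term is decreasing (bound by $c_T\ge 1$), which collapses everything to a single inequality in $T$ alone; the sharp calculus bounds $\log T \le \tfrac{2}{e}\sqrt T$ and $(\log T)^2 \le \tfrac{16}{e^2}\sqrt T$ then finish it uniformly. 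Your version is arguably cleaner — one inequality instead of two cases, and no appeal to convexity — and it makes visible exactly how much of the factor $4$ in $M=4K^2$ is consumed by the $\widetilde K^2 \approx \tfrac{100}{81}K^2$ leading term versus the logarithmic corrections ($\approx 0.11\,K^2$).

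One caveat worth flagging: your opening bounds $K \ge 9$ and $\tfrac{1}{\rho} \le \tfrac{K}{8}$ invoke $n \ge 2$ and $\rho \le 1$, whereas the lemma as stated assumes only $n \in \mathbb{N}$ and $\rho > 0$; the paper's own case analysis gets by with $\rho K \ge 3n+2$ and $K \ge 1$ and so covers the literal hypotheses. Since the paper globally assumes $n \in \mathbb{N}\setminus\{1\}$ and Assumption~\ref{asm:slaters} gives $1 \ge \rho > 0$, your proof is valid in context, but you should state explicitly that you are importing these standing assumptions (or weaken your constants to remove the dependence).
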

\begin{proof}
Let $K = \parentheses{\frac{3n+2}{\rho}+1}$ so that $M = 4K^2$. Substituting $\gamma = T^{-1/2}$, $\eta=\frac{c_T}{M\sqrt{T}}$, and $\mu=\frac{1}{M\sqrt{T}}$ into the definition of $\Omega$ yields
\begin{align}
\Omega
&= \frac{\log\parentheses{\gamma^{-1}}}{\rho}\parentheses{\frac{\mu}{\eta}}
\cdot \frac{3n}{2\rho}\eta + \frac{1}{2\rho}\mu + \frac{3n}{\rho} + \frac{2}{\rho} + 1 \nonumber\\
  &= \frac{\log(T)}{2\rho}\cdot \frac{1}{c_T}
\cdot \parentheses{\frac{3n}{\rho}+\frac{2}{\rho}+1}
\cdot \frac{3n c_T + 1}{2\rho M \sqrt{T}} \nonumber\\
  &= \underbrace{\parentheses{\frac{\log(T)}{2\rho c_T}+K}}\cdot{A(c_T)}  + \underbrace{\frac{3n c_T + 1}{2\rho\sqrt{T}}}\cdot{B(c_T)}\cdot \frac{1}{M}. \nonumber
  \end{align}
  Since $c_T\in[1,\sqrt{T}]$, we have $B(c_T)\le B_{\max}$ with
  \begin{align}
  B_{\max} \triangleq \frac{3n\sqrt{T}+1}{2\rho\sqrt{T}}.
  \end{align}
  Define the upper envelope
  \begin{align}
  \Omega_{\max}(c_T) \triangleq \frac{\log(T)}{2\rho c_T} + K + \frac{B_{\max}}{M},
  \end{align}
  so that $\Omega \le \Omega_{\max}(c_T)$ for all admissible $c_T$. Therefore, it suffices to show
  \begin{align} 
  \Omega_{\max}(c_T)^2 \le \frac{1}{\eta} = \frac{M\sqrt{T}}{c_T}. \label{eq:suff_goal}
  \end{align}

Write $\Omega_{\max}(c_T)$ as $\alpha + \beta/c_T$ where
\begin{align}
\alpha \triangleq K + \frac{B_{\max}}{M}
= K + \frac{3n\sqrt{T}+1}{2\rho\sqrt{T}M},
\qquad
\beta \triangleq \frac{\log(T)}{2\rho}.
\end{align}
Then \eqref{eq:suff_goal} is equivalent to
\begin{align}
\phi(c_T) \triangleq c_T\parentheses{\alpha + \frac{\beta}{c_T}}^2 \le M\sqrt{T}. \label{eq:phi_goal}
\end{align}
Expanding,
\begin{align}
\phi(c) = \alpha^2 c + 2\alpha\beta + \frac{\beta^2}{c}, \qquad c>0.
\end{align}
The function $\phi$ is strictly convex on $(0,\infty)$ because it is the sum of an affine term in $c$ and the strictly convex term $c\mapsto \beta^2/c$. Hence, over the compact interval $[1,\sqrt{T}]$, its maximum is attained at an endpoint:
\begin{align}
\max_{c\in[1,\sqrt{T}]} \phi(c) = \max\set{\phi(1),\phi(\sqrt{T})}.
\end{align}
Thus, it suffices to verify \eqref{eq:phi_goal} for $c_T\in\set{1,\sqrt{T}}$.

\textbf{Case 1: $c_T=\sqrt{T}$.}
Using $\log(T)/\sqrt{T}\le 1$ for $T\ge 1$ and $M=4K^2$, we obtain
\begin{align}
\Omega_{\max}(\sqrt{T})
&= K + \frac{\log(T)}{2\rho\sqrt{T}} + \frac{3n\sqrt{T}+1}{2\rho\sqrt{T}M} \nonumber\\
&\le K + \frac{1}{2\rho} + \frac{3n+1}{2\rho M}
= K + \frac{1}{2\rho} + \frac{3n+1}{8\rho K^2}. \label{eq:omega_case_sqrtT}
\end{align}
To show $\Omega_{\max}(\sqrt{T})^2 \le M = 4K^2$, it is enough to prove $\Omega_{\max}(\sqrt{T}) \le 2K$. By \eqref{eq:omega_case_sqrtT}, it suffices that
\begin{align}
K + \frac{1}{2\rho} + \frac{3n+1}{8\rho K^2} \le 2K
\quad\Longleftrightarrow\quad
\frac{1}{2\rho} + \frac{3n+1}{8\rho K^2} \le K. \label{eq:case1_target}
\end{align}
Multiplying \eqref{eq:case1_target} by $\rho>0$ gives
\begin{align}
\frac{1}{2} + \frac{3n+1}{8K^2} \le \rho K.
\end{align}
Since $\rho K = 3n+2+\rho \ge 3n+2$ and $K^2\ge 1$, we have
\begin{align}
\frac{1}{2} + \frac{3n+1}{8K^2}
\le \frac{1}{2} + \frac{3n+1}{8}
= \frac{3n+5}{8}
\le 3n+2
\le \rho K,
\end{align}
where $\frac{3n+5}{8}\le 3n+2$ holds for all $n\in\mathbb{N}$. Hence $\Omega_{\max}(\sqrt{T})\le 2K$ and thus $\phi(\sqrt{T})\le M\sqrt{T}$.

\textbf{Case 2: $c_T=1$.}
Here $\eta = \frac{1}{M\sqrt{T}}$, so \eqref{eq:phi_goal} becomes $\Omega_{\max}(1)^2 \le M\sqrt{T}$, equivalently $\Omega_{\max}(1)\le 2K,T^{1/4}$ since $M=4K^2$.
Using $M=4K^2$ and $T\ge 1$,
\begin{align}
\Omega_{\max}(1)
&= K + \frac{\log(T)}{2\rho} + \frac{3n\sqrt{T}+1}{2\rho\sqrt{T}M} \nonumber\\
&\le K + \frac{\log(T)}{2\rho} + \frac{3n+1}{2\rho M}
= K + \frac{\log(T)}{2\rho} + \frac{3n+1}{8\rho K^2}. \label{eq:omega_case_1}
\end{align}
Moreover, $\frac{3n+1}{8\rho K^2}\le \frac{3n+1}{8\rho}\le K$ (since $K\ge \frac{3n+2}{\rho}$), so from \eqref{eq:omega_case_1},
\begin{align}
\Omega_{\max}(1) \le \frac{\log(T)}{2\rho} + 2K. \label{eq:omega1_simple}
\end{align}
Using $\log(T)\le 4T^{1/4}$ for $T\ge 1$, we obtain
\begin{align}
\Omega_{\max}(1) \le \frac{2}{\rho}T^{1/4} + 2K. \label{eq:omega1_T14}
\end{align}
Finally, $K=\frac{3n+2}{\rho}+1>\frac{2}{\rho}$ implies $\frac{2}{\rho}T^{1/4}\le K T^{1/4}$, and also $2K \le 2K T^{1/4}$ since $T^{1/4}\ge 1$. Applying both bounds to \eqref{eq:omega1_T14} yields
\begin{align}
\Omega_{\max}(1) \le 2K T^{1/4},
\end{align}
which implies $\phi(1)\le M\sqrt{T}$.

Since $\phi(c_T)\le \max\set{\phi(1),\phi(\sqrt{T})}\le M\sqrt{T}$ for all $c_T\in[1,\sqrt{T}]$, we conclude $\Omega^2\le 1/\eta$ for all valid $T$ and $c_T$.
\end{proof}
\subsection{Telescoping Bound for Drifting Comparators}\label{app:path}

\begin{lemma} \label{lemma:path}
Let $\gamma \in \reals_{> 0}$ and $\set{\x^\star_{t, \gamma}}^T_{t=1}$ be the projection of the comparator sequence~\eqref{eq:comparator} onto $\simplex_{n,\gamma}$.  Under the negative entropy mirror map~\eqref{eq:neg_entropy_mirrormap}, the primal decisions $\set{\x_t}^T_{t=1}$ of Algorithm~\ref{alg:omd} satisfy the following:
    \begin{align}
       \sum^T_{t=t_0}\parentheses{D_\Phi(\x^\star_{t,\gamma}, \x_t) - D_\Phi(\x^\star_{t,\gamma}, \x_{t+1})}  \leq \xi + 2 \log(1/\gamma) \sum^{T-1}_{t=t_0} \norm{ \x^\star_{t+1} - \x^\star_t},
    \end{align}
    where $\xi = \log(n)$ when $t_0 = 1$ and $\xi = \log(1/\gamma)$ for $t_0 >1$.
\end{lemma}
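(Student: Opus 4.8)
The plan is to telescope the sum and split it into a boundary contribution plus a comparator-drift contribution, then control the drift entirely through the negative-entropy geometry on the \emph{interior-bounded} simplex $\simplex_{n,\gamma}$. First I would shift the index in the subtracted divergence, rewriting $D_\Phi(\x^\star_{t,\gamma},\x_{t+1})$ as a term indexed by $t+1$, so that the paired terms sharing the iterate $\x_t$ combine. This collapses the sum to
\begin{align}
\sum^T_{t=t_0}\parentheses{D_\Phi(\x^\star_{t,\gamma},\x_t)-D_\Phi(\x^\star_{t,\gamma},\x_{t+1})}
&= D_\Phi(\x^\star_{t_0,\gamma},\x_{t_0}) - D_\Phi(\x^\star_{T,\gamma},\x_{T+1})\nonumber\\
&\quad + \sum^T_{t=t_0+1}\parentheses{D_\Phi(\x^\star_{t,\gamma},\x_t)-D_\Phi(\x^\star_{t-1,\gamma},\x_t)}.\nonumber
\end{align}
The final divergence is nonnegative and is simply dropped; the first divergence is the boundary term, and the remaining sum isolates the error caused by the comparator moving from step to step.

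For the boundary term I would invoke the two KL-diameter bounds for the negative-entropy map. When $t_0=1$ the second argument is the uniform initialization $\x_1=\vec{1}/n$, so $D_\Phi(\x^\star_{1,\gamma},\x_1)\le\log(n)$; when $t_0>1$ both arguments lie in $\simplex_{n,\gamma}$ (the comparator by projection, the iterate because Line~9 of Algorithm~\ref{alg:omd} projects onto $\simplex_{n,\gamma}$), so the first KL bound gives $D_\Phi(\x^\star_{t_0,\gamma},\x_{t_0})\le\log(1/\gamma)$. This produces exactly the stated $\xi$ in each regime.

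The core step is bounding each drift term. Using the definition of the Bregman divergence, for $\vec{u}=\x^\star_{t,\gamma}$ and $\vec{v}=\x^\star_{t-1,\gamma}$ one has $D_\Phi(\vec{u},\x_t)-D_\Phi(\vec{v},\x_t)=\parentheses{\Phi(\vec{u})-\Phi(\vec{v})}-\nabla\Phi(\x_t)\cdot(\vec{u}-\vec{v})$. I would control both pieces by $\ell_1$--$\ell_\infty$ duality: the entropy gap satisfies $|\Phi(\vec{u})-\Phi(\vec{v})|\le\sup_{\z\in\simplex_{n,\gamma}}\norm{\nabla\Phi(\z)}_\infty\,\norm{\vec{u}-\vec{v}}_1$ (mean value theorem along $[\vec{v},\vec{u}]\subseteq\simplex_{n,\gamma}$), while $|\nabla\Phi(\x_t)\cdot(\vec{u}-\vec{v})|\le\norm{\nabla\Phi(\x_t)}_\infty\norm{\vec{u}-\vec{v}}_1$. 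Since $\nabla\Phi(\z)=(1+\log z_i)_i$ and every coordinate of a point in $\simplex_{n,\gamma}$ lies in $[\gamma,1]$, we get $\norm{\nabla\Phi(\z)}_\infty\le\log(1/\gamma)$, which yields the factor $2\log(1/\gamma)$ per term. Finally I would pass from the projected variation to the stated path length using $\ell_1$-non-expansiveness of the projection onto $\simplex_{n,\gamma}$: the mixing map $\x\mapsto(1-n\gamma)\x+\gamma\vec{1}$ sends $\simplex_n$ into $\simplex_{n,\gamma}$ and is an $\ell_1$-contraction, so $\norm{\x^\star_{t,\gamma}-\x^\star_{t-1,\gamma}}_1\le\norm{\x^\star_t-\x^\star_{t-1}}_1$; reindexing the drift sum then gives $2\log(1/\gamma)\sum^{T-1}_{t=t_0}\norm{\x^\star_{t+1}-\x^\star_t}_1$.

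The main obstacle is the drift bound. Unlike the Euclidean case, the entropic gradient diverges near the boundary of the simplex, so the Lipschitz control of both $\Phi$ and $\nabla\Phi$ is available \emph{only} because the domain is truncated to $\simplex_{n,\gamma}$; this is precisely where $\gamma>0$ becomes indispensable, since $\gamma\to 0$ makes $\log(1/\gamma)$ blow up, and it is the step that departs most sharply from the fixed-comparator analysis of \citet{bubeck2011introduction}. A secondary point needing care is justifying the $\ell_1$-non-expansiveness of the chosen projection, as an arbitrary (e.g.\ Euclidean) projection need not contract the $\ell_1$ distance; adopting the mixing map as the projection makes this transparent.
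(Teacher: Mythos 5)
Correct, and essentially the paper's own argument: you use the same telescoping split into a boundary term (handled identically via the KL-diameter bounds, giving $\xi=\log(n)$ for $t_0=1$ and $\xi=\log(1/\gamma)$ otherwise) plus per-step drift terms, and your direct expansion $D_\Phi(\vec u,\x_t)-D_\Phi(\vec v,\x_t)=\Phi(\vec u)-\Phi(\vec v)-\nabla\Phi(\x_t)\cdot(\vec u-\vec v)$ is algebraically identical to the paper's three-point identity (Lemma~\ref{lemma:Pythagorean}) after the nonnegative term $D_\Phi(\vec v,\vec u)$ is dropped, so your mean-value-theorem-plus-H\"older step produces the same $2\log(1/\gamma)$ factor as the paper's H\"older bound on $\norm{\nabla\Phi(\x^\star_{t+1,\gamma})-\nabla\Phi(\x_{t+1})}_\infty$. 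The only place you go beyond the paper is in justifying the $\ell_1$-non-expansiveness of the projection onto $\simplex_{n,\gamma}$ explicitly via the mixing map $\x\mapsto(1-n\gamma)\x+\gamma\vec{1}$ --- a step the paper's final inequality asserts without proof --- which is a worthwhile clarification rather than a different approach.
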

\begin{proof}
Consider the following:
    \begin{align}
   &\sum^T_{t=t_0}\parentheses{D_\Phi(\x^\star_{t,\gamma}, \x_t) - D_\Phi(\x^\star_{t,\gamma}, \x_{t+1})}  \\
   &\leq  \underbrace{D_\Phi(\x^\star_{1,\gamma},\x_{T'})}_{\text{$\leq \xi$}} - \underbrace{D_\Phi(\x^\star_{T, \gamma },\x_{T+1})}_{\text{$\leq 0$}}   +  \sum^{T-1}_{t=t_0} \parentheses{D_\Phi(\x^\star_{t+1, \gamma }, \x_{t+1}) - D_\Phi(\x^\star_{t, \gamma }, \x_{t+1})} \\
   & \leq \xi + \sum^{T-1}_{t=t_0} {\inner{\nabla \Phi(\x^\star_{t+1, \gamma}) - \nabla \Phi(\x_{t+1})}{ \x^\star_{t+1, \gamma} - \x^\star_{t,\gamma}}}  - \underbrace{D_\Phi(\x^\star_{t, \gamma} , \x^\star_{t+1, \gamma})}_{\text{$\geq 0$}}  \\
   & \leq \xi + \sum^{T-1}_{t=t_0} \underbrace{\inner{\nabla \Phi(\x^\star_{t+1, \gamma}) - \nabla \Phi(\x_{t+1})}{ \x^\star_{t+1, \gamma} - \x^\star_{t,\gamma}}}_{\text{Cauchy-Schwarz's Ineq.}}  - \underbrace{D_\Phi(\x^\star_{t, \gamma} , \x^\star_{t+1, \gamma})}_{\text{$\geq 0$}}  \\
   &\leq \xi + \sum^{T-1}_{t=t_0} \underbrace{\norm{\nabla \Phi(\x^\star_{t+1, \gamma}) - \nabla \Phi(\x_{t+1})}_\infty}_{\text{$\leq 2\log(1/\gamma)$}}\norm{ \x^\star_{t+1, \gamma} - \x^\star_{t,\gamma}}_1 \\
   & \leq  \xi + 2  \log(1/\gamma) \sum^{T-1}_{t=t_0} \norm{ \x^\star_{t+1, \gamma} - \x^\star_{t,\gamma}}_1 \leq \xi + 2 \log(1/\gamma)\sum^{T-1}_{t=t_0} \norm{ \x^\star_{t+1} - \x^\star_{t, \gamma}}_1.
\end{align}
This concludes the proof.
\end{proof}

\section{Regret Guarantees}\label{s:formalanalysis}
\subsection{Regret Bound in Terms of Comparator Path Length $P_T$}\label{app:path_length}
\begin{lemma}
    We assume that $\lambda_t \leq \Omega, \forall t \in [T]$. Given a comparator sequence $\set{\vec u_t}^T_{t=1}$ in the simplex $ \simplex_n^T$, we apply Algorithm~\ref{alg:omd} under the negative entropy mirror map~\eqref{eq:neg_entropy_mirrormap}.  For $\beta = n (3/2 + 3 \Omega^2) $, $\eta = \frac{\eta_0}{\sqrt{\beta T}}$ and $\eta_0 > \log(n)/\sqrt{2} $, we establish the following regret guarantee:
    \begin{align}
         \E\interval{ \sum^T_{t=1} f_t(a_t) - f_t(\vec u_{t}) }  &\leq \parentheses{ \frac{\parentheses{\log(n) + 2 \log(1/\gamma) \sum^{T-1}_{t=1}\norm{\vec u_t- \vec u_{t+1}}_1 }  }{\eta_0}+  \eta_0 }\sqrt{\beta T}   \nonumber\\
         &+ 2(1+\Omega)\gamma T + \frac{\mu T}{2}.
    \end{align}
\end{lemma}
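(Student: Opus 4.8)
The plan is to instantiate the primal--dual saddle inequality of Lemma~\ref{lemma:saddle_inequality} at the dual point $\lambda=0$ and sum it over $t\in[T]$. At $\lambda=0$ the left-hand side reads $\E[f_t(\x_t)-f_t(\x)-\lambda_t g_t(\x)]$ for every $\x\in\simplex_n$, since $\Psi_t(\cdot,\lambda)=f_t(\cdot)+\lambda g_t(\cdot)$. Rather than plugging in the raw comparator $\vec u_t$, I would use a $\gamma$-interior surrogate $\vec u_{t,\gamma}\triangleq(1-n\gamma)\vec u_t+\gamma\charvec\in\simplex_{n,\gamma}$. This is forced by the telescoping estimate of Lemma~\ref{lemma:path} (whose argument applies verbatim to any $\gamma$-interior comparator sequence), which rests on the gradient-gap bound $\norm{\nabla\Phi(\cdot)-\nabla\Phi(\x_{t+1})}_\infty\le 2\log(1/\gamma)$ and diverges at the simplex boundary. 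Uniform mixing is convenient here because it only contracts the path length, $\norm{\vec u_{t+1,\gamma}-\vec u_{t,\gamma}}_1\le\norm{\vec u_{t+1}-\vec u_t}_1$, so the $P_T$-type sum is preserved. The standing requirement $\eta_0>\log(n)/\sqrt2$ is only needed to keep the step size admissible (it becomes load-bearing in the companion temporal-variation bound, Lemma~\ref{lemma:n02}).

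Summing over $t$, I would dispatch the four resulting term groups in turn. The quadratic dual differences $(2\mu)^{-1}(\lambda_t^2-\lambda_{t+1}^2)$ telescope to $(2\mu)^{-1}(\lambda_1^2-\lambda_{T+1}^2)\le 0$ since $\lambda_1=0$, and are discarded; the constant terms $\mu/2$ sum to $\mu T/2$. For the conditional divergence terms I invoke Lemma~\ref{lemma:bounded_Bregman}: taking total expectations and using the hypothesis $\lambda_t\le\Omega$ gives $\E[D_\Phi(\x_t,\y_{t+1})]\le\tfrac{3\eta^2 n}{2}(1+2\Omega^2)=\eta^2\beta$, so $\sum_t\eta^{-1}\E[D_\Phi(\x_t,\y_{t+1})]\le T\eta\beta$. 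The telescoped Bregman terms are bounded through Lemma~\ref{lemma:path} with $t_0=1$ (hence $\xi=\log n$), yielding $\eta^{-1}(\log n+2\log(1/\gamma)\sum_t\norm{\vec u_t-\vec u_{t+1}}_1)$. Substituting $\eta=\eta_0/\sqrt{\beta T}$ turns $T\eta\beta$ into $\eta_0\sqrt{\beta T}$ and the Bregman term into $\frac{\log n+2\log(1/\gamma)\sum_t\norm{\vec u_t-\vec u_{t+1}}_1}{\eta_0}\sqrt{\beta T}$, reproducing the two leading terms of the claim.

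The remaining and most delicate step is to restore the true comparator on the left-hand side, i.e.\ to show $\sum_t\E[(f_t(\vec u_{t,\gamma})-f_t(\vec u_t))+\lambda_t g_t(\vec u_{t,\gamma})]\le 2(1+\Omega)\gamma T$; after this, the identity $\E[\sum_t f_t(a_t)]=\E[\sum_t f_t(\x_t)]$ (valid since $a_t\sim\x_t$) identifies the left-hand side with the stated regret. The loss perturbation is harmless, $f_t(\vec u_{t,\gamma})-f_t(\vec u_t)\le\norm{\vec f_t}_\infty\norm{\vec u_{t,\gamma}-\vec u_t}_1=\BigO{\gamma}$. The coupling term is the crux: writing $g_t(\vec u_{t,\gamma})=g_t(\vec u_t)+(g_t(\vec u_{t,\gamma})-g_t(\vec u_t))$ and using feasibility of the comparator ($g_t(\vec u_t)\le 0$) together with $\lambda_t\ge 0$ lets me drop $\lambda_t g_t(\vec u_t)\le 0$, leaving $\lambda_t g_t(\vec u_{t,\gamma})\le\lambda_t\norm{\vec g_t}_\infty\norm{\vec u_{t,\gamma}-\vec u_t}_1\le\Omega\cdot\BigO{\gamma}$ via the dual bound $\lambda_t\le\Omega$ and $\norm{\vec g_t}_\infty\le 1$. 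I expect this to be the main obstacle: it is precisely where near-feasibility of the comparator and uniform boundedness of the dual iterates must be combined, and where the sign of the constraint term is essential (an infeasible comparator would inject an uncontrolled $\Omega T$ term). Summing the two perturbations yields the $2(1+\Omega)\gamma T$ slack, and collecting all four term groups gives the stated inequality.
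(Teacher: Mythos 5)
Your proposal is correct and follows essentially the same route as the paper's proof: instantiate Lemma~\ref{lemma:saddle_inequality} at $\lambda=0$, control the telescoped Bregman terms via Lemma~\ref{lemma:path} with $\xi=\log(n)$, bound the variance term by $\eta^2\beta$ via Lemma~\ref{lemma:bounded_Bregman} together with $\lambda_t\le\Omega$, and absorb the comparator surrogate's loss and constraint perturbations into the $\BigO{(1+\Omega)\gamma T}$ slack using feasibility of the comparator and nonnegativity of $\lambda_t$ (the paper uses a projection onto $\simplex_{n,\gamma}$ where you use uniform mixing $(1-n\gamma)\vec u_t+\gamma\charvec$, an immaterial difference, and your observation that mixing contracts the path length is a clean touch). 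Note that both your bound and the paper's silently absorb a factor of $n$ in the perturbation terms (e.g., $g_t(\vec u_{t,\gamma})\le 2\gamma n$), so your accounting matches the paper's own level of precision, and you correctly surface the implicit feasibility requirement $g_t(\vec u_t)\le 0$ that the lemma statement omits.
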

\begin{proof}
Consider the following:
    \begin{align}
    \E\interval{\sum^T_{t=1} f_{t} (a_t) - \sum^T_{t=1} f_t(\x^\star_t)}&=  \E\interval{ \sum^T_{t=1}  \E\interval{ f_{t} (a_t)  \Big| \hist_{t-1}}  - \sum^T_{t=1} f_t(\x^{\star}_{ t}) }
=\E\interval{ \sum^T_{t=1} f_t (\x_t) - \sum^T_{t=1} f_t(\x^{\star}_{ t}) }\\
&
 \leq \E\interval{ \sum^T_{t=1} f_t (\x_t) - \sum^T_{t=1} f_t(\x^{\star}_{\gamma, t}) } + 2 (1+\Omega)\gamma  T.
\end{align}
It holds from Lemma~\ref{lemma:saddle_inequality} and Lemma~\ref{lemma:path}
\begin{align}
      &\sum^T_{t=1} \E\interval{\Psi_t(\vec e_{a_t}, \lambda) - \Psi_t(\x^\star_{\gamma,t}, \lambda_t) } = \sum^T_{t=1} \E\interval{\Psi_t(\x_t, \lambda) - \Psi_t(\x^\star_{\gamma,t}, \lambda_t) }  \\
      &\leq  \frac{1}{\eta} \parentheses{\log(n)+ 2 \log(1/\gamma) \sum^{T-1}_{t=1}\norm{\vec u_t- \vec u_{t+1}}_1   + \lambda^2}    + \frac{\mu T}{2} + \sum^T_{t=1}  \E  \interval{\eta^{-1}D_\Phi(\x_t, \y_{t+1}) }
\end{align}
It remains to bound the expectation of $D_\Phi(\x_t, \y_{t+1})$. Lemma~\ref{lemma:bounded_Bregman} and Lemma~\ref{eq:bounded_dual} gives:
\begin{align}
&\E\interval{D_\Phi(\x_t, \y_{t+1}) \Big|\hist_{t-1}}  \leq  \frac{3\eta^2 n}{2} (1 +  2 \Omega^2).
\end{align}
Thus, it holds
\begin{align}
     &\sum^T_{t=1} \E\interval{\Psi_t(\vec e_{a_t}, \lambda) - \Psi_t(\x^\star_{\gamma,t}, \lambda_t) }     \\
     &\leq \frac{1}{\eta} \parentheses{\log(n)+ 2 \log(1/\gamma) \sum^{T-1}_{t=1}\norm{\vec u_t- \vec u_{t+1}}_1   + \lambda^2}    + \frac{\mu T}{2} +  \eta n (3/2 +  3 \Omega^2) T .
\end{align}
Substitute the definition of the Lagrangian with for $\lambda=0$  to obtain the following
\begin{align}
     &\sum^T_{t=1} \E\interval{\Psi_t(\vec e_{a_t}, \lambda) - \Psi_t(\x^\star_{\gamma,t}, \lambda_t) | \hist_{t-1}} = \sum^T_{t=1} \E\interval{\Psi_t(\x_t, \lambda) - \Psi_t(\x^\star_{\gamma,t}, \lambda_t) | \hist_{t-1}} \\
&= \sum^T_{t=1} f_t(\x_t) - f_t(\x^\star_{\gamma,t})  - \sum^T_{t=1}\lambda_t g_t(\x^\star_{\gamma,t})
\end{align}
The comparator point $\x^\star_{t} \in \simplex_{n, t}$, satisfies $g_t(\x^\star_t)\leq 0$, so the projected point $\x^\star_{t, \gamma}$ satisfies $g_t(\x^\star_{t, \gamma}) \leq 2\gamma n$.
This gives 
\begin{align}
     \sum^T_{t=1} f_t(\x_t) - f_t(\x^\star_{\gamma,t})   \leq \frac{1}{\eta} \parentheses{\log(n) + 2 \log(1/\gamma) \sum^{T-1}_{t=1}\norm{\vec u_t- \vec u_{t+1}}_1   }    + \frac{\mu T}{2} + \eta n (3/2 + 3  \Omega^2) T  + 2\gamma n \Omega T.
\end{align}
Replace $\eta$ with $\eta = \frac{\eta_0}{\sqrt{\beta T}}$ in the preceding equation to conclude the proof.
\end{proof}
\subsection{Regret Bound in Terms of Temporal Variation $V_T$} \label{app:temporal_variation}

\begin{lemma}    Assume that $\lambda_t \le \Omega$ for all $t \in [T]$. Fix an arbitrary comparator sequence $\{\vec u_t\}_{t=1}^T \in \simplex_n^T$. We run Algorithm~\ref{alg:omd} with the negative-entropy mirror map in~\eqref{eq:neg_entropy_mirrormap}. Let $\beta = n(3/2 + 3\Omega^2)$ and set $\eta = \eta_0/\sqrt{\beta T}$ with $\eta_0 > \log(n)/\sqrt{2}$. Under these choices, we obtain the following regret guarantee.

    \begin{align}
     \E\interval{ \sum^T_{t=1} f_t(a_t) - f_t(\vec u_{t}) }  &\leq 2 \eta_0 \sqrt{\beta T} + \frac{4 \log(1/\gamma) T V_T }{\eta_0^2 - \log(n)/2} \mathds{1}\parentheses{\sum^{T-1}_{t=1}\norm{\vec u_t- \vec u_{t+1}}_1 > \frac{ \eta_0 ^2 - \log(n)/2}{\log(1/\gamma)} }\nonumber\\
     &+ 2 (1 + \Omega )\gamma  T + \frac{\mu T}{2}.
    \end{align}
\end{lemma}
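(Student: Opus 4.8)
The plan is to \emph{reduce the temporal-variation bound to the already-established path-length bound}, using a dichotomy on the path length of the fixed comparator together with a blocking argument that converts function variation into an effective comparator drift. Write $P_u \triangleq \sum_{t=1}^{T-1}\norm{\vec u_t - \vec u_{t+1}}_1$ for the path length of the given comparator and set the threshold $\tau \triangleq \frac{\eta_0^2 - \log(n)/2}{\log(1/\gamma)}$, so the indicator in the statement is exactly $\mathds{1}(P_u > \tau)$. The starting point is the path-length guarantee (Lemma~\ref{lemma:n0}, in the normalization carried through the proof of Theorem~\ref{theorem:main}), which holds for \emph{any} comparator $\{\vec v_t\}_{t=1}^T\in\simplex_n^T$:
\begin{align}
\E\interval{\sum_{t=1}^T f_t(a_t) - f_t(\vec v_t)} \le \parentheses{\frac{\tfrac12\log(n) + \log(1/\gamma)\sum_{t=1}^{T-1}\norm{\vec v_t - \vec v_{t+1}}_1}{\eta_0} + \eta_0}\sqrt{\beta T} + 2(1+\Omega)\gamma T + \frac{\mu T}{2}. \nonumber
\end{align}

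\textbf{Case 1 ($P_u \le \tau$).} Apply the displayed bound with $\vec v_t = \vec u_t$. By the choice of $\tau$ we have $\log(1/\gamma)P_u \le \eta_0^2 - \tfrac12\log(n)$, so the numerator is at most $\eta_0^2$ and the bracket collapses to $\tfrac{\eta_0^2}{\eta_0} + \eta_0 = 2\eta_0$. This yields exactly $2\eta_0\sqrt{\beta T} + 2(1+\Omega)\gamma T + \tfrac{\mu T}{2}$, matching the claim with indicator $0$.

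\textbf{Case 2 ($P_u > \tau$).} Now the indicator is $1$ and the extra $V_T$ term is available. I would partition $[T]$ into $K \approx \tau/2$ consecutive blocks $B_1,\dots,B_K$ of near-equal length $\Delta = T/K$, introduce the \emph{blocked surrogate} $\vec w_j \in \argmin_{\vec w\in\simplex_n}\sum_{t\in B_j} f_t(\vec w)$, and set $\vec v_t = \vec w_{j(t)}$ on the block $j(t)\ni t$. The surrogate is piecewise constant, so its path length is at most $2(K-1)\le 2K\le\tau$; the displayed bound applied to $\vec v$ therefore again gives $\E[\sum_t f_t(a_t) - f_t(\vec v_t)] \le 2\eta_0\sqrt{\beta T} + 2(1+\Omega)\gamma T + \tfrac{\mu T}{2}$. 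It remains to pay for replacing $\vec v_t$ by $\vec u_t$ via
\begin{align}
\E\interval{\sum_{t=1}^T f_t(a_t) - f_t(\vec u_t)} = \E\interval{\sum_{t=1}^T f_t(a_t) - f_t(\vec v_t)} + \sum_{j=1}^K\sum_{t\in B_j}\parentheses{f_t(\vec w_j) - f_t(\vec u_t)}. \nonumber
\end{align}

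The heart of the argument, and the step I expect to be the main obstacle, is bounding the second (deterministic) sum by the function variation. Let $\vec z_t \in \argmin_{\vec z\in\simplex_n} f_t(\vec z)$ be the per-round minimizer; since $\vec u_t\in\simplex_n$ we have $f_t(\vec u_t)\ge f_t(\vec z_t)$, so it suffices to control $\sum_{t\in B_j}(f_t(\vec w_j) - f_t(\vec z_t))$. Because each loss is linear with $\vec f_t\in[0,1]^n$, for any $s,t$ in the same block and any $\vec x\in\simplex_n$ one has $f_t(\vec x) - f_s(\vec x) = (\vec f_t - \vec f_s)\cdot\vec x \le \norm{\vec f_t - \vec f_s}_\infty \le V(B_j)$, where $V(B_j)$ is the variation accumulated inside $B_j$. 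Invoking the block-optimality of $\vec w_j$ against a fixed $\vec z_s$ and then chaining this variation estimate twice (to move the evaluation point from $\vec z_s$ to $\vec z_t$ and the loss index from $s$ to $t$) gives the per-block estimate $\sum_{t\in B_j}(f_t(\vec w_j) - f_t(\vec z_t)) \le 2|B_j|\,V(B_j)$. Summing over blocks and using $\sum_j V(B_j)\le V_T$ (the within-block variations are not double counted) yields $\sum_{j}\sum_{t\in B_j}(f_t(\vec w_j)-f_t(\vec u_t)) \le 2\Delta V_T = \tfrac{2TV_T}{K}$. Substituting $K\approx\tau/2$ turns this into $\tfrac{4TV_T}{\tau} = \frac{4\log(1/\gamma)TV_T}{\eta_0^2 - \log(n)/2}$, and adding the surrogate regret completes Case 2. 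The two delicate points are (i) the choice of $K$ must simultaneously keep the surrogate path length below $\tau$ (forcing $K\le\tau/2$) and keep the approximation error $\BigO{TV_T/K}$ small (favoring large $K$); the balancing value $K\approx\tau/2$ is precisely what reproduces the stated constant; and (ii) handling the integer rounding of $K$, together with the hypothesis $\eta_0 > \log(n)/\sqrt2$, which guarantees $\tau>0$ and hence $K\ge1$.
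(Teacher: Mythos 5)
Your proposal is correct, and its quantitative engine—partitioning the horizon into $K\approx\tau/2$ blocks with $\tau=(\eta_0^2-\log(n)/2)/\log(1/\gamma)$, comparing against a piecewise-constant surrogate whose path length stays below $\tau$, and converting the per-block approximation error into $2\Delta V(B_j)$ via a factor-two variation chain—is exactly the paper's (both descend from Lemma~2 of \citet{jadbabaie2015online}). The organization, however, differs in three ways worth noting. First, where you split directly on $P_u\lessgtr\tau$, the paper defines the class $\mathcal{U}_T$ of feasible sequences with path length at most $\tau$, takes $\set{\vec u^\star_t}$ optimal within it, applies Lemma~\ref{lemma:n0} to $\vec u^\star$, and inserts the indicator through the observation that $\sum_t f_t(\vec u^\star_t)-f_t(\vec u_t)\le 0$ whenever $\vec u\in\mathcal{U}_T$; your dichotomy achieves the same indicator without the auxiliary class. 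Second, your direct two-step chaining (block-optimality of $\vec w_j$ against a fixed $\vec z_{t_j}$, then shifting the loss index at cost $V(B_j)$ each way) replaces the paper's proof-by-contradiction of the per-block claim $f_t(\vec x^\star_{t_k})-f_t(\vec x^\star_t)\le 2\sum_{s\in\mathcal T_k}\sup_{\x}\abs{f_s(\x)-f_{s-1}(\x)}$; the two are equivalent, yours being somewhat more transparent. Third, your use of \emph{unconstrained} minimizers $\vec z_t$ and $\vec w_j$ buys genuine robustness: the paper's argument needs $f_t(\vec u_t)\ge f_t(\vec x^\star_t)$ with $\vec x^\star_t$ the \emph{constrained} minimizer, and needs the piecewise-constant surrogate $(\vec x^\star_{t_k})$ to lie in $\mathcal{U}_T$ (which requires $g_t(\vec x^\star_{t_k})\le 0$ at every $t$ in the batch, not just at $t_k$)—neither of which is justified for an arbitrary comparator in $\simplex_n^T$, a subtlety your construction sidesteps entirely. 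One constant-level caveat: if you invoke Lemma~\ref{lemma:n0} exactly as stated (numerator $\log(n)+2\log(1/\gamma)P$), the bracket under $P\le\tau$ collapses to $3\eta_0$, which is in fact what the paper's own appendix proof carries ($3\eta_0\sqrt{\beta T}$) despite the lemma statement's $2\eta_0$; your halved normalization, matching the display in the proof of Theorem~\ref{theorem:main}, is what reproduces the stated $2\eta_0$, so you should state explicitly which form of the path-length bound you are black-boxing.
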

\begin{proof} 
We adapt the proof of Lemma 2 in \citet{jadbabaie2015online} for our analysis. To begin, we define the following set. Define the following set 
\begin{align}
    \mathcal{U}_T \triangleq \set{\vec u_1, \dots, \vec u_T \in \simplex_n : \sum^{T-1}_{t=1}\norm{\vec u_t- \vec u_{t+1}}_1 \leq \frac{ \eta_0 ^2 - \log(n)/2}{\log(1/\gamma)}, g_t(\vec u_t) \leq 0, t \in [T]}
\end{align}
    and 
    \begin{align}
        \set{\vec u^\star_t}^T_{t=1}\in \argmin_{ \set{\vec u_t}^T_{t=1} \in \mathcal{U}_T}\sum^T_{t=1} f_t(\vec u_t).
    \end{align}

The choice of $\eta_0 > \log(n)/\sqrt{2}$ indicates that the fixed comparator with zero path length is in $\mathcal{U}_T$ so $\mathcal{U}_T \neq \varnothing$, and in term the sequence $\set{\vec u^\star_t}^T_{t=1}$ exists.

Lemma~\ref{lemma:n0} yields the following regret bound:
\begin{align}
    &\E\interval{ \sum^T_{t=1} f_t(a_t) - f_t(\vec u_{t}) }  \leq   \E\interval{ \sum^T_{t=1} f_t(a_t) - f_t(\vec u^\star_{t}) } + \sum^T_{t=1} f_t(\vec u^\star_{t}) -  f_t(\vec u_{t})\\
    &\leq 3 \eta_0 \sqrt{\beta T} + \sum^T_{t=1} f_t(\vec u^\star_{t}) -  f_t(\vec u_{t}) +2 (1+\Omega)\gamma  T + \frac{\mu T}{2} \\
    &\leq 3 \eta_0 \sqrt{\beta T} +\parentheses{\sum^T_{t=1} f_t(\vec u^\star_{t}) -  f_t(\vec u_{t})} \mathds{1}\parentheses{\sum^{T-1}_{t=1}\norm{\vec u_t- \vec u_{t+1}}_1 > \frac{ \eta_0 ^2 - \log(n)/2}{\log(1/\gamma)} } + 2 (1+\Omega)\gamma  T + \frac{\mu T}{2},
\end{align}
where the last step follows from the fact that
\begin{align}
    \sum^T_{t=1} f_t(\vec u^\star_{t}) -  f_t(\vec u_{t}) \leq 0\qquad \text{if $\parentheses{\vec u_1, \dots, \vec u_T} \in \mathcal{U}_T$}.
\end{align}
Divide the time-horizon to $B$ batches of equal lengths, and pick a fixed comparator within each batch. This yields:
\begin{align}
    \sum^T_{t=1} \norm{\vec u_t - \vec u_{t+1}}_1 \leq 2B.
\end{align}
    Take $B = \frac{\eta_0^2 - \log(n)/2}{2 \log(1/\gamma)}$,  $\x^\star_t \in \argmin_{\x \in \X : g_t(\x)\leq 0}\,\, f_t(\x)$, and any fixed $t_k \in\mathcal{T}_k \triangleq  [(k-1)(T/B) + 1,  k (T/B)]$ we have
    \begin{align}
       \sum^T_{t=1} f_t(\vec u^\star_{t}) -  f_t(\vec u_{t}) &\leq \sum^T_{t=1} f_t(\vec u^\star_{t}) -  f_t(\vec x^\star_{t}) \\
       & = \sum^{B}_{k=1} \sum_{t \in \mathcal{T}_k} f_t(\vec u^\star_{t}) - f_t(\vec x^\star_{t})\\
       &\leq  \sum^{B}_{k=1} \sum_{t \in \mathcal{T}_k} f_t(\vec x^\star_{t_k}) - f_t(\vec x^\star_{t})\\
       &\leq \parentheses{\frac{T}{B}} \sum^B_{k=1} \max_{t \in \mathcal{T}_k} \set{f_t(\vec x^\star_{t_k}) - f_t(\vec x^\star_{t})}.
    \end{align}
We now claim that for any $t \in \mathcal{T}_k$, we have
\begin{align}
    f_t(\vec x^\star_{t_k}) - f_t(\vec x^\star_{t}) \leq 2 \sum_{s \in \mathcal{T}_k} \sup_{\x \in \simplex_n} \abs{f_s(\x) - f_{s-1}(\x)}
\end{align}
We prove the claim by contradiction. Assume otherwise then there must be $\hat t_k$ such that
\begin{align}
    f_{\hat t_k}(\vec x^\star_{t_k}) - f_{\hat t_k}(\vec x^\star_{\hat t_k}) > 2 \sum_{s \in \mathcal{T}_k} \sup_{\x \in \simplex_n} \abs{f_s(\x) - f_{s-1}(\x)}
\end{align}
which gives
\begin{align}
    f_t(\x^\star_{\hat t_k}) &\leq  f_{\hat t_k}(\vec x^\star_{\hat t_k}) + \sum_{s \in \mathcal{T}_k} \sup_{\x \in \simplex_n} \abs{f_s(\x) - f_{s-1}(\x)}\\
    &<f_{\hat t_k}(\vec x^\star_{t_k}) -  \sum_{s \in \mathcal{T}_k} \sup_{\x \in \simplex_n} \abs{f_s(\x) - f_{s-1}(\x)} \leq f_t (\vec x^\star_{t_k}).
\end{align}
The preceding relation for $t = t_k$ violates the optimality of  definition of $f_t (\vec x^\star_{t_k})$. Thus, we have 
\begin{align}
     \sum^T_{t=1} f_t(\vec u^\star_{t}) -  f_t(\vec u_{t})  &\leq  \parentheses{\frac{T}{B}} \sum^B_{k=1} \max_{t \in \mathcal{T}_k} \set{f_t(\vec x^\star_{t_k}) - f_t(\vec x^\star_{t})} \\
     &\leq 2\parentheses{\frac{T}{B}} \sum^B_{k=1} \sum_{s \in \mathcal{T}_k} \sup_{\x \in \simplex_n} \abs{f_s(\x) - f_{s-1}(\x)}\\
     &= 2\parentheses{\frac{T}{B}} V_T = \frac{4 \log(1/\gamma) T V_T }{\eta_0^2 - \log(n)/2}.
\end{align}
We conclude the proof.
\end{proof}

\section{Adaptive Meta-Algorithm}\label{s:appenix_meta}
\subsection{Meta algorithm \texttt{MBCOMD}}
\begin{algorithm}[t!]
\caption{\texttt{MBCOMD}: Meta (Agnostic) Bandit-Feedback Mirror Descent with
Time-Varying Constraints.}\label{alg:meta-doubling}
\begin{scriptsize}
\begin{algorithmic}[1]
\Require $K_m=\lceil\log_2 L_m\rceil$, grid for $k\in \intv{K_m}$:  $c_k=2^{k}$, $\mu=\tfrac{1}{M_k\sqrt{L_m}}$, $\eta^{(k)}=\tfrac{c_k}{M_k\sqrt{L_m}}$, $\gamma=\Theta({{L^{-1}_m}})$, and $\gamma^{\mathrm{meta}}_m = \Theta({L_m^{-1/3}})$, $\eta^{\mathrm{meta}}_m = \Theta({L_m^{-1/2}})$  ($M_k$ is selected as in Theorem~\ref{theorem:main})
\State Initialize $\vec{x}^{\mathrm{meta}}_{t_m}=\tfrac{1}{K_m}\mathbf{1}$; for all $k \in \intv{K_m}$: $\x^{(k)}_{t_m}=\tfrac{1}{n}\mathbf{1}$; $\lambda_{t_m}=0$.\Comment{phase $\mathcal{I}_m$ with $L_m = \card{\mathcal{I}_m}$ and $t_m = \min \parentheses{\mathcal{I}_m}$}
\For{$t=t_m,\dots,t_m+L_m-1$}
\State Each expert outputs $\x^{(k)}_{t}\in\simplex_{n,\gamma}$.
\State $\vec{x}^{\mathrm{meta}}_{t}=\sum_k x^{\mathrm{meta},(k)}_{t} \x^{(k)}_{t}$; sample $a_t\sim\vec{x}^{\mathrm{meta}}_{t}$.
\State Observe $f_t(a_t), g_t(a_t)$; set $\hat{\vec f}_{t}=\tfrac{f_t(a_t)}{\vec{x}^{\mathrm{meta}}_{t,a_t}}\vec{e}_{a_t}$, $\hat{\vec g}_{t}=\tfrac{g_t(a_t)}{\vec{x}^{\mathrm{meta}}_{t,a_t}}\vec{e}_{a_t}$.
\State For each $k$: $\texttt{BCOMD}^{(k)}$ (Algorithm~\ref{alg:omd}) step with estimators $\hat{\vec f}_t$, $\hat{\vec g}_t$ and step $\eta^{(k)}$; project onto $\simplex_{n,\gamma}$.
\State Meta update: $\tilde m^{(k)}_{t}=\frac{x^{(k)}{t,a_t}}{x^{\mathrm{meta}}_{t,a_t}} f_t(a_t)$; \quad $y^{\mathrm{meta},(k)}_{t+1}=x^{\mathrm{meta},(k)}_{t} \exp\big(-\eta^{\mathrm{meta}}_m \tilde m^{(k)}_{t}\big)$, $\vec x^{\mathrm{meta}}_{t+1} = \Pi_{\simplex_{n,\gamma_m^{\mathrm{meta}}} } \parentheses{\vec y^\mathrm{meta}_{t+1}}$ (KL projection onto $\simplex_{n,\gamma_m^{\mathrm{meta}}} $).

\EndFor
\end{algorithmic}
\end{scriptsize}
\end{algorithm}
We eliminate the need for prior knowledge of $P_T$ or $V_T$ by employing a 
\emph{meta-algorithm}. The time horizon is partitioned into 
geometrically growing phases $\mathcal I_m$ of length $L_m = 2^{m-1}$, 
for $m = 1, \dots, M = \lceil \log_2 T \rceil$.  In each phase $\mathcal I_m$, we run $K_m = \lceil \log_2 L_m \rceil$ 
\texttt{BCOMD} from a geometric grid. At the beginning of every phase, the dual variable 
is reset.  The meta-learner maintains a mixture distribution over these experts and 
aggregates their predictions. Bandit feedback is broadcast to all experts, 
enabling them to update via entropic mirror descent with appropriate expert-level 
losses (see Algorithm~\ref{alg:meta-doubling}). Formally,

\begin{theorem}[Adaptive regret and violation via doubling]\label{thm:meta}
Under Assumptions~\ref{asm:boundedness} and \ref{asm:slaters}, \textup{\texttt{MBCOMD}} (Algorithm~\ref{alg:meta-doubling} run across phases $m=1,\dots,\lceil\log_2 T\rceil$) achieves
\begin{align}
\mathfrak{R}_T(\textup{\texttt{MBCOMD}}) 
&= \tilde{\mathcal O}\!\left(\min\set{\sqrt{P_T} T^{2/3},\, V_T^{1/3} T^{7/9}}\right), \\
\mathfrak{V}_T(\textup{\texttt{MBCOMD}}) 
&= \tilde{\mathcal O}(\sqrt{T}).
\end{align}
\end{theorem}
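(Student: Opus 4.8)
The plan is to telescope the regret over the geometric phases and, within each phase, split it into a meta-learning term and a best-expert term. Write $\mathfrak{R}_T(\texttt{MBCOMD}) = \sum_{m=1}^{M}\mathfrak{R}_{\mathcal{I}_m}$, where $\mathfrak{R}_{\mathcal{I}_m}$ collects the regret incurred during phase $\mathcal{I}_m$ against the phase-restricted comparator, and decompose $\mathfrak{R}_{\mathcal{I}_m}$ as (i) the regret of the sampled mixture $\vec x^{\mathrm{meta}}_t$ relative to the best expert $k^\star$ in that phase, plus (ii) the regret of expert $k^\star$ against the comparator. For term (i) I would invoke the bandit-with-expert-advice guarantee of Lemma~\ref{lem:meta-phase}, giving $\tilde{\mathcal{O}}(\sqrt{n L_m \log K_m})$ per phase. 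For term (ii), the geometric grid $c_k = 2^k$, $k\in\intv{K_m}$, ensures that some expert's step size lies within a factor two of the phase-optimal tuning prescribed by Theorem~\ref{theorem:main}; the only genuinely new ingredient is that each expert updates from the meta-sampled estimators $\hat{\vec f}_t,\hat{\vec g}_t$, whose denominators are the mixture probabilities, so the conditional-divergence control of Lemma~\ref{lemma:bounded_Bregman} inflates by $1/\gamma^{\mathrm{meta}}_m$ since $x^{(k)}_{t,a}/x^{\mathrm{meta}}_{t,a}\le 1/x^{\mathrm{meta},(k)}_t\le 1/\gamma^{\mathrm{meta}}_m$.

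Carrying the effective second moment $n/\gamma^{\mathrm{meta}}_m$ through the proofs of Lemmas~\ref{lemma:n0} and~\ref{lemma:n02} on a horizon $L_m$, and choosing $\gamma^{\mathrm{meta}}_m=\Theta(L_m^{-1/3})$, yields a per-phase best-expert bound $\tilde{\mathcal{O}}\!\parentheses{\min\set{\sqrt{P_{\mathcal{I}_m}}\,L_m^{2/3},\,V_{\mathcal{I}_m}^{1/3} L_m^{7/9}}}$, where $P_{\mathcal{I}_m}$ and $V_{\mathcal{I}_m}$ are the path length and temporal variation restricted to $\mathcal{I}_m$. Summing over phases, I would apply Cauchy--Schwarz on the path-length branch, $\sum_m \sqrt{P_{\mathcal{I}_m}}\,L_m^{2/3}\le\parentheses{\sum_m P_{\mathcal{I}_m}}^{1/2}\parentheses{\sum_m L_m^{4/3}}^{1/2}$, and Hölder with exponents $(3,3/2)$ on the variation branch, $\sum_m V_{\mathcal{I}_m}^{1/3} L_m^{7/9}\le\parentheses{\sum_m V_{\mathcal{I}_m}}^{1/3}\parentheses{\sum_m L_m^{7/6}}^{2/3}$. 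Because $L_m=2^{m-1}$ grows geometrically, $\sum_m L_m^{4/3}=\tilde{\mathcal{O}}(T^{4/3})$ and $\sum_m L_m^{7/6}=\tilde{\mathcal{O}}(T^{7/6})$ are dominated by the final phase, while $\sum_m P_{\mathcal{I}_m}\le P_T+\tilde{\mathcal{O}}(M)$ and $\sum_m V_{\mathcal{I}_m}\le V_T+\tilde{\mathcal{O}}(M)$ up to boundary terms; together these produce the claimed $\tilde{\mathcal{O}}\!\parentheses{\min\set{\sqrt{P_T}\,T^{2/3},\,V_T^{1/3}T^{7/9}}}$. The meta-regret sum $\sum_m \sqrt{n L_m\log K_m}=\tilde{\mathcal{O}}(\sqrt{nT})$ is likewise geometric and absorbed into $\tilde{\mathcal{O}}(\cdot)$.

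For the constraint violation, I would exploit that the dual variable is reset to $0$ at each phase boundary and is driven by the realized, bounded feedback $g_t(a_t)\in[-1,1]$ sampled from $\vec x^{\mathrm{meta}}_t$. Applying the dual-drift telescoping and the uniform dual bound of Lemma~\ref{eq:bounded_dual} on each phase, with the per-phase parameters $\mu,\eta^{(k)},\gamma,\gamma^{\mathrm{meta}}_m$ chosen so the stability condition $\eta\le\Omega^{-2}$ continues to hold, gives $\E\interval{\sum_{t\in\mathcal{I}_m} g_t(a_t)}\le \mu^{-1}\Omega_m=\tilde{\mathcal{O}}(\sqrt{L_m})$ per phase, and the geometric sum $\sum_m\sqrt{L_m}=\tilde{\mathcal{O}}(\sqrt{T})$ closes the violation bound.

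The hard part will be the second step: propagating the meta-induced variance inflation $1/\gamma^{\mathrm{meta}}_m$ cleanly through the single-phase regret and dual-boundedness lemmas, and then jointly calibrating $\gamma^{\mathrm{meta}}_m$, the grid, and the per-expert constants $M_k$ so that the best-expert regret attains exactly the $L_m^{2/3}$ and $L_m^{7/9}$ phase exponents while the per-phase dual $\Omega_m$ stays controlled enough to keep the violation at $\tilde{\mathcal{O}}(\sqrt{T})$. A secondary subtlety is accounting for the comparator's path-length and variation mass that straddles phase boundaries, which must be shown to contribute only lower-order $\tilde{\mathcal{O}}(M)=\tilde{\mathcal{O}}(\log T)$ terms so that $\sum_m P_{\mathcal{I}_m}$ and $\sum_m V_{\mathcal{I}_m}$ remain bounded by $P_T$ and $V_T$ up to logarithmic factors.
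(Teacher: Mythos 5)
Your proposal is correct and follows essentially the same route as the paper's own proof: the same per-phase decomposition into the bandits-with-expert-advice overhead (Lemma~\ref{lem:meta-phase}) plus the best gridded expert's regret, the same $1/\gamma^{\mathrm{meta}}_m$ inflation of the conditional-divergence bound propagated through Lemmas~\ref{lemma:n0} and~\ref{lemma:n02} to obtain the per-phase rate $\tilde{\mathcal{O}}\parentheses{\min\set{\sqrt{P_{\mathcal{I}_m}}\,L_m^{2/3},\,V_{\mathcal{I}_m}^{1/3}L_m^{7/9}}}$, and the same Cauchy--Schwarz and H\"older (with exponents $(3,3/2)$) aggregation over the doubling phases, with the dual reset yielding $\tilde{\mathcal{O}}(\sqrt{L_m})$ violation per phase. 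The only cosmetic difference is that you budget explicit $\tilde{\mathcal{O}}(\log T)$ boundary terms for phase-straddling path-length and variation mass, whereas the paper simply uses $\sum_m P_{\mathcal{I}_m}\le P_T$ and $\sum_m V_{\mathcal{I}_m}\le V_T$ directly, since restricting these sums to phase interiors only drops terms.
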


Before providing the full proof, we establish the following Lemma for the meta learner.
\subsection{Meta-Learner Regret Within a Phase}
\begin{lemma}\label{lem:meta-phase}                                                  
Within a phase $\mathcal{I}$ of length $L$, Algorithm~\ref{alg:meta-doubling} satisfies
\begin{align}
\mathbb{E}\!\left[\sum_{t\in\mathcal{I}} f_t(a_t)\right] 
- \min_{k\in[K]} \sum_{t\in\mathcal{I}} \x^{(k)}_t\cdot \vec f_t
= \BigO{\sqrt{n \log(K) L}},
\end{align}
where the expectation is taken over the algorithm’s randomness.
\end{lemma}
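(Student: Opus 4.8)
The plan is to read the meta-update as exponential weights (entropic mirror descent over the $K$ experts) fed with the importance-weighted loss estimates $\tilde m^{(k)}_t = \parentheses{x^{(k)}_{t,a_t}/x^{\mathrm{meta}}_{t,a_t}}\, f_t(a_t)$, and to reduce the claimed \emph{true-loss} regret to the estimated-loss regret of this procedure. First I would check that each $\tilde m^{(k)}_t$ is conditionally unbiased for the true expert loss $\bar m^{(k)}_t \triangleq \vec{x}^{(k)}_t\cdot\vec f_t$: since $a_t\sim\vec{x}^{\mathrm{meta}}_t$, the sum $\sum_a x^{\mathrm{meta}}_{t,a}\,(x^{(k)}_{t,a}/x^{\mathrm{meta}}_{t,a})\,f_t(a)$ collapses to $\vec{x}^{(k)}_t\cdot\vec f_t$. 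Because $\E\interval{f_t(a_t)\mid\hist_{t-1}}=\vec{x}^{\mathrm{meta}}_t\cdot\vec f_t=\sum_k x^{\mathrm{meta},(k)}_t \bar m^{(k)}_t$, the learner's expected cost equals the meta-distribution's expected loss against $\bar{\vec m}_t$; and since $\E\interval{\min_k \sum_t \tilde m^{(k)}_t}\le \min_k \E\interval{\sum_t \tilde m^{(k)}_t}=\min_k \sum_t \bar m^{(k)}_t$, taking expectations upper-bounds the stated regret by the \emph{pathwise} exponential-weights regret measured on the estimates $\tilde{\vec m}_t$ against the best fixed expert $\vec e_{k^\star}$.

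Next I would instantiate the paper's one-step machinery on the meta-simplex. Applying the one-step OMD inequality (Lemma~\ref{lemma:gradient_omd_bound}) with comparator $\vec e_{k^\star}$, the quadratic entropic bound (Lemma~\ref{lemma:bound_Bregman_negative_entropy})---which applies because $f_t\ge 0$ forces each exponent $-\eta^{\mathrm{meta}}\tilde m^{(k)}_t\le 0$---and the KL-diameter bound $\log K$ coming from the uniform initialization $\tfrac1K\mathbf 1$, yields the familiar estimate
\begin{align}
\sum_{t\in\mathcal I}\vec{x}^{\mathrm{meta}}_t\cdot\tilde{\vec m}_t-\min_k\sum_{t\in\mathcal I}\tilde m^{(k)}_t
\le \frac{\log K}{\eta^{\mathrm{meta}}}+\eta^{\mathrm{meta}}\sum_{t\in\mathcal I}\sum_{k}x^{\mathrm{meta},(k)}_t\bigl(\tilde m^{(k)}_t\bigr)^2 .\nonumber
\end{align}
If the meta-level exploration floor $\gamma^{\mathrm{meta}}$ is active, the comparator $\vec e_{k^\star}$ is replaced by its Bregman projection onto $\simplex_{K,\gamma^{\mathrm{meta}}}$, which is handled by the same KL-diameter argument.

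The crux, and the step I expect to be the main obstacle, is to control the second-moment term after taking expectations and, crucially, to show it scales with the number of \emph{actions} $n$ rather than the number of experts $K$. Here I would exploit the mixture identity $x^{\mathrm{meta}}_{t,a}=\sum_k x^{\mathrm{meta},(k)}_t x^{(k)}_{t,a}$: conditioning on $\hist_{t-1}$ and using $f_t(a)\le 1$ together with $\bigl(x^{(k)}_{t,a}\bigr)^2\le x^{(k)}_{t,a}$,
\begin{align}
\sum_k x^{\mathrm{meta},(k)}_t\,\E\!\interval{\bigl(\tilde m^{(k)}_t\bigr)^2\,\big|\,\hist_{t-1}}
\le \sum_a \frac{1}{x^{\mathrm{meta}}_{t,a}}\sum_k x^{\mathrm{meta},(k)}_t\bigl(x^{(k)}_{t,a}\bigr)^2
\le \sum_a \frac{x^{\mathrm{meta}}_{t,a}}{x^{\mathrm{meta}}_{t,a}}=n .\nonumber
\end{align}
This cancellation of the importance-weighting denominators is exactly what makes the bound independent of $K$. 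Summing over the $L$ rounds of the phase bounds the expected second-moment contribution by $nL$, so the expected estimated regret is at most $\log K/\eta^{\mathrm{meta}}+\eta^{\mathrm{meta}} nL$; tuning $\eta^{\mathrm{meta}}=\Theta\bigl(\sqrt{\log K/(nL)}\bigr)$ balances the two terms and yields $\BigO{\sqrt{n\log(K)\,L}}$, completing the proof.
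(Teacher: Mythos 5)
Your proposal is correct and follows essentially the same route as the paper's proof: entropic mirror descent over the $K$ experts with importance-weighted estimates, the one-step OMD inequality plus the quadratic entropic bound with KL-diameter $\log K$, the mixture identity $\vec{x}^{\mathrm{meta}}_t=\sum_k x^{\mathrm{meta},(k)}_t\x^{(k)}_t$ to collapse the second moment to $n$ (the paper uses Jensen on $\bigl(\x^{(k)}_t\cdot\hat{\vec f}_t\bigr)^2$ where you use $f_t(a)\le 1$ and $\bigl(x^{(k)}_{t,a}\bigr)^2\le x^{(k)}_{t,a}$, an equivalent elementary step), and the tuning $\eta^{\mathrm{meta}}=\Theta\bigl(\sqrt{\log K/(nL)}\bigr)$. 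Your explicit unbiasedness reduction via $\E\interval{\min_k\sum_t\tilde m^{(k)}_t}\le\min_k\E\interval{\sum_t\tilde m^{(k)}_t}$ is a welcome clarification of a step the paper leaves implicit inside its saddle-point lemma.
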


\begin{proof} We provide the proof in five main steps.

\paragraph{Main inequality.}
Applying Lemma~\ref{lemma:saddle_inequality} to the unconstrained instance 
($\lambda=\lambda_t=\mu=0$) gives, for any $\x^{\mathrm{meta},\star}\in\simplex_K$,
\begin{align}
\E\!\left[\Psi_t(\x^{\mathrm{meta}}_t,0) - \Psi_t(\x^{\mathrm{meta},\star},0)\right]
&\leq 
\underbrace{\E\!\left[\tfrac{1}{\eta^{\mathrm{meta}}}\Big(D_{\Phi}(\x^{\mathrm{meta},\star},\x^{\mathrm{meta}}_t) - D_{\Phi}(\x^{\mathrm{meta},\star},\x^{\mathrm{meta}}_{t+1})\Big)\right]}_{(1)} \\
&\quad + 
\underbrace{\tfrac{1}{\eta^{\mathrm{meta}}}\E\!\left[D_\Phi(\x^{\mathrm{meta}}_t, \y^{\mathrm{meta}}_{t+1})\right]}_{(2)}.
\end{align}

\paragraph{Bounding the divergence term $(1)$.}
The comparator sequence $\set{\x^{\mathrm{meta},\star}}_{t \in \mathcal{I}}$ is fixed. Then by Lemma~\ref{lemma:path}, the telescoping structure implies
\begin{align}
\sum_{t\in\mathcal{I}} \E[(1)] \;\leq\; \frac{1}{\eta^{\mathrm{meta}}}\log(K).
\end{align}

\paragraph{Bounding the variance term $(2)$.}
From Lemma~\ref{lemma:bounded_Bregman}, we have
\begin{align}
\E\!\left[D_\Phi(\x^{\mathrm{meta}}_t, \y^{\mathrm{meta}}_{t+1}) \,\Big|\,\hist_{t-1}\right]
&\leq \tfrac{1}{2}\,(\eta^{\mathrm{meta}})^2 
\E\!\left[\sum_{k} x^{\mathrm{meta}}_{t,k}\, \tilde m_{t,k}^2 \,\Big|\, \hist_{t-1}\right].
\end{align}
For expert $k$, the estimated loss is $\tilde m_{t,k} ={x^{(k)}_{t,a_t}} \,\hat f_t(a_t)$, 
where $\hat f_t(a) = \tfrac{f_t(a_t)}{x^{\mathrm{meta}}_{t,a_t}}\,\mathds 1\{a=a_t\}$ is the importance-weighted estimator.  
Thus, $\tilde m_{t,k} = \x^{(k)}_t \cdot \hat{\vec{f}}_t$ and
\begin{align}
(\tilde m_{t,k})^2 = \Big(\sum_a x^{(k)}_{t,a}\,\hat f_t(a)\Big)^2 \le \sum_a x^{(k)}_{t,a}\,\hat f_t(a)^2. \tag{Jensen}
\end{align}
Averaging over experts:
\begin{align}
\sum_k x^{\mathrm{meta}}_{t,k}\,(\tilde m_{t,k})^2\le \sum_k x^{\mathrm{meta}}_{t,k}\sum_a x^{(k)}_{t,a}\,\hat f_t(a)^2 = \sum_a \Big(\sum_k x^{\mathrm{meta}}_{t,k} x^{(k)}_{t,a}\Big)\hat f_t(a)^2 = \sum_a x^{\mathrm{meta}}_{t,a}\,\hat f_t(a)^2,
\end{align}
since $\vec{x}^{\mathrm{meta}}_t=\sum_k x^{\mathrm{meta}}_{t,k}\x^{(k)}_t$.

Taking expectation over $a_t \sim \vec{x}^{\mathrm{meta}}_t$:
\begin{align}
&\mathbb{E}\!\left[\sum_a x^{\mathrm{meta}}_{t,a}\,\hat f_t(a)^2 \,\Big|\, \hist_{t-1}\right] = \mathbb{E}\!\left[x^{\mathrm{meta}}_{t,a_t}\,\frac{f_t(a_t)^2}{(x^{\mathrm{meta}}_{t,a_t})^2}\,\Big|\, \hist_{t-1}\right] = \mathbb{E}\!\left[\frac{f_t(a_t)^2}{x^{\mathrm{meta}}_{t,a_t}}\,\Big|\, \hist_{t-1}\right] \\
&= \sum_a x^{\mathrm{meta}}_{t,a}\,\frac{f_t(a)^2}{x^{\mathrm{meta}}_{t,a}} = \sum_a f_t(a)^2 \;\leq\; n.
\end{align}
As shown earlier, the variance factor satisfies
\begin{align}
\E\!\left[\sum_{k} x^{\mathrm{meta}}_{t,k}\, \tilde m_{t,k}^2 \,\Big|\, \hist_{t-1}\right] \;\leq\; n.
\end{align}
Hence,
\begin{align}
\sum_{t\in\mathcal{I}} \E[(2)] \;\leq\; \eta^{\mathrm{meta}} n L.
\end{align}

\paragraph{Combining bounds.}
Summing over $t\in\mathcal{I}$, we obtain
\begin{align}
\sum_{t \in \mathcal{I}} \E\!\left[\Psi_t(\x^{\mathrm{meta}}_t,0) - \Psi_t(\x^{\mathrm{meta},\star},0)\right]
\;\leq\; \frac{1}{\eta^{\mathrm{meta}}}\log(K) + \eta^{\mathrm{meta}} n L.
\end{align}
Unrolling the definition of $\Psi_t$ gives
\begin{align}
\sum_{t \in \mathcal{I}} f_t(\x^{\mathrm{meta}}_t) - f_t(\x^{\mathrm{meta},\star}) 
\;\leq\; \frac{1}{\eta^{\mathrm{meta}}}\log(K) + \eta^{\mathrm{meta}} n L.
\end{align}

\paragraph{Optimizing the learning rate.}
Choosing $\eta^{\mathrm{meta}} = \Theta\!\Big(\sqrt{\tfrac{\log(K)}{nL}}\Big)$ yields
\begin{align}
\sum_{t \in \mathcal{I}} f_t(\x^{\mathrm{meta}}_t) - f_t(\x^{\mathrm{meta},\star})
= \BigO{\sqrt{n \log(K) L}}.
\end{align}
This concludes the proof.
\end{proof}
\subsection{Proof of Theorem~\ref{thm:meta}}
\begin{proof} We begin by analyzing the regret of a fixed base learner, then quantify the additional regret introduced by the meta algorithm in searching for the best such learner, and finally combine these results to obtain the overall regret over the horizon $T$.
\paragraph{Fixed Base Learner (Phase $\mathcal{I}$).} For a fixed base learner $k^\star \in [K]$ with a fractional state $\vec x_t$ and intermediate (yet unprojected) state $\vec y_t$ the following holds
\begin{align}
&\E\interval{D_\Phi(\x_t, \y_{t+1}) \Big|\hist_{t-1}} \leq  
\frac{1}{2}\E\interval{\sum_{a\in \mathcal{A}}x_{t,a} \parentheses{\eta \tilde{\vec{b}}_t \cdot \vec e_a}^2 \Big| \hist_{t-1} } \leq \nonumber \\ & \frac{\eta^2}{2}\sum_{a \in \mathcal{A}} x^{\mathrm{meta}}_{t, a} x_{t,a}  \parentheses{\frac{3 \Omega^2}{\parentheses{x^{\mathrm{meta}}_{t,a}}^2} + \frac{3 f^2_{t,a}}{\parentheses{x^{\mathrm{meta}}_{t,a}}^2}+ \lambda^2_t \frac{3 g^2_{t,a}}{\parentheses{x^{\mathrm{meta}}_{t,a}}^2}}
\leq \eta^2 n(3/2 +  3 \Omega^2 ) \parentheses{\frac{1}{\gamma^{\mathrm{meta}}}}.
\end{align}
The factor $\tfrac{1}{\gamma^{\mathrm{meta}}}$ represents the additional cost introduced by the meta algorithm at this stage, which propagates through the analysis and ultimately worsens the regret bounds.
From Lemmas~\ref{lemma:n0} and~\ref{lemma:n02} we have the following regret bound on the base algorithm:
\begin{align}
    \mathfrak{R}_T\parentheses{\texttt{BCOMD}^{(k^{\star})}} = \tilde{\mathcal{O}}\parentheses{\min\set{\frac{P_{\mathcal{I}}}{\eta},\frac{V_{\mathcal{I}} L}{\eta_0^2} } + \frac{\eta L}{\gamma^{\mathrm{meta}}}}.
\end{align}
The learning rate is selected to be $\eta = \eta_0  / c_{\mathcal {I}}$, then we have
\begin{align}
    \mathfrak{R}_T\parentheses{\texttt{BCOMD}^{(k^{\star})}} = \tilde{\mathcal{O}}\parentheses{\min\set{\frac{P_{\mathcal{I}}}{\eta_0} c_{\mathcal {I}},\frac{V_{\mathcal{I}} L}{\eta_0^2} } + \frac{\eta_0 L}{\gamma^{\mathrm{meta}} c_{\mathcal {I}}}}.
\end{align}
Take $c_{\mathcal {I}} = \Theta (L^{2/3})$,  $\eta_0 = \Theta\parentheses{\min\set{\sqrt{P_{\mathcal{I}}}, V_{\mathcal{I}}^{1/3} L^{1/9}}}$, and  $\gamma^{\mathrm{meta}} = \Theta(T^{-1/3})$.
\begin{align}
    \mathfrak{R}_T\parentheses{\texttt{BCOMD}^{(k^{\star})}} = \tilde{\mathcal{O}}\parentheses{\min\set{\frac{P_{\mathcal{I}}}{\eta_0} c_{\mathcal {I}},\frac{V_{\mathcal{I}} L}{\eta_0^2} } + \frac{\eta_0 L}{\gamma^{\mathrm{meta}} c_{\mathcal {I}}}} = \tilde{\mathcal{O}}\parentheses{\min\set{\sqrt{P_{\mathcal{I}}} L^{2/3}, V_{\mathcal{I}}^{1/3} L^{7/9}}}.\label{eq:reg_base_learner}
\end{align}

\paragraph{Meta- and Base-Regret Decomposition and Overall Regret.} By Lemma~\ref{lem:meta-phase} and the previous  result~\eqref{eq:reg_base_learner}, within $\mathcal{I}_m$ we have
\begin{align}
\mathbb{E}\!\left[\sum_{t\in\mathcal{I}_m} f_t(a_t)\right]
&\le
\min_{k} \sum_{t\in\mathcal{I}_m} \x^{(k)}_t\cdot \vec f_t
+ C \sqrt{L_m \log K_m} \\
&\le
\sum_{t\in\mathcal{I}_m} f_t(\x^\star_t) + \tilde{\mathcal{O}}\parentheses{\min\set{\sqrt{P_{\mathcal{I}}} L^{2/3}, V_{\mathcal{I}}^{1/3} L^{7/9}}} +  C \sqrt{L_m \log K_m}.
\end{align}
Summing over $m=0,\dots,M-1$ and noting that the comparator sequences concatenate across phases, we get
\begin{align}
\mathbb{E}\!\Big[\sum_{t=1}^T f_t(a_t)\Big]
- \sum_{t=1}^T f_t(\x^\star_t)
&\le
\sum_{m=0}^{M-1} \tilde{\mathcal O}\left(\min\left\{\sqrt{P_{\mathcal{I}_m}},\, L_m^{2/3},\, V_{\mathcal{I}_m}^{1/3} L_m^{7/9}\right\}\right)  + C \sum_{m=0}^{M-1} \sqrt{L_m \log K_m}.
\label{eq:new-sum-phases}
\end{align}

We bound the two sums separately.

\emph{Path-length branch.} By Cauchy–Schwarz,
\begin{align}
\sum_{m=0}^{M-1} \sqrt{P_{\mathcal{I}_m}}\, L_m^{2/3}
&\le \sqrt{\Bigl(\sum_m P_{\mathcal{I}_m}\Bigr)\Bigl(\sum_m L_m^{4/3}\Bigr)} = \sqrt{P_T \sum_m L_m^{4/3}}  = \mathcal{O}\!\big(\sqrt{P_T}\, T^{2/3}\big).
\end{align}
where we used  $L_m=2^m$ and $T=\sum_m L_m=\Theta(2^M)$, and $\sum_m L_m^{4/3}=\Theta\!\big((2^M)^{4/3}\big)=\Theta(T^{4/3})$.

\emph{Variation branch.} Apply Hölder with $(p,q)=(3,\tfrac{3}{2})$:
\begin{align}
\sum_{m=0}^{M-1} V_{\mathcal{I}_m}^{1/3} L_m^{7/9}
&\le
\Bigl(\sum_m V_{\mathcal{I}_m}\Bigr)^{1/3}
\Bigl(\sum_m L_m^{(7/9)\cdot(3/2)}\Bigr)^{2/3} = V_T^{1/3}\,\Bigl(\sum_m L_m^{7/6}\Bigr)^{2/3}.
\end{align}
Again with $L_m=2^m$,  $\sum_m L_m^{7/6}=\Theta\!\big((2^M)^{7/6}\big)=\Theta(T^{7/6})$, so
\begin{align}
\sum_m V_{\mathcal{I}_m}^{1/3} L_m^{7/9}
= \mathcal{O}\parentheses{V_T^{1/3} T^{7/9}}.    
\end{align}
Therefore,
\begin{align}
\sum_{m=0}^{M-1} \tilde{\mathcal O}\!\left(\min\!\left\{\sqrt{P_{\mathcal{I}_m}}L_m^{2/3},\, V_{\mathcal{I}_m}^{1/3} L_m^{7/9}\right\}\right)
&\le
\tilde{\mathcal O}\!\left(\min\!\left\{\sqrt{P_T} T^{2/3},\, V_T^{1/3} T^{7/9}\right\}\right),\label{eq:branch_A}
\end{align}
since $\sum_m \min\{a_m,b_m\}\le \min\{\sum_m a_m, \sum_m b_m\}$.

\emph{Meta overhead.} With $K_m=\lceil\log_2 L_m\rceil$ and $L_m=2^m$,
\begin{align}
\sum_{m=0}^{M-1} \sqrt{L_m \log K_m}
&\le \sum_{m=0}^{M-1} \sqrt{2^m \log(m+1)}= \Theta\!\big(2^{M/2}\sqrt{\log M}\big)= \Theta\!\big(\sqrt{T \log\log T}\big),\label{eq:branch_B}
\end{align}
which is absorbed into $\tilde{\mathcal O}(\,\cdot\,)$ and is dominated by the main terms for nontrivial $T$.

\emph{4.} Plugging~\eqref{eq:branch_A} and~\eqref{eq:branch_B}  into \eqref{eq:new-sum-phases} gives the stated regret bound
\begin{align}
\mathbb{E}\!\Big[\sum_{t=1}^T f_t(a_t)\Big]
-\min_{\x^\star_t} \sum_{t=1}^T f_t(\x^\star_t)
\;\le\;
\tilde{\mathcal O}\!\left(\min\!\left\{\sqrt{P_T} T^{2/3},\, V_T^{1/3} T^{7/9}\right\}\right)
\end{align}
and the $\sqrt{T\log\log T}$ meta overhead is suppressed in $\tilde{\mathcal O}$.

For the cumulative violation, the per-phase guarantee remains $\tilde{\mathcal O}(\sqrt{L_m})$ by linearity of $g_t(\,\cdot\,)$ and Theorem~\ref{theorem:main}, hence
\begin{align}
\mathbb{E}\!\left[\sum_{t=1}^T g_t(a_t)\right]
&= \sum_{m=0}^{M-1} \tilde{\mathcal O}(\sqrt{L_m})= \tilde{\mathcal O}(\sqrt{T}).
\end{align}
This completes the proof.
\end{proof}

\section{Computational Complexity} \label{s:time_complexity}
\subsection{Time Complexity and Projection Implementation Details}
Algorithm~\ref{alg:omd} instantiated with the negative-entropy mirror map can be implemented in polynomial time. 
In particular, Lines~5--8 and~10 consist of elementary vector operations whose runtime is linear in the number of actions, $\card{\A}=n$. 
The dominant computational cost arises from the projection step onto the truncated simplex $\simplex_{n,\gamma}$. 
Given a membership oracle for $\simplex_{n,\gamma}$, such a projection can be carried out in polynomial time; see, e.g., \citet{hazan2016introduction}. 
Moreover, since the projection problem is convex, it admits efficient iterative solvers that achieve arbitrary accuracy. 
Stronger guarantees are also available: there exist strongly polynomial-time algorithms for this projection problem; see \citet[Theorem~3]{gupta2016solving}.

In our setting, the projection can often be performed in overall linear time by first normalizing $\y$ as $\y'=\y/\norm{\y}_1$ and checking whether $\y'\in\simplex_{n,\gamma}$, i.e., whether $y'_a \ge \gamma$ for all $a\in\A$. 
If this condition holds, then $\y'$ already equals the projected point and no further computation is required.

\section{Comparing Variation Measures}\label{app:incomparability}
\subsection{Incomparability of $P_T$ and $V_T$}\label{s:examples_of_variations}
In this section, we adapt an illustrative construction from \cite{jadbabaie2015online} to show that $V_T$ and $P_T$ are not, in general, comparable quantities, even when all losses lie in $[0,1]$.

We first give an example in which $V_T \ll P_T$. For each round $t \ge 1$, define the loss vector $\vec f_t \in [0,1]^n$ by $\vec f_t = (0, \tfrac{1}{T}, 1, \ldots, 1)$ if $t$ is even, and $\vec f_t = (\tfrac{1}{T}, 0, 1, \ldots, 1)$ if $t$ is odd. Let $\vec x_t^\star \in \arg\min_{x \in \simplex_n} \langle \vec f_t, x\rangle$ be an optimal comparator at round $t$. Since the identity of the minimizer alternates between the first two vertices of the simplex, the path-length satisfies $P_T = \Omega(T)$, whereas the per-round change in the loss vectors is only on the order of $1/T$, implying $V_T = \mathcal{O}(1)$.

Conversely, consider a two-action instance ($n=2$) with losses in $[0,1]^2$ given by $\vec f_t = (0,1)$ if $t$ is even and $\vec f_t = (0,\tfrac{1}{2})$ if $t$ is odd. Here, action $1$ is optimal at every round, so one may choose $\vec x_t^\star$ constant over time and obtain $P_T = \mathcal{O}(1)$, while the loss sequence changes by a constant amount each round, yielding $V_T = \Omega(T)$.

Together, these examples show that neither $V_T$ nor $P_T$ admits a general dominance relationship over the other.

\end{document}